\tikzstyle{level 1}=[level distance=4cm, sibling distance=2.5cm]
\tikzstyle{level 2}=[level distance=8cm, sibling distance=0.6cm]
\tikzstyle{bag} = [text width=4em, text centered]
\tikzstyle{end} = [circle, minimum width=3pt,fill, inner sep=0pt]
\newcommand{\manuallabel}[2]{\def\@currentlabel{#2}\label{#1}}
\newcommand\numberthis{\addtocounter{equation}{1}\tag{\theequation}}
\algnewcommand{\Inputs}[1]{%
  \State \textbf{Inputs:}
  \Statex \hspace*{\algorithmicindent}\parbox[t]{.8\linewidth}{\raggedright #1}
}
\algnewcommand{\Initialize}[1]{%
  \State \textbf{Initialize:}
  \Statex \hspace*{\algorithmicindent}\parbox[t]{.8\linewidth}{\raggedright #1}
}
\newcommand*{\inlineequation}[2][]{%
  \begingroup
    \refstepcounter{equation}%
    \ifx\\#1\\%
    \else
      \label{#1}%
    \fi
    \relpenalty=10000 %
    \binoppenalty=10000 %
    \ensuremath{%
      #2 %
    }%
    ~\@eqnnum
  \endgroup
}
\newtheorem*{rep@theorem}{\rep@title}
\newcommand{\newreptheorem}[2]{%
\newenvironment{rep#1}[1]{%
 \def\rep@title{#2 \ref{##1}}%
 \begin{rep@theorem}}%
 {\end{rep@theorem}}}
\DeclareMathAlphabet{\mathpzc}{OT1}{pzc}{m}{it}
\DeclareMathOperator{\Tr}{Tr}
\def\Esp{\mathbb{E}}
\def\Tr{{\text{Tr}}}
\def\é{\'{e}}
\def\è{\`{e}}
\def\ê{\^{e}}
\def\à{\`{a}}
\def\ô{\^{o}}
\newcommand{\Var}{\mathsf{Var}}
\newcolumntype{C}[1]{>{\centering\arraybackslash}p{#1}}
\newcolumntype{L}[1]{>{\raggedleft\arraybackslash}p{#1}}
\newcolumntype{R}[1]{>{\raggedright\arraybackslash}p{#1}}
\newtheorem{theo}{Theorem}
\newtheorem{prop}{Proposition}
\newtheorem{lem}{Lemma}
\newtheorem{Assumption}{Assumption}
\newcommand{\bp}{\mathbb{P}}
\title{GANs Training: A Game and Stochastic Control Approach}
\author{Xin Guo\thanks{Department of Industrial Engineering and Operations Research
University of California, Berkeley, Berkeley, CA 94704} \ \  \ Othmane Mounjid\footnotemark[1]}
\date{\today\\}
\begin{document}
\maketitle

\begin{abstract}
Training generative adversarial networks (GANs) is known to be difficult, especially for financial time series. This paper first analyzes the well-posedness problem in GANs minimax games and the convexity issue in GANs objective functions. It then
proposes a stochastic control framework for hyper-parameters tuning in GANs training.  The weak form of dynamic programming principle and the uniqueness and the existence of the value function in the viscosity sense for the corresponding minimax game are established.  In particular,  explicit forms for the optimal adaptive learning rate and batch size are derived and are shown to depend on the convexity of the objective function, revealing a relation between improper choices of learning rate and explosion in GANs training. Finally,  empirical studies demonstrate that training algorithms incorporating this adaptive control approach outperform the standard ADAM method in terms of convergence and robustness.

From GANs training perspective, the analysis in this paper provides analytical support for the popular practice of ``clipping'', and suggests that the convexity and well-posedness issues in GANs may be tackled through appropriate choices of hyper-parameters.
\end{abstract}

\section{Introduction}
\label{sec:Intro}

Generative adversarial networks (GANs) belong to the class of generative models.
The key idea behind GANs \citep{goodfellow2014generative} is to add a discriminator network in order to improve the data generation process. GANs can therefore be viewed as competing games between two neural networks: a generator network and a discriminator network. The generator network attempts to fool the discriminator network by converting random noise into sample data, while the discriminator network tries to identify whether the sample data is fake or real.
This powerful idea behind GANs leads to a versatile class of generative models with a wide range of successful applications from image generation to natural language processing \citep{denton2015deep,radford2015unsupervised, yeh2016semantic, ledig2016others, zhu2016generative, reed2016generative, vondrick2016generating, luc2016semantic, ghosh2016contextual}.\\

Inspired by the success of GANs for computer vision, there is a surge of interest in applying GANs for financial time series data generation. In such a context, the key contributions consist of adapting divergence functions and network architectures in order to cope with the time series nature and the dependence structure of financial data. For example, Quant-GAN \citep{Wiese2019,wiese2020quant} uses a TCN structure,  C-Wasserstein GAN  \citep{li2020generating} uses the Wasserstein distance as \citep{arjovsky2017wasserstein} and \citep{gulrajani2017improved}, FIN-GAN  \citep{takahashi2019modeling} generates synthetic financial data, Corr-GAN
\citep{marti2020corrgan} adapts DCGAN structure of \citep{radford2015unsupervised} for correlation matrix of asset returns, C-GAN \citep{fu2019time} follows \citep{mirza2014conditional} to simulate realistic conditional scenarios,  Sig-Wasserstein-GAN \citep{ni2020conditional} generates orders and transactions, and Tail-GAN \citep{dionelis2020tail} deals with risk management. Furthermore, activities from industrial AI labs such as JP Morgan \citep{storchan2020mas} and American Express \citep{efimov2020using} have further elevated GANs as promising tools for synthetic data generation and for model testing. (See also reviews by \citep{cao2021generative} and \citep{eckerli2021generative} for more details).\\

Despite this empirical success, there are well-recognized issues in GANs training, including the vanishing gradient when there is an imbalance between the generator and the discriminator training \citep{arjovsky2017towards}, and the mode collapse when the generator learns to produce only a specific range of samples \citep{Salimans2016}. In particular, it is hard to reproduce realistic financial time series due to well-documented stylized facts \citep{chakraborti2011econophysics,cont2001empirical} such as
heavy-tailed-and-skewed distribution of asset returns,  volatility clustering, the
leverage effect, and the Zumbach effect  \citep{zumbach2001heterogeneous}. For instance, \citep{takahashi2019modeling} shows that batch normalization tends to yield large fluctuations in the generated time series as well as strong auto-correlation. Moreover, Figure \ref{Fig:Plot_loss_generator_acc_discrim_CNNGan} below highlights the difficulty of convergence for GANs when a convolutional neural network is used to generate financial time series: the discriminator does not reach the desired accuracy level of $50\%$ and the generator exhibits a non-vanishing loss. (See Section \ref{sec:Exp} for a detailed description of data sources).

\begin{figure}[h!]
    \center
	 \hspace{-2.cm} (a) Discriminator accuracy  \hspace{1.5cm} (b) Generator loss  \\
	\includegraphics[width=0.45\textwidth]{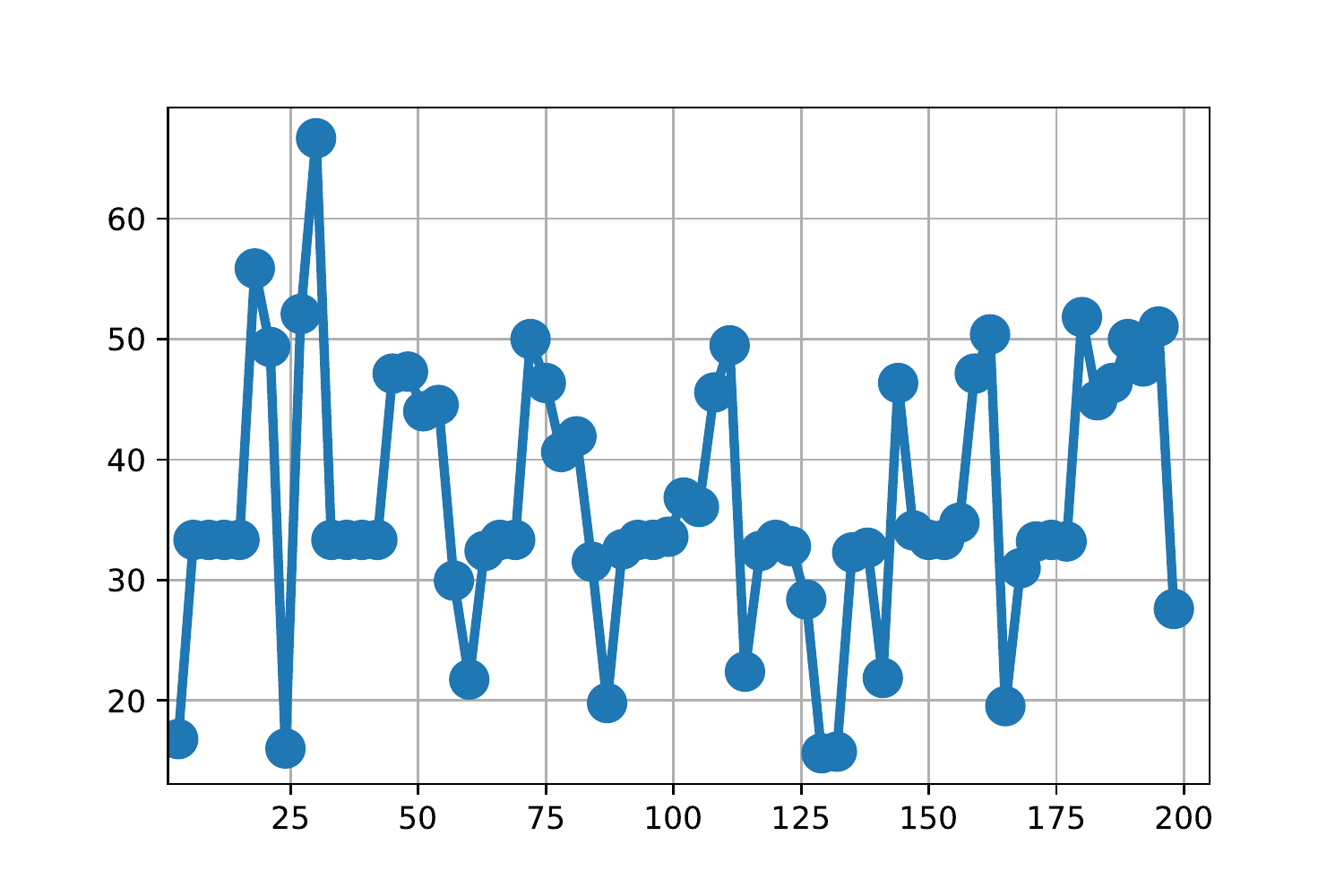}	\includegraphics[width=0.45\textwidth]{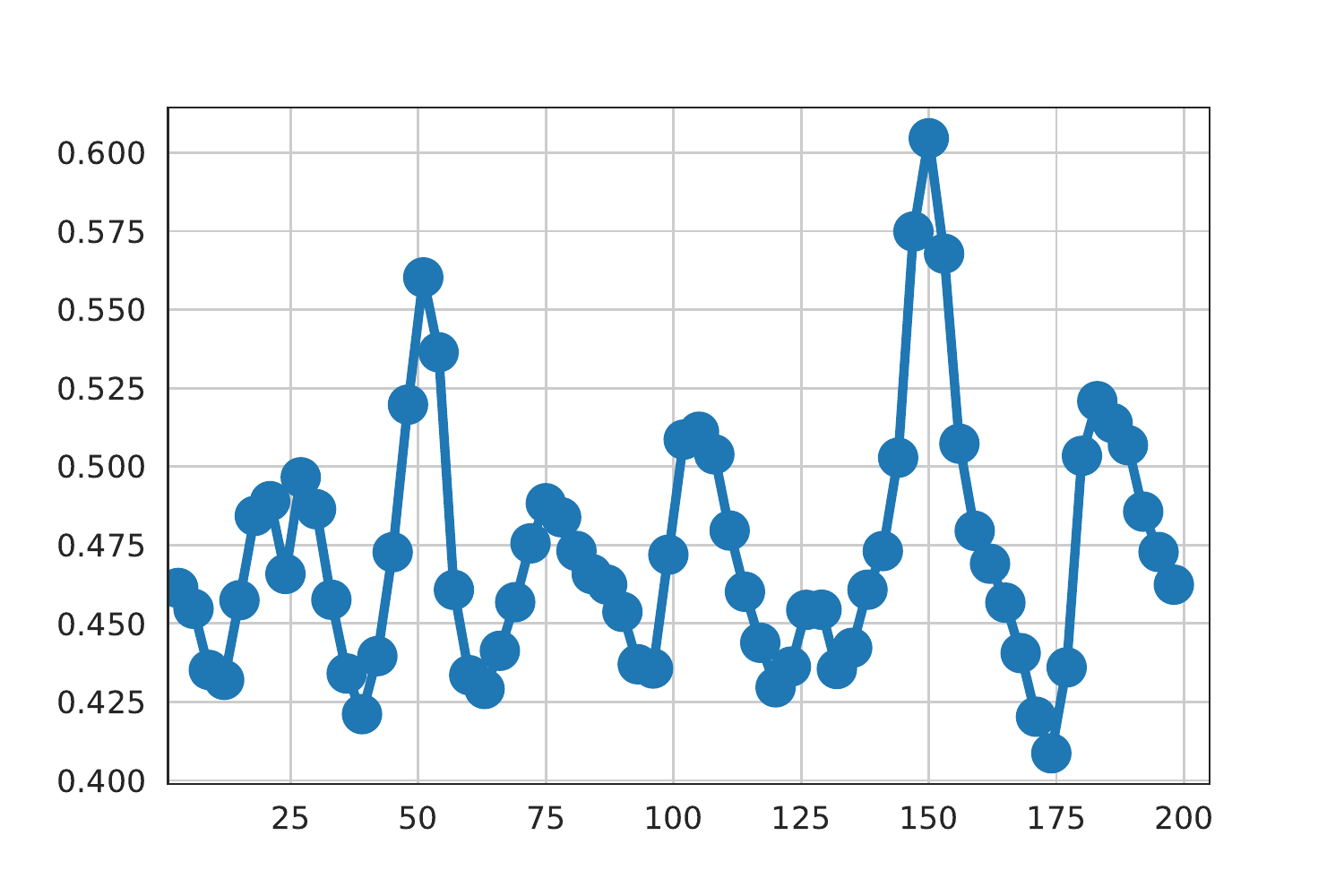}	
    \caption{Discriminator accuracy in (a), and generator loss in (b).}
    \label{Fig:Plot_loss_generator_acc_discrim_CNNGan}
\end{figure}

In response, there have been a number of theoretical studies for GANs training. \citep{Berard2020} proposes a visualization method for the GANs training process through the gradient vector field of loss functions,
 \citep{Mescheder2018} demonstrates that regularization improves  the convergence performance of GANs, \citep{Conforti2020} and \citep{Domingo-Enrich2020} analyze general minimax games including  GANs,
and connect the mixed Nash equilibrium of the game with the invariant measure of Langevin dynamics. In the same spirit, \citep{hsieh2019finding} proposes a sampling algorithm that converges towards mixed Nash equilibria, and then \citep{kamalaruban2020robust} shows how this algorithm escapes saddle points. Recently, \citep{cao2020approximation} establishes
the continuous-time approximation for the discrete-time GANs training by coupled stochastic differential
equations, enabling the convergence analysis of GANs training via stochastic tools.

\paragraph{Our work.} The focus of this paper is to analyze GANs training in the stochastic control and game framework. It starts by revisiting vanilla GANs from the original work of \citep{goodfellow2014generative} and identifies through detailed convexity analysis one of the culprits behind the convergence issue for GANs: the lack of convexity in GANs objective function hence the general well-posedness issue in GANs minimax games. It then reformulates GANs problem as a stochastic game and uses it to study the optimal control of learning rate (and its equivalence to the optimal choice of time scale) and optimal batch size.\\

To facilitate the analysis of this type of minimax games, this paper first establishes the weak form of dynamics programming principle for a general class of stochastic games in the spirit of \citep{bouchard2011weak}; it then focuses on the particular minimax games of GANs training, by analyzing the existence and the uniqueness of viscosity solutions to Issac-Bellman equations. In particular, it obtains an explicit form for the optimal adaptive learning rate and optimal batch size which depend on the convexity of the objective function for the game.
Finally, by experimenting on synthetic data drawn either from the Gaussian distribution or the Student t-distribution and financial data collected from the Quandl Wiki Prices database, it demonstrates that training algorithms incorporating our adaptive control methodology outperform the standard ADAM optimizer, in terms of both convergence and robustness.\\
 
Note that the dynamic programming principle for stochastic games has been proved in a deterministic setting by \citep{evans1984differential}, and recently extended to the stochastic framework for diffusions under boundedness and regularity conditions \citep{krylov2014dynamic,bayraktar2013weak,sirbu2014martingale,sirbu2014stochastic}. The dynamic programming principle established in this paper is without the continuity assumption and is for a more general class of stochastic differential games beyond diffusions. In addition, upper and lower bounds of the value function are obtained.\\

In terms of GANs training, our analysis provides analytical support for the popular practice of ``clipping'' in GANs \citep{arjovsky2017towards}; and suggests that the convexity and well-posedness issues associated with GANs problems may be resolved by appropriate choices of hyper-parameters. In addition, our study presents a precise relation between explosion in GANs training and improper choices of the learning rate. It also reveals an interesting connection between the optimal learning rate and the standard Newton algorithm.

\paragraph{Notations.} Throughout this paper, the following notations are used:
\begin{itemize}
\item For any vector $x \in \mathbb{R}^d$ with $d \in \mathbb{N}^*$, denote by the operator $\nabla_x$ the gradient with respect to the coordinates of $x$. When there is no subscript $x$, the operator $\nabla$ refers to the standard gradient operator.
\item For any $d \in \mathbb{N}^*$, the set $\mathcal{M}_{\mathbb{R}}(d)$ is the space of $d\times d$ matrices with real coefficients.
\item For any vector $m \in \mathbb{R}^d$ and symmetric positive-definite matrix $ A \in \mathcal{M}_{\mathbb{R}}(d)$ with $d \in \mathbb{N}^*$, denote by $ N(m,A)$ the Gaussian distribution with mean $m$ and covariance matrix $A$.
\item  $\Esp_X$ emphasizes on the dependence of expectation with respect to the distribution of $X$. 
\end{itemize}

\section{GANs: Well-posedness and Convexity}
\label{sec:GAN_wellpos}

\paragraph{GANs as generative models.} GANs fall into the category of generative models. The procedure of generative modeling is to approximate an {\it unknown true} distribution ${\mathbb P}_X$ of a random variable $X$ from a sample space $\cal{X}$ by constructing a class of suitable parametrized probability distributions ${\mathbb P}_\theta$. That is, given a latent space $\cal{Z}$, define a latent variable $Z\in\cal{Z}$ with a fixed probability distribution 
and a family of functions $G_\theta: \cal{Z} \to \cal{X}$ parametrized by $\theta$. Then, $\bp_\theta$ can be seen as the probability distribution of $G_\theta(Z)$.\\

To approximate ${\mathbb P}_X$, GANs use two competing neural networks: a generator network for the function $G_\theta$, and a discriminator network $D_w$ parametrized by $w$. The discriminator $D_w$ assigns a score between $0$ and $1$ to each sample. A score closer to $1$ indicates that the sample is more likely to be from the true distribution.
 GANs are trained by optimizing $G_\theta$ and $D_w$ iteratively until $D_w$ can no longer distinguish between true and generated samples and assigns a score close to $0.5$.

\paragraph{Equilibrium of GANs as minimax games.} Under a fixed network architecture, the parametrized GANs optimization problem can be viewed as  the following minimax game:
\begin{align}
 \min_{\theta \in \mathbb{R}^{N}} \max_{w \in \mathbb{R}^{M}} g(w,\theta),
\label{Eq:min_pbm_gan}
\end{align}
with $g:\mathbb{R}^M \times \mathbb{R}^N \rightarrow \mathbb{R}$ the objective function. In vanilla GANs,
\begin{equation}
g(w,\theta) = \Esp_X\big[\log(D_w(X))\big] + \Esp_Z\big[\log\big(1 - D_w(G_\theta(Z))\big)\big],
\label{Eq:vanilla_gan}
\end{equation}
where $D_w: \mathbb{R}^M  \rightarrow \mathbb{R}$ is  the discriminator network, $G_\theta: \mathbb{R}^N  \rightarrow \mathbb{R}$ is the generator network, $X$ represents the unknown data distribution, and $Z$ is the latent variable. \\

From a game theory viewpoint, the objective in \eqref{Eq:min_pbm_gan}, when attained, is in fact the upper value of the two-player zero-sum game of GANs.  If there exists a locally optimal pair of parameters $(\theta^*,w^*)$ for \eqref{Eq:min_pbm_gan}, then $(\theta^*,w^*)$ is a Nash equilibrium, i.e., no player can do better by unilaterally deviating from her strategy. \\

To guarantee the existence of a Nash equilibrium, convexity or concavity conditions are required at least locally. From an optimization perspective, these convexity/concavity conditions also ensure the absence of a duality gap, as suggested by Sion's generalized minimax theorem in \citep{sion1958general} and \citep{von1959theory}.

\subsection{GANs Training and Convexity}
\manuallabel{sec:GANsTrainConv}{2.2}

GANs are trained by stochastic gradient algorithms (SGA). In GANs training, if $g$ is convex (resp. concave) in $\theta$ (resp. $\omega$), then it is possible to decrease (resp. increase) the objective function $g$ by moving in the opposite (resp. same) direction of the gradient. \\

It is well known that SGA may not converge without suitable convexity properties on $g$, even when the Nash equilibrium is unique for a minimax game. For instance,  $g(x,y)=xy$ clearly admits the point $(0,0)$ as a unique Nash equilibrium. However, as illustrated in Figure \ref{Fig:spiral},  SGA fails to converge since $g$ is neither strictly concave nor strictly convex in $x$ or $y$. \\

\begin{figure}[h!]
    \center
    \includegraphics[width=0.45\textwidth]{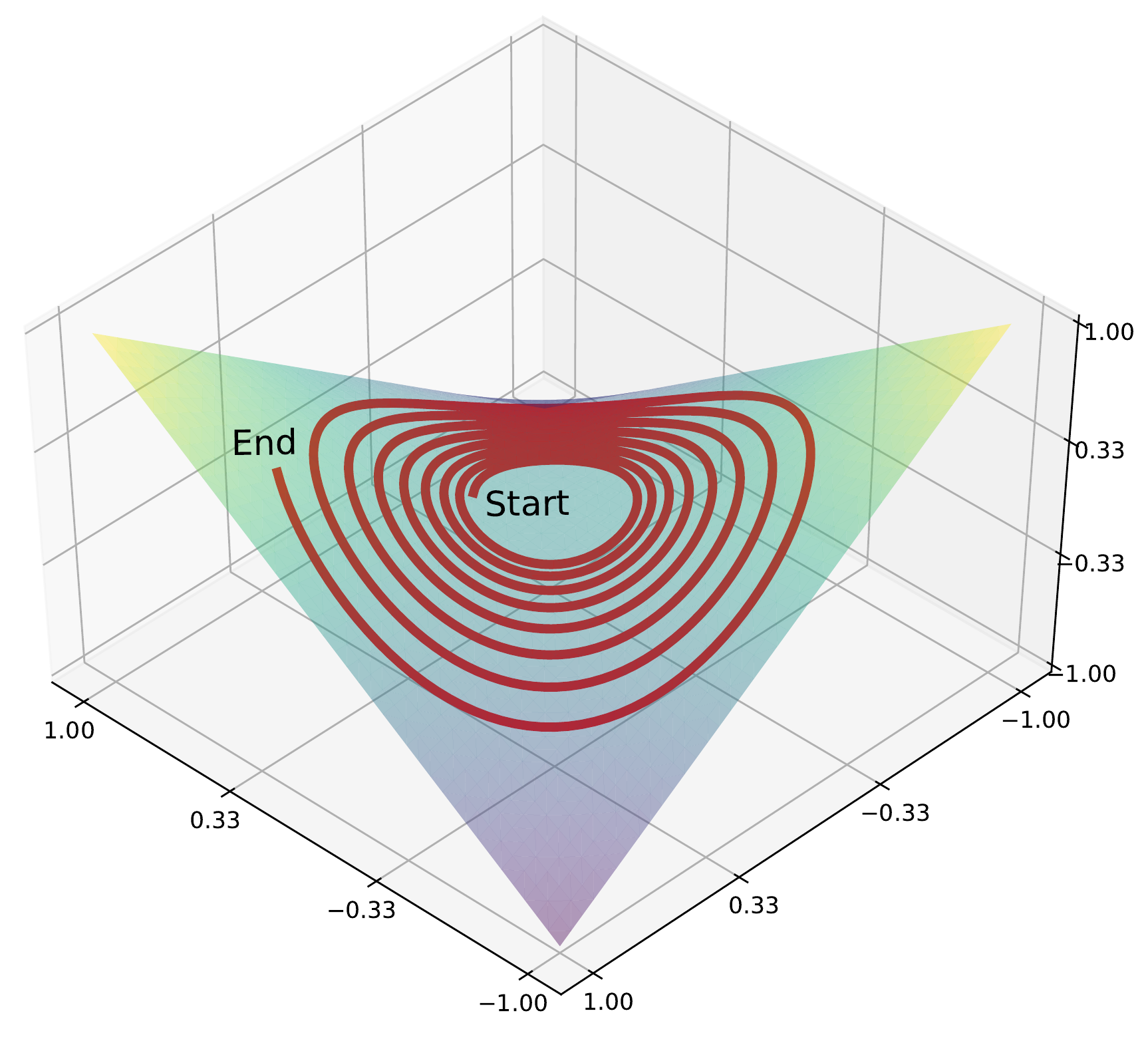}
    \caption{Plot of the parameters values when using SGA to solve the minimax problem $\min_{y \in \mathbb{R}} \max_{x \in \mathbb{R}} x y$. The label ``Start" (resp. ``End") indicates the initial (resp. final) value of the parameters.}
    \label{Fig:spiral}
\end{figure}

Moreover, convexity properties are easy to violate with the composition of multiple layers in a neural network even for a simple GANs model, as illustrated in the following example. 

\paragraph{Counterexample.} 
Take the  vanilla GANs with $g$  in \eqref{Eq:vanilla_gan}. Take two normal distributions for $X$ and $Z$ such that $X \sim N(m,\sigma^2)$ and $Z \sim N(0,1)$
with $(m,\sigma)\in \mathbb{R} \times \mathbb{R}_+$. \\

Now take the following parametrization of  the discriminator and the generator networks:
\begin{equation}
\label{Eq:simple_model}
\left\{
\begin{array}{ll}
D_w(x) & = D_{(w_1,w_2,w_3)}(x) = \cfrac{1}{1+e^{-(w_3/2\cdot x^2 + w_2 x + w_1 )}},\\
G_\theta (z) & = G_{(\theta_1,\theta_2)}(z) = \theta_2 z + \theta_1,
\end{array}
\right.
\end{equation}
where $w = (w_1,w_2,w_3) \in \mathbb{R}^3$, and $\theta = ( \theta_1, \theta_2) \in \mathbb{R} \times \mathbb{R}_+$. 
Note that this parametrization of the discriminator and the generator networks are standard since the generator is a simple linear layer and the discriminator can be seen as the composition of the sigmoid activation function with a linear layer in the variable $(x,x^2)$.\\

To find the optimal choice for the parameters $w$ and $\theta$,  denote by $f_X$ and $f_G$ respectively the density functions of $X$ and $G_\theta(Z)$. Then, the density $f_{G^*}$ of the optimal generator is given by $f_{G^*} = f_X$,  meaning  $\theta^* = (m,\sigma)$. Moreover, Proposition 1 in \citep{goodfellow2014generative} shows that the optimal value for the discriminator is
\begin{align*}
D_{w^*}(x) = \cfrac{f_X(x)}{f_X(x) + f_{G^*}(x)} = \cfrac{1}{1 + (f_{G^*}/f_X)(x)} = \cfrac{1}{2},
\end{align*}
for any $x\in \mathbb{R}$ which gives $w^* = (0,0,0)$. \\

We now demonstrate that  the function $g$ may not satisfy the convexity/concavity requirement.
\begin{itemize}
\item To see this, let us first study the concavity of the function $g_{\theta^0}:w \rightarrow g(w,\theta^0)$ with $\theta^0$ fixed. We will show that $g_{\theta^0}$ is  concave with respect to $w$. 

To this end, write $D_w$ as the composition of the two following functions $D^1$ and $L$:
\begin{equation*}
\hspace{-0.2cm}D^1(x) = 1/(1+e^{-x}), \quad L(w;x) = w_3/2\cdot x^2 + w_2 x + w_1,
\end{equation*}
for any $x \in \mathbb{R}$, and $w =(w_1,w_2,w_3) \in \mathbb{R}^3$. Note that a straightforward computation of the second derivatives shows that the functions 
\begin{equation*}
g^1: x \rightarrow \log(D^1(x)), \quad  g^2: x \rightarrow \log(1 - D^1(x)),
\end{equation*} 
with $x \in \mathbb{R}$ are both concave. Thus, by linearity and composition, the function $g_{\theta^0}:w \rightarrow  \Esp_X[g^1(L(w;X))] + \Esp_Z[ g^2(L(w;G_{\theta^0}(Z)))]$ remains concave.

\item Next, we investigate the convexity of the function $g_{w^0}:\theta \rightarrow g(w^0,\theta)$ with $w^0$ fixed. We will show that $g_{w^0}$ is not necessarily convex with respect to $\theta$. \\

To this end, first note  that the mapping $\theta : \rightarrow \Esp_X[\log(D_w(X))]$ does not depend on the parameter $\theta$. Therefore, one can simply focus on the function $g^3:\theta  \rightarrow \Esp_{Z}[\log(1 - D_{w^0}(G_{\theta}(Z)))]$, which is not necessarily convex with respect to $\theta$.
To see this,  let us take $\theta_2 = 0$ for simplicity\footnote{This enables us to get rid of the expectation with respect to $Z$.} and study the function
\begin{align*}
g^3|_{\theta_2=0}: \theta_1 & \rightarrow \log(1 - D_{w^0}(G_{(\theta_1,0)}(z))) \\
            & = \log\big(1 - 1/(1 + e^{-(w^0_3/2\cdot \theta_1^2 + w^0_2 \theta_1 + w^0_1)})\big),
\end{align*}
with $\theta_1 \in \mathbb{R}$. On one hand, the convexity of $g^3|_{\theta_2=0} $ depends on the choice of the parameter $w^0_3$, as demonstrated in Figure \ref{Fig:Plot_phi31}.  On the other hand, simple computation of the second derivative gives 

$$
(g^3|_{\theta_2=0})^{(2)}(\theta_1) =\\
-\cfrac{e^{h(\theta_1) +w_1} \big(w_3 (e^{h(\theta_1) +w_1} +2 h(\theta_1)  + 1) + w_2^2\big) }{((e^{h(\theta_1) +w_1} + 1)^2}, $$

with $\theta_1 \in \mathbb{R}$, and $h(\theta_1) = \theta_1 ( w^0_3/2 \times \theta_1 + w^0_2) $. The sign of this function depends on the choice of the parameter $w^0$. Thus, $g_{w^0}$ is not necessarily convex with respect to $\theta$.
\end{itemize}   

\begin{figure}[h!]
    \center
	 (a) \hspace{3.5cm} (b)\\
    \includegraphics[width=0.23\textwidth]{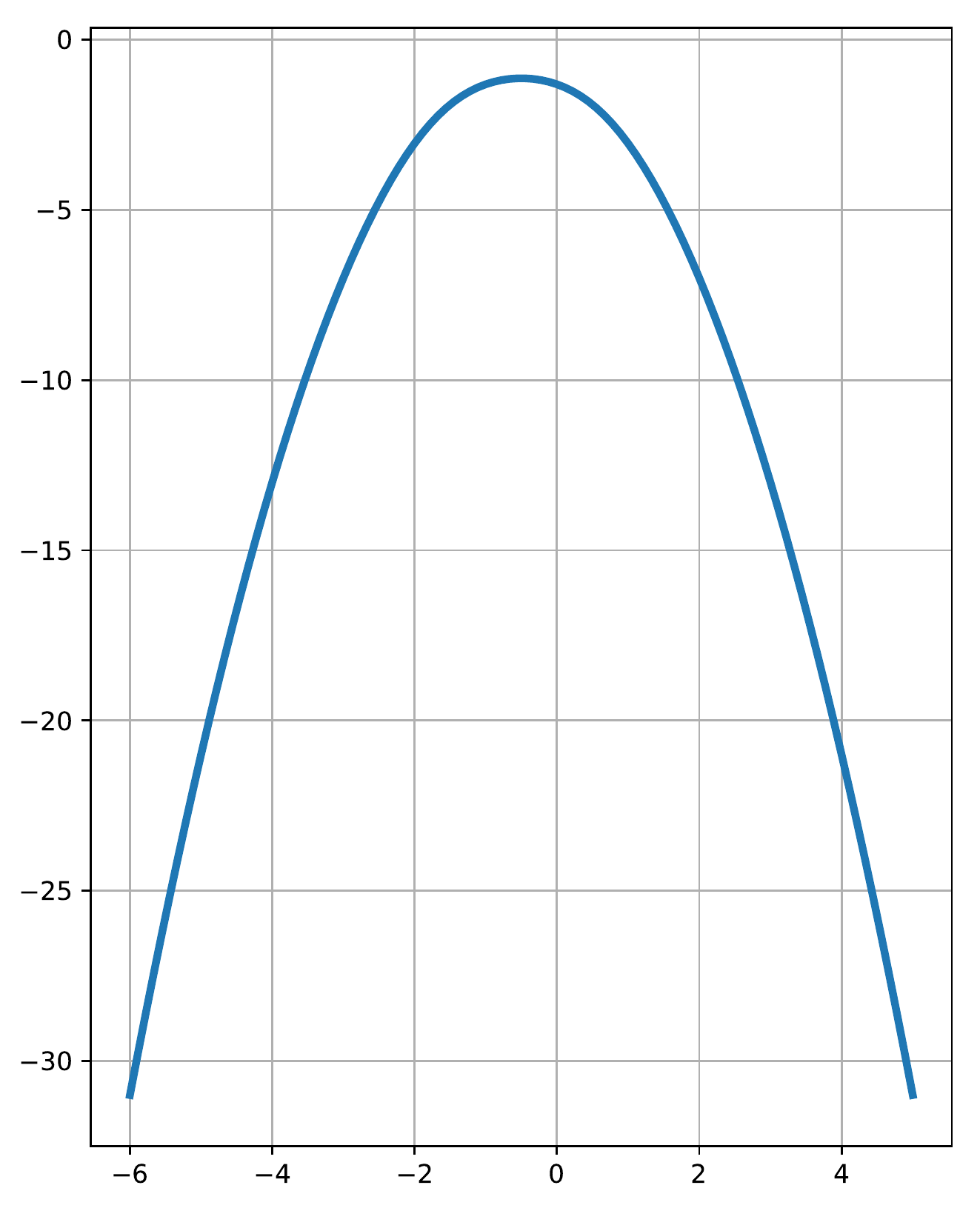}
    \includegraphics[width=0.2325\textwidth]{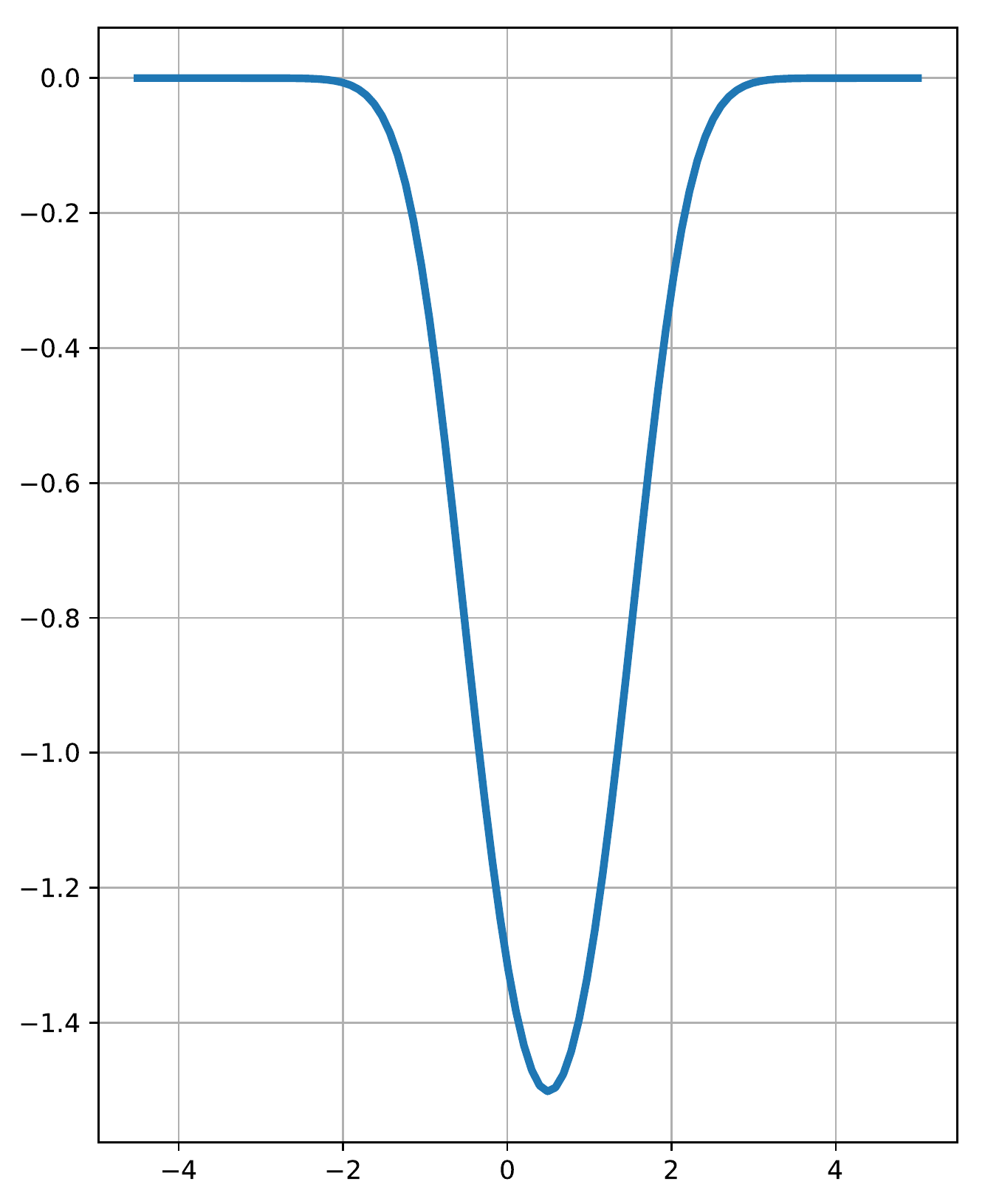}
    \caption{Plot of $g^3|_{\theta_2=0}$ with $(w_1,w_2,w_3) = (1,1,2)$ in (a) and $(w_1,w_2,w_3) = (1,1,-2)$ in (b).}
    \label{Fig:Plot_phi31}
\end{figure}

The analysis indicates analytically one of the culprits behind the convergence issue of GANs: the lack of convexity in the objective function, hence the well-posedness problem of GANs models.

\subsection{GANs Training and Parameters Tuning}
\manuallabel{sec:GANTrainParamTun}{2.2}

In addition to the convexity and well-posedness issue of GANs as minimax games, appropriate choices and specifications for parameters' tuning affect GANs training as well.

\paragraph{Hyper-parameters in GANs training.} There are three key hyper-parameters in stochastic gradient algorithms for GANs training: the learning rate, the batch size, and the time scale.

\begin{itemize}
\item The learning rate determines how far to move along the gradient direction. A higher learning rate accelerates the convergence, yet with a higher chance of explosion; while a lower learning rate yields slower convergence.

\item The time scale parameters monitor the number of updates of the variables $w$ and $\theta$. In the context of GANs training, there are generally two different time scales: a finer time scale for the discriminator and a coarser one for the generator or conversely. Note that more updates means faster convergence, however, also computationally more costly. 

\item The batch size refers to the number of training samples used in the estimate of the gradient.
The more training samples used in the estimate, the more accurate this estimate is, yet with a higher computational cost. Smaller batch size, while providing a cruder estimate, offers a regularizing effect and lowers the chance of overfitting. 
\end{itemize}

\paragraph{Example of improper learning rate.}
 A simple example below demonstrates the importance of an appropriate  choice of learning rate  for the  convergence of SGA. Consider the $\mathbb{R}$-valued function $$f(x) = (a /2) \,\, x^2 + b \, x, \qquad \forall x \in \mathbb{R},$$  where $(a,b) \in \mathbb{R}_{+} \times \mathbb{R}$.   Finding the minimum $x^* = -(b/a)$ of $f$ via the gradient algorithm consists of updating an initial guess $x_0 \in \mathbb{R}$ as follows:
\begin{align}
x_{n+1} = x_{n} - \eta (a x_{n} + b ),\qquad \forall n \geq 0, \label{Eq:GradEgExplo}
\end{align}
with $\eta$ the learning rate. Let us study the behavior of the error $e_n = |x_n - x^*|^2$. By \eqref{Eq:GradEgExplo} and $ a x^* + b = 0$,  
\begin{align*}
e_{n+1} = |x_{n+1} - x^*|^2 & = |x_n - x^*|^2 + 2 ( x_{n+1} - x_n ) e_n +  |x_{n+1} - x_n|^2 \\
                  & = \big(1 - \eta a(2  - \eta a ) \big)|x_{n} - x^*|^2. \numberthis \label{Eq:GradEgExplo2}
\end{align*}
Thus, when $\eta > 2/a$, the factor $\eta a(2  - \eta a ) < 0 $ which means $ r = \big(1 - \eta a(2  - \eta a ) \big) > 1$. In such a case, Equation \eqref{Eq:GradEgExplo2} becomes
$$
e_{n+1} = r \, e_n,
$$
ensuring $e_n \rightarrow + \infty$ when $n$ goes to infinity, and leading to the failure of the convergence for the gradient algorithm. This example highlights the importance of the learning rate parameter. Such an issue of improper learning rates for GANs training will be revisited in a more general setting (see Section \ref{sec:OptLearningRate}). \\

Note that there are earlier works on optimal learning rate policies to improve the performance of gradient-like algorithms (see for instance \citep{moulines2011non}, \citep{gadat2017optimal}, and \citep{mounjid2019improving}).

\paragraph{Time scale for GANs training.} Equation \eqref{Eq:update_rule_grad} corresponds to a specific implementation where updates of the parameters $w$ and $\theta$ are alternated. However, it also is possible to consider an asynchronous update of the following form (see for example Algorithm 1 in \citep{arjovsky2017wasserstein}):

\begin{algorithm}[H]
\caption{Asynchronous gradient algorithm}
\begin{algorithmic}[1]
\For{$i = 1 \ldots n^{\theta}_{\max}$}
	\For{$j = 1 \ldots n^{w}_{\max}$}
			\State $w \leftarrow w  + \eta^{w} g_{w}(w,\theta) $
	\EndFor	
	\State $\theta \leftarrow \theta - \eta^{\theta} g_{\theta}(w,\theta)$
\EndFor
\end{algorithmic}
\label{Alg:async_update_rule_sde}
\end{algorithm}

where $n^{w}_{\max}$ and $n^{\theta}_{\max}$ are respectively the maximum number of iterations for the upper and the inner loop. In such a context, one naturally deals with two time scales: a finer time scale for the discriminator and a coarser one for the generator. The time scale parameters $n^{w}_{\max}$ and $n^{\theta}_{\max}$ monitor the number of updates of the variables $w$ and $\theta$. As mentioned earlier, more updates means faster convergence but at the cost of more gradient computations. It is therefore necessary to select these parameters carefully in order to perform updates only when needed.

\paragraph{Batch size for GANs training.} 

To better understand the batch size impact, we consider here the vanilla objective function $g$, i.e., 
\begin{equation*}
g(w,\theta) = \Esp_X\big[\log(D_w(X))\big] + \Esp_Z\big[\log\big(1 - D_w(G_\theta(Z))\big)\big].
\end{equation*}
In general, the function $g$ is approximated by the empirical mean 
\begin{equation}
g^{NM}(w,\theta) = \cfrac{\sum_{i = 1}^N \sum_{j = 1}^M g^{i,j}(w,\theta)}{N \cdot M},
\label{Eq:empMeanEstimate}
\end{equation}
with 
$$
g^{i,j}(w,\theta) = \log( D_{w}(x_i)) + \log\big(1 - D_{w}\big(G_{\theta}(z_j)\big)\big),
$$ 
where $(x_i)_{i \leq N}$ are i.i.d. samples from $\mathbb{P}_X$ the distribution of $X$, $(z_j)_{j \leq M}$ are i.i.d. samples from $\mathbb{P}_Z$ the distribution of $Z$, and $N$ (resp. $M$) is the number of $\mathbb{P}_X$ (resp. $\mathbb{P}_Z$) samples. The quantity $N\cdot M$ here represents the batch size. It is clear from Equation \eqref{Eq:empMeanEstimate} that enlarging the batch size offers a better estimate of $g^{NM}(w,\theta)$ since it reduces the variance. However, such an improvement requires a higher computational power. Sometimes, it is better to spend such a power on performing more gradient updates rather than reducing the variance.

\section{Control and Game Formulation of GANs Training}

Clearly, hyper-parameters introduced in Section \ref{sec:GANTrainParamTun} are not independent for GANs training, which often involves choices between adjusting learning rates and changing sample sizes. In this section, we show how hyper-parameters' tuning can be formulated and analyzed as stochastic control problems, and how the popular practice of ``clipping'' in GANs training can be understood analytically in this framework. 

\subsection{Stochastic Control of Learning Rate}
\manuallabel{sec:LRAnalysis}{3.1}

In this part, we present an optimal selection methodology for the learning rate.  To start, let us recall the continuous-time stochastic differential equation used to represent GANs training.

\paragraph{SDE approximation of GANs training.} In GANs training, gradient algorithms for  the optimal parameters $\theta^*$ and $w^*$ start with an initial guess $(w_0,\theta_0)$ and apply at time step $t$ the following (simultaneous) update rule:
\begin{equation}
\left\{
\begin{array}{ll}
w_{t+1} & = w_{t} + \eta g_{w}(w_t,\theta_t), \\
\theta_{t+1} & = \theta_t - \eta g_{\theta}(w_t,\theta_t),
\end{array}
\right.
\label{Eq:update_rule_grad}
\end{equation}  
with $g_{w} = \nabla_{w} g$,  $g_{\theta} = \nabla_{\theta} g $, and $ \eta \in \mathbb{R}_+$  the learning rate. \\

The continuous-time approximation of GANs training via functional central limit theorem \citep{cao2020approximation} replaces the update rule in \eqref{Eq:update_rule_grad} with coupled
stochastic differential equations (SDEs)
\begin{equation}
\left\{
\begin{array}{ll}
dw(t) & =  g_{w}(q(t)) dt + \sqrt{\eta} \sigma_w(q(t)) dW^1(t), \\
d\theta(t) & = -  g_{\theta}(q(t)) dt + \sqrt{\eta} \sigma_\theta(q(t)) dW^2(t),
\end{array}
\right.
\label{Eq:sde}
\end{equation}
where $q(t) = (w(t),\theta(t))$, the functions $\sigma_w: \mathbb{R}^{M} \times \mathbb{R}^{N} \rightarrow \mathcal{M}_{\mathbb{R}}(M)$ and $\sigma_\theta: \mathbb{R}^{M} \times \mathbb{R}^{N} \rightarrow \mathcal{M}_{\mathbb{R}}(N)$ are approximated by the covariances of $g_{w}$ and $g_{\theta}$, and  the Brownian motions $W^1$ and $W^2$ are independent. Note that in this SDE approximation, the learning rates for  the generator and the discriminator   $(\eta,\eta)$ are fixed constants. Based on this approximation, optimal choices of learning rate can be formulated as a stochastic control problem.

\paragraph{Adaptive learning rate.} The idea goes as follows. Consider the following decomposition for the learning rate $\eta(t)$ at time $t$:
\begin{align*}
\eta(t) = \big( \eta^w(t), \eta^{\theta}(t)\big) & = \big( u^w(t) \times \bar{\eta}^w(t), u^\theta(t) \times \bar{\eta}^{\theta}(t)\big) =  u(t) \bullet \bar{\eta}(t), \quad \forall t \geq 0,
\numberthis \label{eq:learning_rate}
\end{align*}
where the first component $\bar{\eta}(t) = (\bar{\eta}^w(t), \bar{\eta}^{\theta}(t))\in [\bar{\eta}^{\min},1]^2$ is a predefined base learning rate fixed by the controller using her favorite learning rate selection method, and the second component $u(t) = (u^w(t), u^{\theta}(t))$ is a $[u^{\min},u^{\max}]^2$-valued process representing an adjustment around $\bar{\eta}(t)$. The symbol $\bullet $ here is the component-wise product between vectors. Furthermore, the positive constants $\bar{\eta}^{\min}$, $u^{\min}$, and $u^{\max}$ are ``clipping'' parameters introduced to handle the convexity issue discussed earlier for GANs training, to establish ellipticity conditions needed for the regularity of the value function, and to avoid explosion. This explosion aspect will be clarified shortly. With the incorporation of the adaptive learning rate \eqref{eq:learning_rate}, the corresponding SDE for GANs training becomes
\begin{equation}
\left\{
\begin{array}{l}
dw(t) =  u^{w}(t) g_{w}(q(t)) dt + \big(u^{w}\sqrt{\bar{\eta}^{w}}\big)(t) \sigma_w(q(t)) dW^1(t), \\
\\
d\theta(t) = -  u^{\theta}(t) g_{\theta}(q(t)) dt + \big(u^{\theta}\sqrt{\bar{\eta}^{\theta}}\big)(t) \sigma_\theta(q(t)) dW^2(t),
\end{array}
\right. \label{Eq:sde2}
\end{equation}
with $q(t) = (w(t),\theta(t)) $ for any $t\geq 0$. 

\paragraph{Optimal control of adaptive learning rate.} Let $T< \infty$ be a finite time horizon, and define the objective function $J$ as
\begin{equation*}
J(T,t,q;u) = \Esp\big[g\big(q(T)\big)\big|q(t) = q],
\end{equation*}
where $q = (w,\theta) \in \mathbb{R}^{M} \times \mathbb{R}^{N}$ is the value of the process $q(t)$ at $t \in [0,T]$. Note that the function $J$ here is similar to the mapping $g$ used in vanilla GANs (see Equation \eqref{Eq:min_pbm_gan}). The main difference consists of replacing the constant parameters $(w,\theta)$ in \eqref{Eq:min_pbm_gan} by $q(T)$ their value at the end of the training. Moreover, since $q(T)$ is a random variable, an expectation is added to estimate the average value of $g\big(q(T)\big)$. \\

Then, the control problem for the adaptive learning rate is formulated as
\begin{equation}
v(t,q) = \min_{u^\theta \in \mathcal{U}^{\theta}} \max_{u^{w} \in \mathcal{U}^{w}} J(T,t,q;u),
\label{Eq:ValFunctDef}
\end{equation}
for any $(t,q) \in [0,T] \times \mathbb{R}^M \times \mathbb{R}^N$, with $\mathcal{U}^{w}$ and $\mathcal{U}^{\theta}$ the set of appropriate admissible controls for $u^w$ and $u^\theta$. For a fixed $u^\theta \in \mathcal{U}^{\theta}$, $\mathcal{U}^{w}$ is defined as
\begin{align*}
\mathcal{U}^{w} = \big\{ u :\;& u \text{ c\`{a}dl\`{a}g in } [u^{\min},u^{\max}] \text{ adapted to } \mathbb{F}^{(W^1,W^2)},\\ 
					      &  \Esp[g\big( q(T) \big) \,|\,q(0)\,] < \infty \big\},
\end{align*}
where $u^{\max} \geq u^{\min} > 0$ are the upper bounds introduced earlier. Then, we write $\mathcal{U}^{\theta}$ as follows:
\begin{align*}
\mathcal{U}^{\theta} = \big\{ u :\;& u \text{ c\`{a}dl\`{a}g in } [u^{\min},u^{\max}] \text{ adapted to } \mathbb{F}^{(W^1,W^2)},\\ 
					      &  \sup_{u \in \mathcal{U}^{w}} \Esp[g\big( q(T) \big) \,|\,q(0)\,] < \infty
					      \big\}.
\end{align*}

\subsection{Stochastic Control of Time Scales}
\manuallabel{sec:TimeScaleSelec}{3.2}

Let us consider the following expression for $n_{\max}$:
\begin{align*}
n_{\max} = (n^{w}_{\max},n^{\theta}_{\max}) = \big( c^w \times \bar{n}^w_{\max}, c^\theta \times \bar{n}^\theta_{\max}\big) = c \bullet \bar{n}_{\max},
\end{align*}
with $\bar{n}_{\max} = (\bar{n}^{w}_{\max},\bar{n}^{\theta}_{\max})$ a base time scale parameter initially fixed by the controller and $c = (c^{w},c^{\theta})$ a constant adjustment around $\bar{n}_{\max}$. Under suitable conditions (see for example \citep{fatkullin2004computational, weinan2005analysis}) one can show that the asynchronous Algorithm \ref{Alg:async_update_rule_sde} converges towards
\begin{equation}
\left\{
\begin{array}{l}
dw(t) =  \cfrac{1}{c^w \epsilon^1} g_{w}(q(t)) dt , \\
d\theta(t) = -  \cfrac{1}{c^\theta}  g_{\theta}(q(t)) dt,
\end{array}
\right. 
\label{Eq:pde1}
\end{equation}
where $q(t) = (w(t) , \theta(t))$ and $\epsilon^1$ is a small parameter measuring the separation between time scales. Thus, the SDE version of \eqref{Eq:pde1} is
\begin{equation*}
\hspace{-0.3cm}
\left\{
\begin{array}{l}
dw(t) = \tilde{\eta}^{w} g_{w}(q(t)) dt +  \tilde{\eta}^{w} \sqrt{\bar{\eta}} \sigma_w(q(t)) dW^1(t), \\
\\
d\theta(t) = -  \tilde{\eta}^{\theta} g_{\theta}(q(t)) dt + \tilde{\eta}^{\theta} \sqrt{\bar{\eta}}  \sigma_\theta(q(t)) dW^2(t),
\end{array}
\right. 
\end{equation*}

with $\tilde{\eta}^w = 1/(c^w \epsilon^1)$, and $\tilde{\eta}^\theta = 1/c^\theta$. Comparing the dynamics of $(w(t),\theta(t))_{t \geq 0}$ with the one of Section \ref{sec:LRAnalysis} suggests that {\it the time scale and the learning rate control problems are equivalent.}

\subsection{Stochastic Control of Batch Size}
\manuallabel{sec:BatchSizeSelec}{3.3}

To understand the impact of the batch size, let us introduce a scaling factor $m^\theta \geq 1$, the terminal time $T$, a maximum number of iterations $t_{\max}$, and compare the two following algorithms:
\begin{equation*}
\hspace{-1.3cm}
\left\{
\begin{array}{ll}
w^1_{t+1} & = w^1_{t} + \eta g^{NM}_{w}(w^1_t,\theta^1_t), \\
\theta^1_{t+1} & = \theta^1_t - \eta g^{NM}_{\theta}(w^1_t,\theta^1_t),
\end{array}
\right.
\quad \forall t \leq t_{\max},
\end{equation*}  
and
\begin{equation*}
\left\{
\begin{array}{ll}
w^2_{t+1} & = w^2_{t} +  \eta g^{m^\theta (NM)}_{w}(w^2_t,\theta^2_t), \\
\theta^2_{t+1} & = \theta^2_t - \eta g^{m^\theta (NM)}_{\theta}(w^2_t,\theta^2_t),
\end{array}
\right.
\quad \forall t \leq  t_{\max}/m^\theta.
\end{equation*}
Note that the second algorithm is simulated less to ensure computational costs for the two methods are the same, i.e., with the same number of gradient calculations. Following \citep{cao2020approximation} the continuous-time approximation  for both algorithms can be written as
\begin{equation*}
\hspace{-0.7cm}
\left\{
\begin{array}{l}
dw^{1}(t) =  g_{w}(q^1(t)) dt + \sqrt{\eta} \sigma_w(q^1(t)) dW^1(t), \\
\\
d\theta^1(t) = -  g_{\theta}(q^1(t)) dt + \sqrt{\eta} \sigma_\theta(q^1(t)) dW^2(t),
\end{array}
\right. 
\end{equation*}
and 
\begin{equation*}
\left\{
\begin{array}{l}
dw^{2}(t) =  g_{w}(q^2(t)) dt + \sqrt{\eta/m^\theta}  \sigma_w(q^2(t)) dW^1(t), \\
\\
d\theta^2(t) = -  g_{\theta}(q^2(t)) dt + \sqrt{\eta/m^\theta}  \sigma_\theta(q^2(t)) dW^2(t),
\end{array}
\right. 
\end{equation*}
with $q^1(t) = (w^1(t),\theta^1(t))$, $q^2(t) = (w^2(t),\theta^2(t))$, $g_w = \Esp[g_w^{NM}]$, $g_\theta = \Esp[g_\theta^{NM}]$, $\sigma_w^2$ (resp. $\sigma_\theta^2$) proportional to the variance of $g_w^{NM}$ (resp. $g_\theta^{NM}$), and $\eta \geq 0$ a constant learning rate. Note that, since samples are i.i.d., the variances of $g_w^{m^\theta NM}$ and $g_\theta^{m^\theta NM} $ are $m^\theta$ times smaller than $g_w^{NM}$ and $g_\theta^{NM} $ variances, i.e., $\Var(g_w^{NM}) = m^\theta \Var(g_w^{m^\theta NM})$, implying that enlarging the batch size reduces the variance.\\

Meanwhile, comparing the objective functions $\Esp[g(q^{1}(T))]$ and $\Esp[g(q^{2}(T/m^\theta))]$ of both implementations suggests that reducing the batch size leads to a larger time horizon which means more parameters updates. Therefore, the question of finding the right trade-off between reducing the variance and performing more updates arises naturally.\\

Now, one  can select the optimal batch size in the same spirit of the learning rate control problem (see Section \ref{sec:LRAnalysis}). That is to consider 
\begin{equation}
\tilde{v}^m(t,q) = \min_{m^\theta \in \mathcal{M}^{\theta}}  \Esp\big[g\big(\tilde{q}^m(T)\big)\big|\tilde{q}^m(t) = q],
\label{Eq:ValFunctDef23}
\end{equation}
for any $(t,q) \in \mathbb{R}_+ \times \mathbb{R}^{M+N}$, where the process $\tilde{q}^m(t) = (\tilde{w}^m(t), \tilde{\theta}^m(t))$ represents the trained parameters at time $t$, and $\mathcal{M}^{\theta}$ is the set of admissible controls for $m^{\theta}$. The process $\tilde{q}^m$ satisfies the SDE below
\begin{equation}
\hspace{-0.3cm}
\left\{
\begin{array}{l}
d\tilde{w}^m(t) =   \cfrac{g_{w}(\tilde{q}^m(t))}{m^\theta} dt + \cfrac{\sqrt{\eta} \sigma_w(\tilde{q}^m(t))}{m^\theta } d\tilde{W}^1(t), \\
\\
d\tilde{\theta}^m(t) = -  \cfrac{ g_{\theta}(\tilde{q}^m(t))}{m^\theta } dt + \cfrac{\sqrt{\eta} \sigma_\theta(\tilde{q}^m(t))}{m^\theta } d\tilde{W}^2(t),
\end{array}
\right.
\label{Eq:sde23}
\end{equation}
with $\tilde{W}^1$ and $\tilde{W}^2$ two independent Brownian motions. The derivation of \eqref{Eq:sde23} is detailed in Appendix \ref{Appendix:proofOfSDE23}. Moreover, the set $\mathcal{M}^{\theta}$ is defined as
\begin{align*}
\mathcal{M}^{\theta} = \big\{ m :\;& m \text{ c\`{a}dl\`{a}g in } [1,m^{\max}] \text{ adapted to } \mathbb{F}^{(\tilde{W}^1,\tilde{W}^2)},\\ 
					      &   \Esp[g\big( \tilde{q}^m( T \big) \,|\,\tilde{q}^m(0)\,] < \infty
					      \big\}.
\end{align*}
We allow here $m^\theta$ to be a process in order to handle a more general control problem. 

\section{Stochastic Differential Games}
\label{sec:St_Game_ctrl_pbm}

The minimax game of GANs with adaptive learning rate and batch size in the previous section can be analyzed in a more general framework of stochastic differential games. In this section, we first establish a weak form of dynamic programming principle (DPP) for a class of stochastic differential games where the underlying process is not necessarily a controlled Markov diffusion and where the value function is not a priori continuous. We will then apply this weak form of dynamic programming principle to stochastic games with controlled Markov diffusion, and show that the value of such games is the unique viscosity solution to the associated Isaac-Bellman equation, under suitable technical conditions.
		
\subsection{Formulation of Stochastic Differential Games}
\manuallabel{subsec:form_ctrl_pbm}{8.1}

Let $d \geq 1$ be a fixed integer and $(\Omega,\mathcal{F},\mathbb{P})$ be a probability space supporting a c\`{a}dl\`{a}g $\mathbb{R}^d$-valued process $Y$ with independent increments. Given $T \in \mathbb{R}_{+}^*$, we write $\mathbb{F} = \{\mathcal{F}_t,\,0 \leq t \leq T \}$ for the completion of $Y$ natural filtration on $[0,T]$. Here $\mathbb{F}$ satisfies the usual condition (see for instance \citep{jacod2013limit}). We suppose that $\mathcal{F}_0$ is trivial and that $\mathcal{F}_T = \mathcal{F}$. Moreover, for every $t \geq 0$, set $\mathbb{F}^t = \{\mathcal{F}^t_s,\,s \geq 0 \}$ where $\mathcal{F}^t_s$ is the completion of $\sigma(Y_r - Y_t,\, t \leq r \leq s \vee t )$ by null sets of $\mathcal{F}$.\\

The set $\mathcal{T}$ refers to the collection of all $\mathbb{F}$-stopping times. For any $(\tau_1,\,\tau_2) \in \mathcal{T}^2$ such that $\tau_1 \leq \tau_2$, the subset $\mathcal{T}_{[\tau_1,\tau_2]}$ is the collection of all $\tau \in \mathcal{T}$ verifying $\tau \in [\tau_1,\tau_2]\, a.s$. When $\tau_1 = 0$, we simply write $\mathcal{T}_{\tau_2} $. We use the notations $\mathcal{T}^{t}_{[\tau_1,\tau_2]}$ and $\mathcal{T}^t_{\tau_2} $ to denote the corresponding sets of $\mathbb{F}^t$-stopping times.\\

For every $ \tau \in \mathcal{T}$ and a subset $A$ of a finite-dimensional space, we denote by $L^0_\tau(A)$ the collection of all $\mathcal{F}_\tau$-measurable random variables with values in $A$. The set $\mathbb{H}^0(A)$ is the collection of all $\mathbb{F}$-progressively measurable processes with values in $A$, and $\mathbb{H}^0_{rcll}(A) $ is the subset of all processes in $\mathbb{H}^0(A)$ which are right continuous with finite left limits. We first introduce the sets 
\begin{align*}
\mathbb{S} = [0,T] \times \mathbb{R}^d, \quad \mathcal{S}_0 = \{(\tau,\epsilon);\,\tau \in \mathcal{T}_T,\, \epsilon \in L^0_\tau(\mathbb{R}^d) \}.
\end{align*}
Then we take the two sets of control processes $\mathcal{U}_0^1 \subset \mathbb{H}^0(\mathbb{R}^{k^1})$ and $\mathcal{U}_0^2 \subset \mathbb{H}^0(\mathbb{R}^{k^2}) $, with $k^1 \geq 1$ and $k^2 \geq 1$ two integers, such that the controlled state process defined as the mapping 
\begin{equation*}
(\tau,\epsilon;\nu^1,\nu^2) \in \mathcal{S} \times \mathcal{U}_0^1 \times \mathcal{U}_0^2 \longrightarrow X^{(\nu^1,\nu^2)}_{\tau,\epsilon},
\end{equation*}
is well defined and 
\begin{equation*}
(\theta,X^{\nu}_{\tau,\epsilon}(\theta))\in \mathcal{S}, \qquad \forall(\tau,\epsilon)\in \mathcal{S},\forall \theta \in \mathcal{T}_{[\tau,T]}.
\end{equation*}
Here, $X^{(\nu^1,\nu^2)}_{\tau,\epsilon}$ refers to the controlled process and $\mathcal{S}$ is a set satisfying $\mathbb{S} \subset \mathcal{S} \subset \mathcal{S}_0$. For instance, one can take $\mathcal{S} = \{(\tau,\epsilon) \in \mathcal{S}_0;\,\Esp[|\epsilon|^2] < \infty \}$. In the sequel, we write $\mathcal{U}_0$ for the set $\mathcal{U}_0 = \mathcal{U}_0^1 \times \mathcal{U}_0^2$.\\

Let $f:\mathbb{R}^d \rightarrow \mathbb{R}$ be a Borel function, and the reward function $J$ be
\begin{equation}
J(t,x;\nu) = \Esp[f(X^{\nu}_{t,x}(T))], \qquad \forall (t,x)\in \mathbb{S},\forall \nu = (\nu^1,\nu^2) \in \mathcal{U}^1 \times \mathcal{U}^2,\label{Eq:reward_fct}
\end{equation}
with $\mathcal{U}^1$ (resp. $\mathcal{U}^2$) the set of admissible controls for $\nu^1$ (resp. $\nu^2$). Given $\nu^2$, we define $\mathcal{U}^{1}$ as 
\begin{align*}
\mathcal{U}^{1} = \big\{ \nu^1 \in \mathcal{U}_0^1 ;\; & \Esp[|f(X^{(\nu^1,\nu^2)}_{t,x}(T))|] < \infty \big\}.
\end{align*}
Then, we denote by $\mathcal{U}^{2}$ the set
\begin{align*}
\mathcal{U}^{2} = \big\{ \nu^2 \in \mathcal{U}_0^2 :\; & \sup_{\nu^1 \in \mathcal{U}^{1}} \Esp[|f(X^{(\nu^1,\nu^2)}_{t,x}(T))|] < \infty \big\}.
\end{align*}
We write $\mathcal{U}^{1}_t$ (resp. $\mathcal{U}^{2}_t$) for the collection of processes $\nu^1 \in \mathcal{U}^{1}$ (resp. $\nu^2 \in \mathcal{U}^{2}$) that are $\mathbb{F}^t$-progressively measurable. We denote by $\mathcal{U}_t$ the set $\mathcal{U}_t = \mathcal{U}^{1}_t \times \mathcal{U}^{2}_t$. The value function $V$ of the stochastic control problem can be written as
\begin{equation*}
V(t,x) =  \inf_{\nu^2 \in \mathcal{U}^{2}_t }\sup_{\nu^1 \in \mathcal{U}^{1}_t } J(t,x;\nu),\qquad \forall (t,x)\in \mathbb{S}.
\end{equation*}

\subsection{Dynamic Programming Principle for Stochastic Differential Games}
\manuallabel{subsec:dpp_ctrl_pbm}{8.2}

To establish the weak form of the dynamic programming principle, we work under the following mild assumptions, as in \citep{bouchard2011weak}.
\begin{Assumption} For all $(t,x) \in \mathbf{S}$, and $\nu \in \mathcal{U}_t$, the controlled state process satisfies
\begin{enumerate}[label= A.\arabic*, leftmargin=0.5cm]
\item  \textbf{Independence.} The process $X^{\nu}_{t,x}$  is $\mathbb{F}^t$-progressively measurable.
\item \textbf{Causality.} For any $\tilde{\nu} \in \mathcal{U}_t$, $\tau \in \mathcal{T}^t_{[t,T]}$, and $ A  \in \mathcal{F}^t_\tau $, if $\nu = \tilde{\nu} $ on $[t,\tau]$ and $\nu \mathbf{1}_A = \tilde{\nu} \mathbf{1}_A$ on $(\tau,T] $, then $X^{\nu}_{t,x} \mathbf{1}_A = X^{\tilde{\nu}}_{t,x} \mathbf{1}_A $.
\item \textbf{Stability under concatenation.} For every $\tilde{\nu} \in \mathcal{U}_t$, and $\theta \in \mathcal{T}^t_{[t,T]}$, we have 
\begin{equation*}
\nu \mathbf{1}_{[0,\theta]}  + \tilde{\nu} \mathbf{1}_{(\theta,T]} \in \mathcal{U}_t.
\end{equation*}
\item \textbf{Consistency with deterministic initial data.} For all $\theta \in \mathcal{T}^t_{[t,T]}$, we have the following:
\begin{enumerate}[label=\alph*.]
\item For $\mathbb{P}-$a.e. $\omega \in \Omega$, there exists $\tilde{\nu}_\omega \in \mathcal{U}_{\theta(\omega)}$ such that
\begin{equation*}
\Esp[f(X^{\nu}_{t,x}(T))|\mathcal{F}_\theta](\omega) = J(\theta(\omega),X^{\nu}_{t,x}(\theta)(\omega);\tilde{\nu}_\omega ).
\end{equation*}
\item For $t \leq s \leq T $, $\theta \in \mathcal{T}^t_{[t,s]}$, $\tilde{\nu} \in \mathcal{U}_s$, and $\bar{\nu}=  \nu \mathbf{1}_{[0,\theta]}  + \tilde{\nu} \mathbf{1}_{(\theta,T]}$, we have 
\begin{equation*}
\Esp[f(X^{\bar{\nu}}_{t,x}(T))|\mathcal{F}_\theta](\omega) = J(\theta(\omega),X^{\nu}_{t,x}(\theta)(\omega);\tilde{\nu} ),\quad \text{for } \mathbb{P}-\text{a.e. } \omega \in \Omega.
\end{equation*}
\end{enumerate}
\end{enumerate}
\label{Assump:A}
\end{Assumption}

We also need the boundedness and regularity assumptions below.
\begin{Assumption} \label{Assump:B}
The value function $V$ is locally bounded.
\end{Assumption}

\begin{Assumption}\label{Assump:C}
The reward function $J(.;\nu)$ is continuous for every $\nu \in \mathcal{U}_0$.
\end{Assumption}

We are now ready to derive the weak form of dynamic programming principle.
\begin{theo} Assume Assumptions \ref{Assump:A},  \ref{Assump:B}, and \ref{Assump:C}.  Then
\begin{itemize}
\item For any function $\phi$ upper-semicontinuous such that $V \geq \phi$, we have 
\begin{equation}
V(t,x) \geq \inf_{\nu^2 \in \mathcal{U}^2_t}\sup_{\nu^1 \in \mathcal{U}^1_t} \Esp[\phi(\theta^\nu,X^\nu_{t,x}(\theta^\nu)].\label{Eq:Theo_DPP1_1}
\end{equation}
\item For any function $\phi$ lower-semicontinuous such that $V \leq \phi$, we have 
\begin{equation}
V(t,x) \leq \inf_{\nu^2 \in \mathcal{U}^2_t} \sup_{\nu^1 \in \mathcal{U}^1_t} \Esp[ \phi(\theta^{\nu},X^\nu_{t,x}(\theta^\nu)]. \label{Eq:Theo_DPP1_2}
\end{equation}
\end{itemize}
\label{Theo:DPP1}
\end{theo}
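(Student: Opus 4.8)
The plan is to follow the weak dynamic programming machinery of \citep{bouchard2011weak}, adapted to the two-controller zero-sum setting. The only properties of the dynamics that will be used are those in Assumption~\ref{Assump:A}: A.4 provides, for any admissible pair $\nu=(\nu^1,\nu^2)\in\mathcal{U}_t$ and any $\theta\in\mathcal{T}^t_{[t,T]}$, the conditioning identity $\Esp[f(X^\nu_{t,x}(T))\mid\mathcal{F}_\theta](\omega)=J(\theta(\omega),X^\nu_{t,x}(\theta)(\omega);\tilde\nu_\omega)$ for an admissible continuation $\tilde\nu_\omega$; A.2 ensures that altering either player's control only on $(\theta^\nu,T]$ does not change $X^\nu_{t,x}$ on $[t,\theta^\nu]$; and A.3 ensures that controls obtained by splicing at $\theta^\nu$ remain admissible. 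Assumption~\ref{Assump:B} keeps all expectations finite, and Assumption~\ref{Assump:C} enters precisely where the continuity of $(t,x)\mapsto J(t,x;\nu)$ (for fixed $\nu$) is needed, namely to replace an $\omega$-wise near-optimal selection by a countable family of fixed controls.

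For \eqref{Eq:Theo_DPP1_1}, fix $(t,x)$ and $\varepsilon>0$ and pick $\nu^{2,\varepsilon}\in\mathcal{U}^2_t$ that is $\varepsilon$-optimal for the outer infimum, i.e. $\sup_{\nu^1}J(t,x;(\nu^1,\nu^{2,\varepsilon}))\le V(t,x)+\varepsilon$. For an arbitrary $\nu^1\in\mathcal{U}^1_t$ put $\nu=(\nu^1,\nu^{2,\varepsilon})$ and $\theta=\theta^\nu$. Using $\phi\le V$ pointwise, it suffices to show $\Esp[V(\theta,X^\nu_{t,x}(\theta))]\le V(t,x)+c\varepsilon$; I would get this by bounding $V(\theta,X^\nu_{t,x}(\theta))$ from above by the continuation value $\sup_{\mu^1}J(\theta,X^\nu_{t,x}(\theta);(\mu^1,\nu^{2,\varepsilon}|_{(\theta,T]}))$, constructing an almost-optimal continuation for the maximizing player through a countable covering $\{B_i\}$ of $\mathbb{S}$ (its members open by the upper semicontinuity of $\phi$ and the continuity of $J(\cdot;\nu)$), splicing the corresponding fixed controls after $\theta$ via A.3 on the disjointified events $\{(\theta,X^\nu_{t,x}(\theta))\in B_i\}$, identifying the pre-$\theta$ state via A.2, applying the conditioning identity A.4, and comparing with the global $\varepsilon$-optimality of $\nu^{2,\varepsilon}$. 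Taking the supremum over $\nu^1$, then the infimum over $\nu^2$ on the right of \eqref{Eq:Theo_DPP1_1}, and letting $\varepsilon\downarrow0$ completes this part. Inequality \eqref{Eq:Theo_DPP1_2} is the mirror statement, proved by the symmetric construction with the roles of the two players exchanged and the lower semicontinuity of $\phi$ (if convenient, after approximating $\phi$ from below by an increasing sequence of continuous functions so that the covering runs with continuous test functions) playing the role of the upper semicontinuity above.

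The step I expect to be the main obstacle is pushing the covering / measurable-selection construction through the $\inf$–$\sup$ structure of the game. In the pure control problem the relevant object at the intermediate time is a genuine value function, whereas here it is a partial optimum $\sup_{\nu^1}J(\cdot;(\cdot,\nu^2))$ that is only lower semicontinuous, so one must be careful to exploit exactly the one-sided semicontinuity of $\phi$ that makes the covering sets open, and one must verify that a continuation control built to be near-optimal for one player remains admissible — in particular still satisfies the integrability constraints defining $\mathcal{U}^1$ or $\mathcal{U}^2$ — when the other player subsequently deviates. A related point is that the stopping times $\theta^\nu$ depend on the controls: their compatibility with all of the concatenations, together with the $\mathbb{F}^t$-progressive measurability of the spliced processes, has to be extracted from the causality and stability-under-concatenation properties of Assumption~\ref{Assump:A}.
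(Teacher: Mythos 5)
Your outline is correct in substance and follows the same Bouchard--Touzi weak-DPP machinery as the paper, but you have inverted which inequality carries the covering construction. The paper disposes of \eqref{Eq:Theo_DPP1_1} in two lines: for each fixed $\nu^2$ the inner problem $\bar V(t,x;\nu^2)=\sup_{\nu^1}J(t,x;(\nu^1,\nu^2))$ is a one-player maximization that dominates $V\geq\phi$, so \citep[Theorem 3.5]{bouchard2011weak} applies directly and one takes the infimum over $\nu^2$ afterwards; no $\varepsilon$-optimal $\nu^{2,\varepsilon}$ and no fresh covering are needed for that direction. The detailed work is instead placed in \eqref{Eq:Theo_DPP1_2}, which you dismiss as a ``mirror statement'': there the covering selects at each point $(s,y)$ a control $\nu^{(s,y),\epsilon,2}$ that is $\epsilon$-optimal for the \emph{minimizer} in the uniform sense $J(s,y;(\mu^1,\nu^{(s,y),\epsilon,2}))\leq\bar V(s,y;\nu^{(s,y),\epsilon,2})\leq V(s,y)+\epsilon$ for every $\mu^1$, only the second component of the control is spliced after $\theta$, and the estimate is established for an \emph{arbitrary} $\nu^1$ before passing to the supremum — the two directions are not literally symmetric because of the $\inf$--$\sup$ (rather than $\sup$--$\inf$) ordering, so this asymmetry deserves to be spelled out. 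Your covering for \eqref{Eq:Theo_DPP1_1} (near-optimal continuations for the maximizer against the frozen $\nu^{2,\varepsilon}$) is essentially a hand-unrolling of the cited theorem, so both routes reach the same place; the paper's split of labor is simply more economical.

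One step as written is not well-posed: the reduction ``it suffices to show $\Esp[V(\theta,X^\nu_{t,x}(\theta))]\leq V(t,x)+c\varepsilon$'' presupposes that $V$ is measurable, which is precisely what the weak formulation refuses to assume — this is the reason the statement is phrased through semicontinuous test functions $\phi$ at all. The slip is reparable from your own sketch: on each covering set you have the pointwise chain $\phi\leq V\leq\sup_{\mu^1}J(\cdot\,;(\mu^1,\cdot))\leq J(\cdot\,;\mu^{1,i,\epsilon})+c\epsilon$, and only $\phi$ and $J$, both Borel, ever need to be integrated; the intermediate quantity $\Esp[V(\theta,X^\nu_{t,x}(\theta))]$ should never be formed. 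Note also that passing to the limit over the exhaustion $A^n=\cup_{i\leq n}A_i$ requires one-sided integrability of $\phi(\theta,X^\nu_{t,x}(\theta))$ and a monotone-convergence argument as in \eqref{Eq:theo_1_eq6}, not merely dominated convergence.
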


\begin{proof} {(For Theorem \ref{Theo:DPP1}).}
Let us first prove \eqref{Eq:Theo_DPP1_1}. For any admissible process $\nu^2 \in \mathcal{U}^2_t$, define $\bar{V}$ as  
\begin{equation}
\bar{V}(t,x;\nu^2) = \sup_{\nu^1 \in \mathcal{U}^1_t} J(t,x;(\nu^1,\nu^2)), \qquad \forall (t,x) \in \mathbb{S}.
\label{Eq:theo_1_eq0}
\end{equation}
Under Assumptions \ref{Assump:A}, \ref{Assump:B}, and \ref{Assump:C}, one can apply \cite[Theorem 3.5]{bouchard2011weak} to get 
\begin{align*}
\bar{V}(t,x;\nu^{2}) \geq \sup_{\nu^1 \in \mathcal{U}^1_t}\,\Esp[\phi(\theta^\nu,X^{\nu}_{t,x}(\theta^\nu) ].
\numberthis \label{Eq:theo_1_eq1}
\end{align*}
Since, $V(t,x) = \inf_{\nu^2 \in \mathcal{U}^2_t}\,\bar{V}(t,x;\nu^{2})$ by definition, we use \eqref{Eq:theo_1_eq1} to deduce that 
\begin{align*}
V(t,x) \geq \inf_{\nu^2 \in \mathcal{U}^2_t} \sup_{\nu^1 \in \mathcal{U}^1_t}\,\Esp[\phi(\theta^\nu,X^{\nu}_{t,x}(\theta^\nu) ].
\end{align*} 
\item We move now to the proof of \eqref{Eq:Theo_DPP1_2}. Fix $\nu = (\nu^1,\nu^2) \in \mathcal{U}_t$ and set $\theta \in \mathcal{T}^t_{[t,T]}$. For any $t \geq 0$, we write $\bar{t}$  for the time $\bar{t} = t \wedge \theta(\omega)$. Let $\epsilon > 0$ and $\bar{V}$ be the function introduced in \eqref{Eq:theo_1_eq0}.  Then, there is a family of $(\nu^{(s,y),\epsilon,2})_{(s,y)\in \mathbb{S}} \subset \mathcal{U}_0 $ such that 
\begin{equation}
\nu^{(s,y),\epsilon,2} \in \mathcal{U}^2_{s}, \qquad J(s,y;(\nu^{s,1},\nu^{(s,y),\epsilon,2})) - \epsilon \leq \bar{V}(s,y;\nu^{(s,y),\epsilon,2}) - \epsilon \leq  V(s,y), \label{Eq:theo_1_eq2}
\end{equation}
for any $(s,y) \in \mathbb{S}$ and $\nu^{s,1} \in \mathcal{U}^1_{s}$. Let $\nu^{(s,y),\epsilon} = (\nu^{1},\nu^{(s,y),\epsilon,2}) \in \mathcal{U}_{s}$. Since $\phi$ is lower-semicontinuous, and $J$ is upper-semicontinuous, there exists a family $(r_{(s,y)})_{(s,y) \in \mathbb{S}}$ of positive scalars such that
\begin{align}
\phi(s,y) - \phi(s',y') \leq \epsilon, \qquad J(s,y;\nu^{(s,y),\epsilon}) - J(s',y';\nu^{(s,x),\epsilon}) \geq -\epsilon, \quad \forall (s',y') \in B(s,y;r_{(s,y)}), \label{Eq:theo_1_eq3}
\end{align}
with $(s,y) \in \mathbb{S}$ and
\begin{equation*}
B(s,y;r) = \{(s',y')\in \mathbb{S}; \, s' \in (s-r,s],\, \|y-y'\| < r \}, \qquad \forall r > 0. 
\end{equation*} 
Now follow the same approach of \cite[Theorem 3.5, \emph{step} $2$]{bouchard2011weak} to construct a countable sequence $(t_i,y_i,r_i)_{i \geq 1}$ of elements of $\mathbb{S} \times \mathbb{R}$, with $0 < r_i \leq r_{(t_i,y_i)}$ for all $ i \geq 1$, such that $\mathbb{S} \subset \{0\}\times \mathbb{R}^d \cup \{\cup_{i \geq 1} B(t_i,y_i;r_i)\} $. Set $A_0 = \{T\} \times \mathbb{R}^d$, $C_{-1} = \emptyset$, and define the sequence 
$$ 
A_{i+1} = B(t_{i+1},y_{i+1};r_{i+1})\setminus C_i, \qquad C_i = C_{i-1} \cup A_i, \quad \forall i \geq 0.
$$
With this construction, it follows from \eqref{Eq:theo_1_eq2}, \eqref{Eq:theo_1_eq3}, and the fact that $V \leq \phi$, that the countable family $(A_i)_{i \geq 0}$ satisfies 
\begin{align*}
\left\{ 
\begin{array}{ll}
\big( \theta, X^{\nu}_{t,x} ( \theta ) \big) \in \big(\cup_{i \geq 0} A_i\big)\,\, \mathbb{P}\text{-a.s.}, &  A_i \cap A_j = \emptyset, \quad \text{ for } i \ne j, \\
J(.;\nu^{i,\epsilon}) \leq \phi + 3\epsilon, & \text{ on } A_i, \text{ for } i \geq 1,
\end{array}
\right.
\numberthis \label{Eq:theo_1_eq5}
\end{align*} 
with $\nu^{i,\epsilon} = \nu^{(t_i,y_i),\epsilon}$ for any $i \geq 1$. We are now ready to prove \eqref{Eq:Theo_DPP1_2}. To this end, 
set $A^n = \cup_{0 \leq i \leq n} A_i$ for any $n \geq 1$ and define 
\begin{equation*}
\nu^{\epsilon,n,2}_s = \mathbf{1}_{[t,\theta]}(s) \nu^2_s + \mathbf{1}_{(\theta,T]} \big( \nu^2_s \mathbf{1}_{(A^n)^c}\big(\theta,X^\nu_{t,x}(\theta)\big) + \sum_{i=1}^n \mathbf{1}_{A_i}\big(\theta,X^\nu_{t,x}(\theta)\big)  \nu^{i,\epsilon,2}_s \big), \quad \forall s \in [t,T],
\end{equation*}
and $\bar{\nu}^{\epsilon,n} = (\nu^1, \nu^{\epsilon,n,2})$ for any $i \geq 1$. Note that $\{ (\theta,X^{\nu}_{t,x}(\theta)) \in A_i\} \in \mathcal{F}^t_\theta$ as a consequence of Assumption \ref{Assump:A}.1. Then, it follows from Assumption \ref{Assump:A}.3 that $\bar{\nu}^{\epsilon,n}\in \mathcal{U}_t$.
Moreover, by definition of $B(t_i,y_i;r_i)$, we have $ \theta = \bar{t}_i \leq t_i $ on $\{ (\theta,X^{\nu}_{t,x}(\theta) \in A_i\}$. Then, using Assumption \ref{Assump:A}.4, Assumption \ref{Assump:A}.2, and \eqref{Eq:theo_1_eq5}, we deduce 
\begin{align*}
&\Esp[f\big(X^{\bar{\nu}^{\epsilon,n}}_{t,x}(T)\big)|\mathcal{F}_\theta ]\mathbf{1}_{A^n}\big(\theta,X^\nu_{t,x}(\theta)\big) \\
= &  \Esp[ f\big(X^{\bar{\nu}^{\epsilon,n}}_{t,x}(T)\big)|\mathcal{F}_\theta ] \mathbf{1}_{A_0} \big(\theta,X^\nu_{t,x}(\theta)\big) + \sum_{i=1}^n \Esp[ f\big(X^{\bar{\nu}^{\epsilon,n}}_{t,x}(T)\big)|\mathcal{F}_{\bar{t}_i}  ] \mathbf{1}_{A_i} \big(\theta,X^\nu_{t,x}(\theta)\big) \\
	   =& V\big(T,X^{\bar{\nu}^{\epsilon,n}}_{t,x}(T)\big) \mathbf{1}_{A_0} \big(\theta,X^\nu_{t,x}(\theta)\big) + \sum_{i=1}^n J(\bar{t}_i ,X^{\nu}_{t,x}(\bar{t}_i); \nu^{i,\epsilon}) \mathbf{1}_{A_i} \big(\theta,X^\nu_{t,x}(\theta)\big) \\
	   \leq &\sum_{i=0}^n \big( \phi(\theta,X^\nu_{t,x}(\theta)) + 3\epsilon \big) \mathbf{1}_{A_i} \big(\theta,X^\nu_{t,x}(\theta)\big) = \big( \phi(\theta,X^\nu_{t,x}(\theta)) + 3\epsilon \big) \mathbf{1}_{A^n} \big(\theta,X^\nu_{t,x}(\theta)\big),
\end{align*}
Using the tower property of conditional expectations, we get 
\begin{align*}
\Esp\big[ f\big(X^{\bar{\nu}^{\epsilon,n}}_{t,x}(T)\big) \big] & = \Esp\big[\Esp[ f\big(X^{\bar{\nu}^{\epsilon,n}}_{t,x}(T)\big) |  \mathcal{F}_{\theta}] \big]\\
	    & \leq \Esp\big[\big( \phi(\theta,X^\nu_{t,x}(\theta)) + 3\epsilon \big) \mathbf{1}_{A^n} \big(\theta,X^\nu_{t,x}(\theta)\big)\big] + \Esp\big[f\big(X^{\bar{\nu}^{\epsilon,n}}_{t,x}(T)\big)\mathbf{1}_{(A^n)^c} \big(\theta,X^\nu_{t,x}(\theta)\big)\big].
\end{align*}
Since $f\big(X^{\bar{\nu}^{\epsilon,n}}_{t,x}(T) \in \mathbb{L}^1$, it follows from the dominated convergence theorem  
\begin{align*}
& \Esp\big[ f\big(X^{\bar{\nu}^{\epsilon,n}}_{t,x}(T)\big) \big] \\
  \leq & 3\epsilon + \underset{n \rightarrow \infty}{\lim \inf}\,\Esp\big[\phi(\theta,X^\nu_{t,x}(\theta))  \mathbf{1}_{A^n} \big(\theta,X^\nu_{t,x}(\theta)\big)\big]\\
        = & 3\epsilon + \underset{n \rightarrow \infty}{\lim}\,\Esp\big[\phi(\theta,X^\nu_{t,x}(\theta))^+ \mathbf{1}_{A^n} \big(\theta,X^\nu_{t,x}(\theta)\big)\big] -  \underset{n \rightarrow \infty}{\lim}\,\Esp\big[\phi(\theta,X^\nu_{t,x}(\theta))^- \mathbf{1}_{A^n} \big(\theta,X^\nu_{t,x}(\theta)\big)\big]\\
       = & 3\epsilon + \Esp\big[\phi(\theta,X^\nu_{t,x}(\theta))\big], \numberthis \label{Eq:theo_1_eq6}
\end{align*}
where the last inequality follows from the left-hand side of \eqref{Eq:theo_1_eq5} and from the monotone convergence theorem since either $\Esp\big[\phi(\theta,X^\nu_{t,x}(\theta))^+ \big] < \infty $ or $ \Esp\big[\phi(\theta,X^\nu_{t,x}(\theta))^-\big] < \infty$. Finally, by definition of $V$, definition of $\bar{V}$, the arbitrariness of $\nu^1$, and \eqref{Eq:theo_1_eq6}, we deduce
\begin{align*}
V(t,x) \leq \bar{V}(t,x; \nu^{\epsilon,n,2}) & = \sup_{\nu^1 \in \mathcal{U}^1_t} \Esp\big[f\big(X^{\bar{\nu}^{\epsilon,n}}_{t,x}(T)\big) \big]  \leq 3 \epsilon + \sup_{\nu^1 \in \mathcal{U}^1_t} \Esp\big[\phi(\theta,X^\nu_{t,x}(\theta))\big],
\end{align*}
which completes the proof of \eqref{Eq:Theo_DPP1_2} by the arbitrariness of $\nu^2 \in \mathcal{U}^2_t$ and $\epsilon > 0$.
\end{proof}

\subsection{Stochastic Differential Games under Controlled Markov Diffusions}
\manuallabel{subsec:mkv_diff_ctrl_pbm}{5.5}
Now considering a particular class of controlled Markov dynamics, where for any control process $\nu \in \mathcal{U}_0$, 
\begin{align*}
d X^\nu_t = b(t,X^\nu_t,\nu_t) dt + \tilde{\sigma}(t,X^\nu_t,\nu_t) d W_t, \numberthis \label{Eq:dynamics_x_u}
\end{align*}
with $W$ a $d$-dimensional Brownian motion, $b: \mathbb{R}_+ \times \mathbb{R}^d \times \mathbb{R}^{k^1} \times \mathbb{R}^{k^2} \rightarrow \mathbb{R}^d$ and $\tilde{\sigma}: \mathbb{R}_+ \times \mathbb{R}^d \times \mathbb{R}^{k^1} \times \mathbb{R}^{k^2} \rightarrow \mathcal{M}_{\mathbb{R}}(d)$ two continuous functions, and $\mathcal{M}_{\mathbb{R}}(d)$ the space of $d\times d$ matrices with real coefficients. Here we assume that $b$ and $\tilde{\sigma}$ satisfy the usual Lipschitz continuity conditions to ensure that Equation \eqref{Eq:dynamics_x_u} admits a unique strong solution $X^\nu_{t_0,x_0} $ such that $X^\nu_{t_0,x_0}(t_0) = x_0$ with $(t_0,x_0)\in \mathbb{S}$.
Moreover, the function $f:\mathbb{R}^d \rightarrow \mathbb{R}$ associated with the reward function $J$ in \eqref{Eq:reward_fct} is continuous, and there exists a constant $K$ such that 

$$
|f(x)|  \leq K \big(1 + \|x\|^2 \big), \qquad  \forall x \in \mathbb{R}^d.
$$

Then we can characterize the value of the game after defining the operator $H$ as follows: 
\begin{equation*}
H(t,x,p,A)  = \max_{u^1 \in \mathbb{U}^1} \min_{u^2 \in \mathbb{U}^2} H^{(u^1,u^2)}(t,x,p,A), \qquad \forall (t,x,p,q) \in \mathbb{S} \times \mathbb{R}^d \times \mathcal{M}_{\mathbb{R}}(d),
\end{equation*}
with $\mathbb{U}^1$ (resp. $\mathbb{U}^2$) a closed subset of $\mathbb{R}^{k^1}$ (resp. $\mathbb{R}^{k^2}$) and 
\begin{equation*}
 H^{(u^1,u^2)}(t,x,p,A) =  -b^\top p - \cfrac{1}{2} \Tr[\tilde{\sigma} \tilde{\sigma}^\top A],  \qquad \forall (t,x,p,q) \in \mathbb{S} \times \mathbb{R}^d \times \mathcal{M}_{\mathbb{R}}(d).   \end{equation*}
 
\begin{prop} Assume that $V$ is locally bounded. Then
\begin{itemize}
\item  $V_*$ is a viscosity supersolution of 
\begin{equation}
-{V_*}_t + H(.,{V_*},{V_*}_x,{V_*}_{xx}) \geq 0, \quad \text{ on } [0,T) \times \mathbb{R}^d,
\label{Eq:prop0_supersol_visco}
\end{equation} with
$$
V_*(t,x) = \underset{(t',x') \rightarrow (t,x)}{\liminf} V(t,x).
$$
\item $V^*$ is a viscosity subsolution of 
\begin{equation}
-V^*_t + H(.,V^*,V^*_x,V^*_{xx}) \leq 0, \quad \text{ on } [0,T) \times \mathbb{R}^d,\label{Eq:prop0_subsol_visco}
\end{equation}
with $$ V^*(t,x)= \underset{(t',x') \rightarrow (t,x)}{\limsup} V(t,x).$$

\item $V$ is the unique viscosity solution of 
\begin{equation}
-V_t + H(.,V,V_x,V_{xx}) = 0, \quad \text{ on } [0,T) \times \mathbb{R}^d.\label{Eq:prop0_sol_visco_0}
\end{equation}
\end{itemize}
\label{prop0:Visco_sol}
\end{prop}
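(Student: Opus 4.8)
The plan is to extract the two one-sided viscosity inequalities from the weak dynamic programming principle of Theorem~\ref{Theo:DPP1}, identify the terminal data, and then promote this into uniqueness via a comparison principle. (Recall that the Lipschitz, linear-growth and continuity hypotheses in force here guarantee Assumptions~\ref{Assump:A}--\ref{Assump:C}, so that Theorem~\ref{Theo:DPP1} applies.)

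First I would establish the supersolution property of $V_*$. Fix $(t_0,x_0)\in[0,T)\times\mathbb R^d$ and $\varphi\in C^{1,2}$ with $0=(V_*-\varphi)(t_0,x_0)=\min(V_*-\varphi)$, the minimum being strict, and (after the usual truncation away from a cylinder) $\varphi\le V$ globally. I argue by contradiction: if \eqref{Eq:prop0_supersol_visco} failed at $(t_0,x_0)$, then by continuity of $b,\tilde\sigma,\varphi$ there would be $\eta>0$ and a parabolic cylinder $\mathcal C_r=(t_0-r,t_0+r)\times B_r(x_0)$ on which the relevant extremal value of $\mathcal L^{(u^1,u^2)}\varphi:=\varphi_t+b(\cdot,u^1,u^2)^\top\varphi_x+\tfrac12\Tr[\tilde\sigma\tilde\sigma^\top(\cdot,u^1,u^2)\varphi_{xx}]=\varphi_t-H^{(u^1,u^2)}(\cdot,\varphi_x,\varphi_{xx})$ stays $\ge\eta$. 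Choosing $(t_n,x_n)\to(t_0,x_0)$ with $V(t_n,x_n)\to V_*(t_0,x_0)$, the exit time $\theta^\nu=\inf\{s\ge t_n:(s,X^\nu_{t_n,x_n}(s))\notin\mathcal C_r\}$, and applying \eqref{Eq:Theo_DPP1_1} with $\phi=\varphi$, Itô's formula gives $\Esp[\varphi(\theta^\nu,X^\nu(\theta^\nu))]=\varphi(t_n,x_n)+\Esp\int_{t_n}^{\theta^\nu}\mathcal L^{\nu_s}\varphi\,ds$. Reducing the $\inf_{\nu^2}\sup_{\nu^1}$ over processes to extrema over instantaneous controls — by testing one player against constant controls and using the pointwise bound on $\mathcal L^{(\cdot,\nu^2_s)}\varphi$ for the other, together with the uniform exit-time and moment estimates for \eqref{Eq:dynamics_x_u} that follow from the Lipschitz/linear-growth conditions and from continuity of the extremiser over the (compact) control sets — one obtains $V(t_n,x_n)-\varphi(t_n,x_n)\ge c\eta\,\Esp[\theta^\nu-t_n]>0$ uniformly in $n$, contradicting $V(t_n,x_n)-\varphi(t_n,x_n)\to0$. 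The subsolution property of $V^*$ is the mirror-image argument based on \eqref{Eq:Theo_DPP1_2} with a $C^{1,2}$ majorant touching $V^*$ from above; the order of the extrema matching the operator $H$ of the statement is dictated by the $\inf$--$\sup$ structure of $V$ and, where the naive and game Hamiltonians could differ, by an Isaacs-type identity, which holds automatically for the separable GANs dynamics \eqref{Eq:sde2}, \eqref{Eq:sde23} in which the two controls act on disjoint coordinates.

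Next, the terminal condition: since $X^\nu_{T,x}(T)=x$ in \eqref{Eq:reward_fct}, $V(T,\cdot)=f$, hence $V_*(T,\cdot)=V^*(T,\cdot)=f$, which is continuous with quadratic growth. For uniqueness I would prove a comparison principle — any upper semicontinuous subsolution of quadratic growth lies below any lower semicontinuous supersolution of quadratic growth once the two agree at $t=T$ — by the doubling-of-variables method: maximise $\Phi(t,x,y)=V^*(t,x)-V_*(t,y)-\tfrac1{2\varepsilon}|x-y|^2-\beta\zeta(t,x,y)$ over $[0,T]\times\mathbb R^{2d}$ with a growth penalty $\zeta$ (e.g.\ $\zeta=e^{\lambda(T-t)}(1+|x|^2+|y|^2)$ with $\lambda$ large) forcing a maximum in a compact set; insert the second-order super/subjets produced by the Crandall--Ishii lemma into the two viscosity inequalities, and control $H(t,x,\varepsilon^{-1}(x-y),A)-H(t,y,\varepsilon^{-1}(x-y),B)$ by a modulus of $\varepsilon^{-1}|x-y|^2+|x-y|$ using the Lipschitz continuity of $b,\tilde\sigma$; let $\varepsilon\to0$ then $\beta\to0$ to conclude $V^*\le V_*$. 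Combined with the trivial $V_*\le V^*$ this yields $V_*=V^*=V$, so $V$ is continuous and is \emph{a} viscosity solution of \eqref{Eq:prop0_sol_visco_0}; applying comparison to any two quadratic-growth solutions with terminal value $f$ gives uniqueness.

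The hard part will be the comparison principle on the unbounded domain $\mathbb R^d$ with merely quadratic (rather than bounded) growth: the penalty $\zeta$ must be chosen so that maximisers of $\Phi$ remain in a compact set while the extra terms it puts into the viscosity inequalities stay absorbable, and the structural estimate on $H$ must be verified under the bare Lipschitz hypotheses, with $\tilde\sigma\tilde\sigma^\top$ allowed to degenerate — it is exactly this degeneracy that forces uniqueness to go through a comparison argument rather than any regularity or representation shortcut. The interior viscosity inequalities, by contrast, are routine given the weak DPP already established in Theorem~\ref{Theo:DPP1}.
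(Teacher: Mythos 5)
Your argument for the two one-sided viscosity properties is essentially the paper's: argue by contradiction with a strict test function, localise on a cylinder where the pointwise Hamiltonian inequality persists, take a sequence $(t_n,x_n)\to(t_0,x_0)$ realising $V_*$ (resp.\ $V^*$), stop at the exit time, apply It\^{o}'s formula, and contradict the weak DPP of Theorem~\ref{Theo:DPP1}. Two small mechanical differences are worth noting. First, the paper closes the contradiction not via a uniform lower bound on $\Esp[\theta^\nu-t_n]$ (which you would still have to justify, uniformly in $n$ and in the control, for a possibly degenerate diffusion) but by perturbing $\phi$ to $\psi=\phi\mp\tilde\epsilon(|t-t_0|^2+\|x-x_0\|^4)$ and exploiting the resulting gap $|\phi-\psi|\geq\eta$ \emph{off} the cylinder, evaluated at the exit time; this sidesteps exit-time estimates entirely and is the cleaner route. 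Second, your remark about the ordering of the extrema and the Isaacs identity is a genuine subtlety that the paper passes over silently (it fixes a constant $\bar u^2$ that a priori depends on $u^1$); your observation that the identity is automatic for the separable GANs dynamics is more careful than the source. The substantive divergence is the uniqueness step: you propose to prove the comparison principle from scratch by doubling of variables with a quadratic-growth penalty and the Crandall--Ishii lemma, whereas the paper simply invokes it as Lemma~\ref{lem:comparison_principle_recall}, citing \citep{pham2009continuous}. Your route is self-contained and makes explicit exactly which structural estimates on $H$ are needed on the unbounded domain with degenerate diffusion, at the cost of reproving a textbook result; the paper's route is shorter but leaves the verification of the cited lemma's hypotheses (quadratic growth, Lipschitz structure of $b,\tilde\sigma$) implicit. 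Both are valid; nothing in your proposal would fail.
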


\begin{proof}
{(For Proposition \ref{prop0:Visco_sol}).} Let us first prove the supersolution property \eqref{Eq:prop0_supersol_visco}.
\begin{enumerate} 
\item  For this, assume to the contrary that there is $(t_0,x_0)\in \mathbb{S}=[0,T] \times \mathbb{R}^d$ together with a smooth function $\phi:\mathbb{S} \rightarrow \mathbb{R}$ satisfying 
\begin{equation*}
0 = (V_*-\phi)(t_0,x_0) < (V_*-\phi)(t,x), \quad \forall (t,x) \in [0,T] \times \mathbb{R}^d,\quad (t,x) \ne (t_0,x_0),
\end{equation*} 
such that 
\begin{equation*}
\big(- \partial_t \phi  + H(.,\phi,\phi_x,\phi_{xx} ) \big) (t_0,x_0) < 0.
\end{equation*}
For $\tilde{\epsilon}>0$, define $\psi$ by 
\begin{equation*}
\psi(t,x) = \phi(t,x) - \tilde{\epsilon} \big( |t-t_0|^2 + \|x - x_0\|^4 \big),
\end{equation*}
and note that $\psi$ converges uniformly on compact sets to $\phi$ as $\tilde{\epsilon} \rightarrow 0$. Since $H$ is upper-semicontinuous and $(\psi,\psi_t,\psi_x,\psi_{xx})(t_0,x_0) = (\phi,\phi_t,\phi_x,\phi_{xx})(t_0,x_0)$, we can choose $\tilde{\epsilon} > 0$ small enough so that there exist $r > 0$, with $t_0 + r < T$, such that for any $u^{1} \in \mathbb{U}^1$ one can find some $\bar{u}^2 \in  \mathbb{U}^2$ satisfying
\begin{equation}
\big(- \partial_t \psi  + H^{(u^1,\bar{u}^2)}(.,\psi,\psi_x,\psi_{xx} ) \big) (t_0,x_0) < 0, \qquad \forall (t,x) \in B_r(t_0,x_0),
\label{Eq:prop0_eq1}
\end{equation}
with $B_r(t_0,x_0) $ the open ball of radius $r$ and center $(t_0,x_0)$. Let $(t_n,x_n)_{n\geq 1}$ be a sequence in $B_r(t_0,x_0)$ such that $\big(t_n, x_n,V(t_n,x_n)\big) \rightarrow \big(t_0, x_0,V_*(t_0,x_0)\big) $, $\nu^1 \in \mathcal{U}^1_{t_n}$, $\nu^2$ be the constant control $\nu^2 =  \bar{u}^{2}$, and $\nu = (\nu^1, \nu^2)$. Now write $X^n_. = X^{\nu}_{t_n,x_n}(.) $ for the solution of \eqref{Eq:dynamics_x_u} with control $\nu$ and initial condition $X^n_{t_n} = x_n$, and consider the stopping time 
\begin{equation*}
\theta_n = \inf\{ s>t_n;\, (s,X^n_s) \not\in B_r(t_0,x_0)  \}.
\end{equation*}
Note that $\theta_n < T$ since $t_0 + r < T$. Using \eqref{Eq:prop0_eq1} gives 
\begin{equation*}
\Esp[\int_{t_n}^{\theta_n}[- \partial_t \psi  + H^u(,\psi,\psi_x,\psi_{xx} ) ](s,X^{n}_{s})\, ds] < 0.
\end{equation*}
By the arbitrariness of $\nu^1$, we get 
\begin{equation}
\inf_{\nu^{2} \in \mathcal{U}^2_{t_n} } \sup_{\nu^1 \in \mathcal{U}^1_{t_n}} \, \Esp[\int_{t_n}^{\theta_n}[- \partial_t\psi  + H^u(,\psi,\psi_x,\psi_{xx} ) ](s,X^{n}_{s})\, ds] < 0.
\label{Eq:prop0_eq1_1}
\end{equation}
Applying It\^{o}'s formula to $\psi$ and using \eqref{Eq:prop0_eq1_1}, we deduce that
\begin{align*}
\psi(t_n,x_n)  < \inf_{\nu^{2} \in \mathcal{U}^2_{t_n} } \sup_{\nu^1 \in \mathcal{U}^1_{t_n}} \, \Esp[ \psi(\theta_n,X^{n}_{\theta_n})].
\end{align*}
Now observe that $\phi > \psi + \eta$ on $\big([0,T]\times \mathbb{R}^d\big) \setminus B_r(t_0, x_0)$ for some $\eta >0$. Hence, the above inequality implies that $\psi(t_n,x_n) < \inf_{\nu^{2} \in \mathcal{U}^2_{t_n} } \sup_{\nu^1 \in \mathcal{U}^1_{t_n}} \Esp[ \phi(\theta_n,X^{n}_{\theta_n})] - \eta$. Since $(\psi -V)(t_n, x_n) \rightarrow 0$, we can then find $n$ large enough so that 
\begin{equation*}
V(t_n,x_n) < \inf_{\nu^{2} \in \mathcal{U}^2_{t_n} } \sup_{\nu^1 \in \mathcal{U}^1_{t_n}}\Esp[ \phi(\theta_n,X^{n}_{\theta_n})] - \eta/2.
\end{equation*}
Meanwhile, Theorem \ref{Theo:DPP1} ensures that
\begin{equation*}
V(t_n,x_n) \geq \inf_{\nu^{2} \in \mathcal{U}^2_{t_n} } \sup_{\nu^1 \in \mathcal{U}^1_{t_n}} \, \Esp[ \phi(\theta_n,X^{\nu}_{t_n,X_n}(\theta_n))].
\end{equation*}
which gives the required contradiction.

\item We now move to the proof of \eqref{Eq:prop0_subsol_visco}, which is similar to that of \eqref{Eq:prop0_supersol_visco}. We assume to the contrary that there is $(t_0,x_0)\in \mathbb{S}$ together with a smooth function $\phi:\mathbb{S} \rightarrow \mathbb{R}$ satisfying 
\begin{equation*}
0 = (V^*-\phi)(t_0,x_0) > (V^*-\phi)(t,x), \quad \forall (t,x) \in [0,T] \times \mathbb{R}^d,\quad (t,x) \ne (t_0,x_0),
\end{equation*} 
such that 
\begin{equation*}
\big(- \partial_t \phi  + H(.,\phi,\phi_x,\phi_{xx} ) \big) (t_0,x_0) > 0.
\end{equation*}
For $\tilde{\epsilon}>0$, define $\psi$ by 
\begin{equation*}
\psi(t,x) = \phi(t,x) + \tilde{\epsilon} \big( |t-t_0|^2 + \|x - x_0\|^4 \big),
\end{equation*}
and note that $\psi$ converges uniformly on compact sets to $\phi$ as $\tilde{\epsilon} \rightarrow 0$. Since $H$ is lower-semicontinuous and $(\psi,\psi_t,\psi_x,\psi_{xx})(t_0,x_0) ) = (\phi,\phi_t,\phi_x,\phi_{xx})(t_0,x_0)$, we can choose $\tilde{\epsilon} > 0$ small enough so that there exist $r > 0$, with $t_0 + r < T$, such that for any $u^{1} \in \mathbb{U}^1$ one can find some $\bar{u}^2 \in  \mathbb{U}^2$ satisfying
\begin{equation}
\big(- \partial_t \psi  + H^{(u^1,\bar{u}^2)}(.,\psi,\psi_x,\psi_{xx} ) \big) (t_0,x_0) > 0, \qquad \forall (t,x) \in B_r(t_0,x_0),
\label{Eq:prop0_eq1_11}
\end{equation}
with $B_r(t_0,x_0) $ the open ball of radius $r$ and center $(t_0,x_0)$. Let $(t_n,x_n)_{n\geq 1}$ be a sequence in $B_r(t_0,x_0)$ such that $\big(t_n, x_n,V(t_n,x_n)\big) \rightarrow \big(t_0, x_0,V^*(t_0,x_0)\big) $, $\nu^1 \in \mathcal{U}^1_{t_n}$, $\nu^2$ be the constant control $\nu^2 =  \bar{u}^{2}$, and $\nu = (\nu^1, \nu^2)$. Now write $X^n_. = X^{\nu}_{t_n,x_n}(.) $ for the solution of \eqref{Eq:dynamics_x_u} with control $\nu$ and initial condition $X^n_{t_n} = x_n$, and consider the stopping time 
\begin{equation*}
\theta_n = \inf\{ s>t_n;\, (s,X^n_s) \not\in B_r(t_0,x_0)  \}.
\end{equation*}
Note that $\theta_n < T$ since $t_0 + r < T$. Using \eqref{Eq:prop0_eq1_11} gives 
\begin{equation*}
\Esp[\int_{t_n}^{\theta_n}[- \partial_t \psi  + H^u(,\psi,\psi_x,\psi_{xx} ) ](s,X^{n}_{s})\, ds] > 0.
\end{equation*}
By the arbitrariness of $\nu^1$, we get 
\begin{equation}
\inf_{\nu^{2} \in \mathcal{U}^2_{t_n} } \sup_{\nu^1 \in \mathcal{U}^1_{t_n}} \, \Esp[\int_{t_n}^{\theta_n}[- \partial_t\psi  + H^u(,\psi,\psi_x,\psi_{xx} ) ](s,X^{n}_{s})\, ds] > 0.
\label{Eq:prop0_eq1_12}
\end{equation}
Applying It\^{o}'s formula to $\psi$ and using \eqref{Eq:prop0_eq1_12}, we deduce that
\begin{align*}
\psi(t_n,x_n)  > \inf_{\nu^{2} \in \mathcal{U}^2_{t_n} } \sup_{\nu^1 \in \mathcal{U}^1_{t_n}} \, \Esp[ \psi(\theta_n,X^{n}_{\theta_n})].
\end{align*}
Now observe that $\psi > \phi + \eta$ on $\big([0,T]\times \mathbb{R}^d\big) \setminus B_r(t_0, x_0)$ for some $\eta >0$. Hence, the above inequality implies that $\psi(t_n,x_n) > \inf_{\nu^{2} \in \mathcal{U}^2_{t_n} } \sup_{\nu^1 \in \mathcal{U}^1_{t_n}} \Esp[ \phi(\theta_n,X^{n}_{\theta_n})] + \eta$. Since $(\psi -V)(t_n, x_n) \rightarrow 0$, we can then find $n$ large enough so that 
\begin{equation*}
V(t_n,x_n) > \inf_{\nu^{2} \in \mathcal{U}^2_{t_n} } \sup_{\nu^1 \in \mathcal{U}^1_{t_n}}\Esp[ \phi(\theta_n,X^{n}_{\theta_n})] + \eta/2.
\end{equation*}
Meanwhile, Theorem \ref{Theo:DPP1} ensures that
\begin{equation*}
V(t_n,x_n) \leq \inf_{\nu^{2} \in \mathcal{U}^2_{t_n} } \sup_{\nu^1 \in \mathcal{U}^1_{t_n}} \, \Esp[ \phi(\theta_n,X^{\nu}_{t_n,X_n}(\theta_n))],
\end{equation*}
which gives the required contradiction.
 
\item Since $V_*  \leq V \leq V^*$, the comparison principle in Lemma \ref{lem:comparison_principle_recall} below (see \citep{pham2009continuous}) shows that $V$ is the unique viscosity solution of \eqref{Eq:prop0_sol_visco_0} which completes the proof. 
\end{enumerate}
\end{proof}

\begin{lem}[Comparision principle]
Assume the same conditions as in Proposition \ref{prop0:Visco_sol}. Let $u$ and $v$ be respectively an upper-semicontinuous viscosity subsolution and a lower-semicontinuous viscosity supersolution of \eqref{Eq:prop0_sol_visco_0} such that $u(T,.) \leq v(T,.)$ on $\mathbb{R}^d$. Then, $u \leq v$ on $[0,T) \times \mathbb{R}^d$.
\label{lem:comparison_principle_recall}
\end{lem}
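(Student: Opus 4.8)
The final statement is the classical comparison principle for degenerate parabolic Isaacs equations, and under the standing hypotheses of Proposition~\ref{prop0:Visco_sol} (Lipschitz $b,\tilde\sigma$, quadratic growth of $f$) it is in fact covered by the comparison theorem in \citep{pham2009continuous}; the plan is to reproduce the argument via the Crandall--Ishii--Lions doubling-of-variables method. First I would record the two structural properties of the Hamiltonian $H$ that the method needs. \emph{Degenerate ellipticity:} if $A \ge B$ in the order of symmetric matrices then $\Tr[\tilde\sigma\tilde\sigma^\top(A-B)] \ge 0$, so $H^{(u^1,u^2)}(t,x,p,A) \le H^{(u^1,u^2)}(t,x,p,B)$ for every $(u^1,u^2)$, and since $\max_{u^1}\min_{u^2}$ preserves the inequality, $H(t,x,p,A) \le H(t,x,p,B)$. \emph{Uniform continuity in $x$:} from $\|b(t,x,u)-b(t,y,u)\| \le K\|x-y\|$ and $\|\tilde\sigma(t,x,u)-\tilde\sigma(t,y,u)\| \le K\|x-y\|$, uniform in $t$ and in the controls (which range over the compact sets $\mathbb{U}^1,\mathbb{U}^2$), one obtains the key estimate: for $p=(x-y)/\varepsilon$ and any symmetric $X,Y$ satisfying the matrix inequality furnished by Ishii's lemma, $H(t,y,p,Y)-H(t,x,p,X) \le (C/\varepsilon)\|x-y\|^2$ with $C$ depending only on $K$, the drift contribution being bounded by $K\|x-y\|\,\|p\|$ and the trace contribution by applying the matrix inequality to the columns of $\tilde\sigma(t,x,u)$ and $\tilde\sigma(t,y,u)$, uniformly over $(u^1,u^2)$.

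Next I would set up a contradiction and localize. Suppose $M := \sup_{[0,T)\times\mathbb{R}^d}(u-v) > 0$. Because $u,v$ inherit only quadratic growth from $|f(x)| \le K(1+\|x\|^2)$, I would perturb the supersolution: for $\delta>0$ set $v^\delta(t,x) = v(t,x) + \delta\,e^{\lambda(T-t)}(1+\|x\|^2)$, and check --- using the linear growth of $b,\tilde\sigma$ --- that for $\lambda$ large $v^\delta$ is still a viscosity supersolution, that $v^\delta(T,\cdot) \ge v(T,\cdot) \ge u(T,\cdot)$, and that $u-v^\delta \to -\infty$ as $\|x\|\to\infty$, locally uniformly in $t$. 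Symmetrically, for $\eta>0$ put $u^\eta(t,x) = u(t,x) - \eta(T-t)$, which is a \emph{strict} subsolution, $-w_t + H(\cdot,w_x,w_{xx}) \le -\eta$, with $u^\eta(T,\cdot) = u(T,\cdot)$. For $\delta$ small, $M^\eta_\delta := \sup(u^\eta-v^\delta) > 0$; by coercivity it is attained on a bounded set, and by the terminal ordering not at $t=T$.

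Then I would double variables in space: for $\varepsilon>0$ let $\Phi_\varepsilon(t,x,y) = u^\eta(t,x) - v^\delta(t,y) - \tfrac{1}{2\varepsilon}\|x-y\|^2$ and pick a maximizer $(t_\varepsilon,x_\varepsilon,y_\varepsilon)$. The usual penalization estimates give $\|x_\varepsilon-y_\varepsilon\|^2/\varepsilon \to 0$ and, along a subsequence, $(t_\varepsilon,x_\varepsilon,y_\varepsilon)\to(\hat t,\hat x,\hat x)$ with $\hat t<T$ and $u^\eta(\hat t,\hat x)-v^\delta(\hat t,\hat x)=M^\eta_\delta$; hence $t_\varepsilon<T$ for $\varepsilon$ small. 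The parabolic version of Ishii's lemma at $(t_\varepsilon,x_\varepsilon)$ and $(t_\varepsilon,y_\varepsilon)$ then produces a common time-slope $a\in\mathbb{R}$, the gradient $p_\varepsilon=(x_\varepsilon-y_\varepsilon)/\varepsilon$, and symmetric matrices $X,Y$ obeying the matrix inequality, with $-a + H(t_\varepsilon,x_\varepsilon,p_\varepsilon,X) \le -\eta$ (strict subsolution) and $-a + H(t_\varepsilon,y_\varepsilon,p_\varepsilon,Y) \ge 0$ (supersolution). Subtracting gives $H(t_\varepsilon,y_\varepsilon,p_\varepsilon,Y) - H(t_\varepsilon,x_\varepsilon,p_\varepsilon,X) \ge \eta$, which together with the structural estimate of the first step forces $\eta \le (C/\varepsilon)\|x_\varepsilon-y_\varepsilon\|^2 \to 0$ --- a contradiction. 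Thus $M^\eta_\delta \le 0$ for all $\eta,\delta>0$, and sending $\eta,\delta\to 0$ yields $M\le 0$, i.e. $u\le v$ on $[0,T)\times\mathbb{R}^d$.

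The main obstacle is the unboundedness of the domain together with the quadratic growth: one must build the perturbation $v^\delta$ so that it is simultaneously a supersolution of the same nonlinear ($\max\min$) equation and coercive enough to localize the doubled supremum --- this is precisely where the linear growth of $b,\tilde\sigma$ and the explicit form of $H$ come in --- while also controlling the behaviour near the terminal time $t=T$. By contrast, degenerate ellipticity, passing the Lipschitz estimate through $\max_{u^1}\min_{u^2}$, and the trace bound from Ishii's lemma are all routine.
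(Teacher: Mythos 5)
The paper does not actually prove this lemma: it is stated as a recalled result, and the proof of Proposition \ref{prop0:Visco_sol} simply invokes it with a pointer to \citep{pham2009continuous}. Your sketch reconstructs precisely the argument behind that citation --- the Crandall--Ishii--Lions doubling of variables, with degenerate ellipticity of $H^{(u^1,u^2)}$ preserved under $\max_{u^1}\min_{u^2}$, the $(C/\varepsilon)\|x-y\|^2$ structural estimate from the Lipschitz coefficients and Ishii's matrix inequality, the $-\eta(T-t)$ strict-subsolution perturbation, and the growth perturbation of the supersolution to localize on the unbounded domain. All of these steps are sound, and the identification of the localization as the only non-routine point is exactly right. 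One caveat: the paper only assumes $\mathbb{U}^1,\mathbb{U}^2$ closed, so the uniformity in the controls of your Lipschitz and growth constants should be stated as a hypothesis (it holds in the application, where the controls live in $[u^{\min},u^{\max}]$).

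There is, however, one step that fails as written. You take $v^\delta = v + \delta e^{\lambda(T-t)}(1+\|x\|^2)$ and claim $u - v^\delta \to -\infty$ as $\|x\|\to\infty$. Since $u$ and $v$ are only known to satisfy $|u|+|v| \leq C(1+\|x\|^2)$ (inherited from $|f(x)|\leq K(1+\|x\|^2)$), the difference $u-v$ may itself grow like $C\|x\|^2$, and subtracting $\delta\|x\|^2$ with $\delta$ small does not force coercivity; the penalized supremum need not be attained. The standard fix is to let the perturbation grow strictly faster than the solutions, e.g. $\chi(t,x)=e^{\lambda(T-t)}\bigl(1+\|x\|^4\bigr)$ (or $(1+\|x\|^2)^p$ with $p>1$): the linear growth of $b$ and $\tilde\sigma$ still gives $\left|b^\top \nabla\chi\right| + \left|\Tr[\tilde\sigma\tilde\sigma^\top \nabla^2\chi]\right| \leq C\chi$ uniformly in the controls, so $v+\delta\chi$ remains a supersolution for $\lambda$ large, while now $u-v-\delta\chi\to-\infty$. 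With that replacement the rest of your argument goes through unchanged.
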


\section{Analysis of Optimal Adaptive Learning Rate and Batch Size}

This section will be devoted to analyzing the optimal learning rate and batch size but also discussing their implications for GANs training.

\subsection{Optimal Learning Rate}
\manuallabel{sec:OptLearningRate}{5.1}

Note that problem \eqref{Eq:ValFunctDef} is a special case of the more general framework in Section \ref{subsec:mkv_diff_ctrl_pbm} since the dynamic of the process $q$ in \eqref{Eq:sde} is a diffusion of the same form as Equation \eqref{Eq:dynamics_x_u}. Indeed, one can simply take the variables $X_t^\nu$, $\nu$, $b$, and $\tilde{\sigma}$ in \eqref{Eq:dynamics_x_u} as
$$
\left\{
\begin{array}{ccl}
X_t^\nu = q(t),   &  & b(t,x,v) = \left(
\begin{array}{c}
     v^w g_w(x) \\
     -v^\theta g_w(x) 
\end{array}
\right), \\
 & & \\
\nu = u, &   & \tilde{\sigma}(t,x,v)  = \left(
\begin{array}{cc}
     v^w \sqrt{\bar{\eta}^{w}} \sigma_w(x) & 0 \\
     0                            & v^\theta \sqrt{\bar{\eta}^{\theta}} \sigma_\theta(x)
\end{array}
\right),
\end{array}
\right.
$$
for any $x$ and $v = (v^w,v^\theta)$, and apply the general results of Section \ref{subsec:mkv_diff_ctrl_pbm} to analyze the optimal adaptive learning rate, assuming
\begin{Assumption}
\begin{enumerate}[label = D.\arabic*]
\item There exists a constant $L^1$ such that for $\phi = g_w,\,g_\theta,\sigma_w,\,\sigma_\theta$, we have 
\begin{equation*}
\begin{array}{ll}
\|\phi(w,\theta) - \phi(w',\theta')\|  \leq L^1 \big(\|w - w'\| + \|\theta - \theta'\|\big),&\\
\|\phi(w,\theta)\|\leq L^1 \big(1 + \|w\| + \|\theta\|\big),&
\end{array}
\end{equation*}
for any $(w,w',\theta,\theta')\in \big(\mathbb{R}^{M}\big)^2 \times \big(\mathbb{R}^{N}\big)^2$.
\item There exists a constant $K^1$ such that 
\begin{equation*}
|g(w,\theta)|  \leq K^1 \big(1 + \|w\|^2 + \|\theta\|^2 \big), \, \forall (w,\theta)\in \mathbb{R}^{M}\times \mathbb{R}^{N}.
\end{equation*}
\end{enumerate}
\label{Assump:D}
\end{Assumption}

In particular, it is clear that the value function $v$ is a solution to  the following Isaac-Bellman equation:
\begin{equation}
\left\{
\begin{array}{ll}
v_t + \max\min_{(u^w, u^\theta \in [u^{\min},u^{\max}])} & \big\{\big(u^w g_{w}^\top v_w - u^{\theta} g_{\theta}^\top v_\theta \big)\vspace{0.2cm}\\
& \hspace{-4cm}+ \frac{1}{2} \big[ (u^w)^2 (\bar{\Sigma}^w:v_{ww}) +    (u^\theta)^2 (\bar{\Sigma}^\theta:v_{\theta \theta})\big]\big\} = 0, \vspace{0.2cm}\\
v(T,\cdot) = g(\cdot),& 
\end{array}
\right.
\label{Eq:valfct_diffeq01}
\end{equation} 
with $A:B =  \Tr[A^\top B]$ for any real matrices $A$ and $B$. More precisely, we have

\begin{prop} 
Assume Assumption \ref{Assump:D}. Then 
\begin{itemize}
    \item  The value function $v$ defined in \eqref{Eq:ValFunctDef} is the unique viscosity solution of \eqref{Eq:valfct_diffeq01}.
    \item When $v \in \mathcal{C}^{1,2}([0,T],\mathbb{R}^M \times \mathbb{R}^N)$, the optimal learning rate $\bar{u}^w$ and $\bar{u}^\theta$ are given by
\begin{align*}
\hspace{0.5cm}
\bar{u}^w(t) = \left\{
\begin{array}{lll}
u^{\min} \vee \left( u^{w*}(t)\wedge u^{\max}\right), & \hspace{0.2cm} & \text{if } \, \left(\bar{\Sigma}^w:v_{ww}\right)\big(t,q(t)\big)  < 0,\\
\\
u^{\max}, & \hspace{0.2cm} & \text{if } \, |u^{\max}-u^{w*}(t)| \geq |u^{\min}-u^{w*}(t)|,\\
\\
u^{\min}, & \hspace{0.2cm} & \text{otherwise,}
\end{array} 
\right.
\end{align*}
and 
\begin{align*}
\bar{u}^\theta(t) = \left\{
\begin{array}{lll}
u^{\min} \vee \left( u^{\theta*}(t) \wedge u^{\max} \right), & \hspace{0.2cm} & \text{if }\, \left(\bar{\Sigma}^\theta:v_{\theta \theta}\right) \big(t,q(t)\big) > 0,\\
\\
u^{\max}, & \hspace{0.2cm} & \text{if } \, |u^{\max}-u^{\theta*}(t)| \geq |u^{\min}-u^{\theta*}(t)|,\\
\\
u^{\min}, & \hspace{0.2cm} & \text{otherwise,}
\end{array} 
\right.
\end{align*}
with $u^{w*}(t) = \left(\cfrac{-g_{w}^\top v_w}{\bar{\Sigma}^w:v_{ww}}\right)\big(t,q(t)\big)$, $ u^{\theta*}(t) = \left(\cfrac{g_{\theta}^\top v_\theta}{\bar{\Sigma}^\theta:v_{\theta \theta}} \right)\big(t,q(t)\big)$, and $\bar{\Sigma}^w$ and $\bar{\Sigma}^\theta$ as
\begin{equation*}
\left\{
\begin{array}{lll}
\bar{\Sigma}^w (t,q) = \{\bar{\sigma}^w_t (\bar{\sigma}^{w}_t)^{\top}\}(q), & \bar{\Sigma}^\theta(t,q) =  \{\bar{\sigma}^\theta_t (\bar{\sigma}^{\theta})^{\top}_t\}(q),&\\
\\
\bar{\sigma}^w_t (q)  = \sqrt{\bar{\eta}^w(t)} \sigma^w (q), & \bar{\sigma}^\theta_t (q)  = \sqrt{\bar{\eta}^\theta(t)} \sigma^\theta(q),&
\end{array}
\right.
\end{equation*}
for any $t \in \mathbb{R}_+$, and $q = (w,\theta)\in \mathbb{R}^{M}\times \mathbb{R}^{N}$.
\end{itemize}
\label{Prop:OptiLR}
\end{prop}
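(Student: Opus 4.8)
The plan is to derive the first bullet from the abstract theory of Section~\ref{subsec:mkv_diff_ctrl_pbm} and to obtain the second by an elementary pointwise optimization once regularity of $v$ is available. The first step is to recognize that \eqref{Eq:ValFunctDef} is \emph{exactly} the controlled-Markov-diffusion problem of Section~\ref{subsec:mkv_diff_ctrl_pbm} for the data $(X^\nu,\nu,b,\tilde\sigma,f)$ displayed before the proposition, with $f=g$. Here Assumption~\ref{Assump:D}.1, combined with the fact that $u^w,u^\theta$ range over the \emph{compact} interval $[u^{\min},u^{\max}]$ and $\bar\eta^w,\bar\eta^\theta$ over $[\bar\eta^{\min},1]$, makes $b(t,\cdot,\cdot)$ and $\tilde\sigma(t,\cdot,\cdot)$ globally Lipschitz in the state and of linear growth, uniformly in $(t,v)$; hence \eqref{Eq:sde2} has a unique strong solution obeying the usual estimate $\Esp[\sup_{s\le T}\|q(s)\|^2\mid q(t)=q]\le C(1+\|q\|^2)$. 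Together with the quadratic bound on $g$ from Assumption~\ref{Assump:D}.2 this yields simultaneously that the admissibility constraints in $\mathcal{U}^w,\mathcal{U}^\theta$ are automatically satisfied, that $v$ is locally bounded (Assumption~\ref{Assump:B}), and — via $L^2$-continuity of the stochastic flow $(t,q)\mapsto q^\nu_{t,q}(\cdot)$ and dominated convergence — that $J(\cdot;u)$ is continuous (Assumption~\ref{Assump:C}). Assumption~\ref{Assump:A} (independence, causality, stability under concatenation, consistency with deterministic initial data) is inherited from the flow property of strong solutions to \eqref{Eq:sde2} driven by $\mathbb{F}$-adapted controls, exactly as in \citep{bouchard2011weak}.

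With the hypotheses of Section~\ref{subsec:mkv_diff_ctrl_pbm} verified, I would invoke Proposition~\ref{prop0:Visco_sol}: $v_*$ (resp. $v^*$) is a viscosity super- (resp. sub-) solution of $-v_t+H(\cdot,v,v_x,v_{xx})=0$, and the comparison principle of Lemma~\ref{lem:comparison_principle_recall} forces $v_*=v=v^*$, so $v$ is the unique viscosity solution. It then remains to rewrite this equation as \eqref{Eq:valfct_diffeq01}, which is a direct computation: taking $p=(v_w,v_\theta)$ and $A=v_{xx}$, the block form of $b$ and the block-\emph{diagonal} form of $\tilde\sigma\tilde\sigma^\top$ (its blocks being $(u^w)^2\bar\Sigma^w$ and $(u^\theta)^2\bar\Sigma^\theta$, with $\bar\Sigma^w,\bar\Sigma^\theta$ as in the statement) give $-b^\top p-\tfrac12\Tr[\tilde\sigma\tilde\sigma^\top A]=-\bigl(u^w g_w^\top v_w-u^\theta g_\theta^\top v_\theta\bigr)-\tfrac12\bigl[(u^w)^2(\bar\Sigma^w:v_{ww})+(u^\theta)^2(\bar\Sigma^\theta:v_{\theta\theta})\bigr]$, the cross Hessian blocks dropping out of the trace against a block-diagonal matrix; passing to the max--min and multiplying by $-1$ produces \eqref{Eq:valfct_diffeq01} with terminal data $v(T,\cdot)=g$.

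For the second bullet, assume $v\in\mathcal{C}^{1,2}$, so that $v$ solves \eqref{Eq:valfct_diffeq01} classically and, at each $(t,q)$, the inner max--min is the optimization of the genuine function $(u^w,u^\theta)\mapsto \Phi_w(u^w)+\Phi_\theta(u^\theta)$, where $\Phi_w(u^w)=u^w g_w^\top v_w+\tfrac12(u^w)^2(\bar\Sigma^w:v_{ww})$, $\Phi_\theta(u^\theta)=-u^\theta g_\theta^\top v_\theta+\tfrac12(u^\theta)^2(\bar\Sigma^\theta:v_{\theta\theta})$, all coefficients evaluated at $(t,q(t))$. Because the objective is separable, the max--min decouples into $\max_{u^w\in[u^{\min},u^{\max}]}\Phi_w$ and $\min_{u^\theta\in[u^{\min},u^{\max}]}\Phi_\theta$ (in particular the Isaacs condition holds trivially), and each is a scalar quadratic optimized over an interval. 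If $\bar\Sigma^w:v_{ww}<0$ then $\Phi_w$ is strictly concave and its maximizer is the unconstrained vertex $u^{w*}=-g_w^\top v_w/(\bar\Sigma^w:v_{ww})$ projected onto $[u^{\min},u^{\max}]$, i.e.\ $u^{\min}\vee(u^{w*}\wedge u^{\max})$; if $\bar\Sigma^w:v_{ww}\ge 0$ then $\Phi_w$ is convex (or linear) and, writing $\Phi_w(u)=\tfrac12(\bar\Sigma^w:v_{ww})(u-u^{w*})^2+\mathrm{const}$, the maximum over the interval is attained at the endpoint farther from $u^{w*}$, namely $u^{\max}$ when $|u^{\max}-u^{w*}|\ge|u^{\min}-u^{w*}|$ and $u^{\min}$ otherwise. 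The analysis of $\min\Phi_\theta$ is identical after exchanging the roles of convexity and concavity, which is precisely why its interior case is governed by $\bar\Sigma^\theta:v_{\theta\theta}>0$; this gives the stated formulas for $\bar u^w$ and $\bar u^\theta$.

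I expect the only substantive work to be the first step, i.e.\ checking that the abstract Assumptions~\ref{Assump:A}--\ref{Assump:C} genuinely follow from Assumption~\ref{Assump:D} for the controlled SDE \eqref{Eq:sde2}; the delicate points are the local boundedness of $v$ and the continuity of $J(\cdot;u)$, which rest on uniform-in-control $L^2$ moment bounds and flow continuity for \eqref{Eq:sde2}. A secondary, largely bookkeeping, subtlety is to confirm that the feedback maps produced in the last step define admissible càdlàg controls in $\mathcal{U}^w,\mathcal{U}^\theta$: when $v\in\mathcal{C}^{1,2}$ these maps are Borel functions of $(t,q(t))$, so this follows by a measurable-selection argument, but it should be stated explicitly. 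The viscosity characterization and the pointwise optimization are then, respectively, a direct application of Proposition~\ref{prop0:Visco_sol} and elementary one-variable calculus.
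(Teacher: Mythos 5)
Your proposal is correct and follows essentially the same route as the paper: the first bullet is obtained by invoking Proposition \ref{prop0:Visco_sol} under Assumption \ref{Assump:D}, and the second by pointwise maximization/minimization of the scalar quadratics $u \mapsto u\,(g_w^\top v_w) + \tfrac12 u^2 (\bar{\Sigma}^w\!:\!v_{ww})$ and $u \mapsto -u\,(g_\theta^\top v_\theta) + \tfrac12 u^2 (\bar{\Sigma}^\theta\!:\!v_{\theta\theta})$ over $[u^{\min},u^{\max}]$. The paper's proof is far terser (it omits the verification of Assumptions \ref{Assump:A}--\ref{Assump:C} and the case analysis you spell out), so your version simply supplies details the authors left implicit.
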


\begin{proof}{(For Proposition \ref{Prop:OptiLR}).}
Since Assumption \ref{Assump:D} holds, one can simply use Proposition \ref{prop0:Visco_sol} to show that $v$ is the unique viscosity solution of \eqref{Eq:valfct_diffeq01}. Moreover, the control $\bar{u}^w(t)$ maximizes the quadratic function $u : \rightarrow u \big( g_{w}^\top v_w \big) ( t,q(t) ) + \frac{1}{2} u^2 \big( \bar{\Sigma}^w:v_{ww}\big) ( t,q(t) )$, and similarly the control $\bar{u}_t^\theta $ minimizes $u : \rightarrow -u \big( g_{\theta}^\top v_\theta \big) ( t,q(t) ) + \frac{1}{2} u^2 \big(\bar{\Sigma}^\theta:v_{\theta \theta}\big) ( t,q(t) )$. Direct computation completes the proof.
\end{proof}

\paragraph{Learning rate and GANs training.}
\manuallabel{sec:LRAndDivergence}{3.2}

Now we can see  the explicit dependency of optimal learning rate on the convexity of the objective function, and its relation to Newton's algorithm.  Specifically, \begin{itemize}
\item Proposition \ref{Prop:OptiLR} provides a two-step scheme for the selection of the optimal learning rate
\begin{enumerate}[label =  Step \arabic*., leftmargin = 1.5cm]
\item Use the Isaac-Bellman equation to get $\bar{u} = (\bar{u}^w,\bar{u}^\theta)$
\item Given $\bar{u}$, apply the gradient algorithm with the optimal learning rate  $\bar{u} \bullet \bar{\eta}$.
\end{enumerate}

\item  To get the expression of the optimal adaptive learning rate in Proposition \ref{Prop:OptiLR}, we need some regularity conditions on the value function $v$ such as $v \in \mathcal{C}^{1,2}([0,T],\mathbb{R}^M \times \mathbb{R}^N)$.  Conditions for such a regularity can be found in 
\citep{pimentel2019regularity}. 
\item When the value function $v$ does not satisfy the regularity requirement $v \in \mathcal{C}^{1,2}([0,T],\mathbb{R}^M \times \mathbb{R}^N)$, it is standard to use discrete time approximations \citep{barles1991convergence,kushner2001numerical,wang2008maximal}. Note that these approximations are shown to converge towards the value function. 

\item  The introduction of the clipping parameter $u^{\max}$ is closely related to the convexity issue discussed for GANs in Section \ref{sec:GANsTrainConv}. When the convexity condition $\bar{\Sigma}^w:v_{ww} < 0$ is violated, the learning rate takes the maximum value $u^{\max}$ or minimum value $u^{\min}$ to escape as quickly as possible from this non-concave region. The clipping parameter $u^{\max}$ is also used to prevent explosion in GANs training. Conditions under which explosion occurs are detailed in Proposition \ref{Prop:LRDivergence}. 

\item  The control $(\bar{u}^w,\bar{u}^\theta)$ of Proposition \ref{Prop:OptiLR} is closely related to  the standard Newton algorithm. To see this, take $\bar{\eta} = (1,1)$, $M = N = 1$, $\sigma^w = g_w$, $ \sigma^\theta = g_\theta$, and replace the value function $v$ by the suboptimal choice $g$. For such a configuration of the parameters and with the convexity conditions 
$$ 
\hspace{-0.6cm}\big(\bar{\Sigma}^w:v_{ww}\big) =  |g_w|^2 g_{ww} < 0, \quad \big(\bar{\Sigma}^\theta:v_{\theta \theta}\big) = |g_\theta|^2 g_{\theta \theta}> 0,
$$ 

the controls $\bar{u}^w$ and $\bar{u}^\theta$ become
\begin{equation}
\hspace{-0.6cm}\bar{u}^w(t)  = u^{\min} \vee \cfrac{-1}{ \bar{\eta}^w(t) g_{ww}(q(t))} \wedge u^{\max}, \quad \bar{u}^\theta (t)  = u^{\min} \vee \cfrac{1}{ \bar{\eta}^\theta(t) g_{\theta \theta}(q(t))} \wedge u^{\max}.
\label{Eq:ConnecNewtAlgo}
\end{equation} 
In absence of the clipping parameters $u^{\max}$ and $u^{\min}$, the variables $\bar{u}^w$ and $\bar{u}^\theta$ are exactly the ones used for Newton's algorithm.
\end{itemize}

\paragraph{Learning rate and GANs convergence revisited.}
We can now further analyze the impact of the learning rate on the convergence of GANs, generalizing the example of Section \ref{sec:GANTrainParamTun} in which poor choices of the learning rate destroy the convergence.\\ 

Let $\epsilon > 0$, and $\tilde{u} = (\tilde{u}^w,\tilde{u}^\theta)$ be 
$$
\tilde{u}(t) = \left(\cfrac{-2 |g_{w}|^2 }{\bar{\Sigma}^w:g_{ww}},\, \cfrac{2|g_{\theta}|^2}{\bar{\Sigma}^\theta:g_{\theta \theta}}\right) \big(t,q(t)\big), \qquad \forall t \geq 0.
$$

We assume the existence of  $\gamma>0$ such that
$$
\gamma \leq -\left(\bar{\Sigma}^w:g_{ww}\right)(t,q), \qquad \gamma \leq \left(\bar{\Sigma}^\theta:g_{\theta \theta}\right)(t,q),
$$
for every $t\geq 0$, and $q = (w,\theta) \in \mathbb{R}^{M} \times \mathbb{R}^{N}$. Then,

\begin{prop}
For any control process $u = (u^{w},u^{\theta}) \in \mathcal{U}^w \times \mathcal{U}^\theta$ such that 
\begin{itemize}

\item $u^{w} \geq (\tilde{u}^w \vee 1) + \epsilon$, there exists $\tilde{\epsilon} > 0$ satisfying
\begin{align*}
J(T,0,q_0;u) &\leq -\tilde{\epsilon} \times T + \Esp\left[ \int_{0}^T \big\{ - u^{\theta} |g_{\theta}|^2 + \frac{1}{2}  (u^\theta)^2 (\bar{\Sigma}^\theta:g_{\theta \theta})\big\}\big(s,q(s)\big)\,ds|q_0\right]\\
    & = -\tilde{\epsilon} \times T + L^1(T,q_0;u),
\numberthis\label{Eq:expl1}
\end{align*}
for any $(T,q_0) \in \mathbb{R}_+\times \mathbb{R}^{M+N}$. 

\item $u^{\theta} \geq (\bar{u}^\theta \vee 1) + \epsilon$, there exists $\tilde{\epsilon} > 0$ such that 
\begin{align*}
J(T,0,q_0;u) &\geq  \tilde{\epsilon} \times T + \Esp\left[ \int_{0}^T \big\{ u^{w} |g_{w}|^2  + \frac{1}{2}  (u^w)^2 (\bar{\Sigma}^w:g_{ww})\big\}\big(s,q(s)\big)\,ds|q_0\right]\\
    &  = \tilde{\epsilon} \times T + L^2(T,q_0;u),
\numberthis \label{Eq:expl2}
\end{align*}
for any $(T,q_0) \in \mathbb{R}_+\times \mathbb{R}^{M+N}$.
\end{itemize}
\label{Prop:LRDivergence}
\end{prop}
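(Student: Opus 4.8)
The plan is to apply Itô's formula to $g(q(s))$ along the controlled dynamics \eqref{Eq:sde2} and then isolate the $u^w$-terms in the drift, exploiting the hypothesis that $u^w$ is pushed past the ``Newton threshold'' $\tilde u^w$. Starting from $J(T,0,q_0;u) = \Esp[g(q(T))\,|\,q(0)=q_0]$ and applying Itô to the diffusion \eqref{Eq:sde2}, the stochastic integrals vanish in expectation (under the integrability built into $\mathcal{U}^w\times\mathcal{U}^\theta$ and the growth bounds in Assumption \ref{Assump:D}), so that
\begin{align*}
J(T,0,q_0;u) = g(q_0) + \Esp\Big[\int_0^T \big\{\, & u^w g_w^\top g_w - u^\theta g_\theta^\top g_\theta \\
& + \tfrac12 (u^w)^2(\bar\Sigma^w:g_{ww}) + \tfrac12(u^\theta)^2(\bar\Sigma^\theta:g_{\theta\theta})\,\big\}(s,q(s))\,ds\,\Big|\,q_0\Big].
\end{align*}
The $u^\theta$-terms are exactly $L^1(T,q_0;u)$ up to the constant $g(q_0)$, so the whole content of \eqref{Eq:expl1} reduces to showing that the $u^w$-part of the integrand, namely $\Phi^w(s) := u^w |g_w|^2 + \tfrac12 (u^w)^2(\bar\Sigma^w:g_{ww})$, is bounded above by a strictly negative constant $-\tilde\epsilon$ (absorbing $g(q_0)$ into this as well, or carrying it separately and choosing $\tilde\epsilon$ accordingly — I would state it cleanly by noting $g(q_0)$ is a fixed finite number).

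The key pointwise estimate: the map $u\mapsto \varphi(u) := u|g_w|^2 + \tfrac12 u^2(\bar\Sigma^w:g_{ww})$ is a downward parabola in $u$ (since $\bar\Sigma^w:g_{ww} \le -\gamma < 0$) with vertex at $u^{w*} = -|g_w|^2/(\bar\Sigma^w:g_{ww})$ and $\tilde u^w = 2u^{w*}$ is the second root (the first being $0$). Hence for $u^w \ge (\tilde u^w \vee 1) + \epsilon$ we are to the right of the vertex and of the larger root by a margin, so $\varphi(u^w) \le \varphi((\tilde u^w\vee 1)+\epsilon)$, which I would lower-bound away from zero using $\bar\Sigma^w:g_{ww}\le-\gamma$; concretely, since $\varphi(\tilde u^w)=0$ and $\varphi'(\tilde u^w) = |g_w|^2 + \tilde u^w(\bar\Sigma^w:g_{ww}) = -|g_w|^2 \le 0$ with the parabola concave, an increment of at least $\epsilon$ past $\tilde u^w\vee 1$ yields $\varphi(u^w) \le -\tfrac12\gamma\epsilon^2$ (and if the binding constraint is $u^w \ge 1+\epsilon$ rather than $\tilde u^w+\epsilon$, one still gets a negative bound because $1 > \tilde u^w$ there puts us past the root). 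Taking $\tilde\epsilon = \tfrac12\gamma\epsilon^2$ (minus a correction for $g(q_0)/T$, or simply absorb by taking $T$-independent $\tilde\epsilon$ and reading the inequality for $T$ large — but the cleanest route is to fold $g(q_0)$ into the $-\tilde\epsilon\times T$ term since it does not depend on $s$) gives \eqref{Eq:expl1}. The proof of \eqref{Eq:expl2} is entirely symmetric: isolate the $u^\theta$-terms instead, use $\bar\Sigma^\theta:g_{\theta\theta}\ge\gamma>0$ so the relevant parabola opens upward with larger root $\bar u^\theta$ (note the statement uses $\bar u^\theta$, the truncated optimizer, rather than $\tilde u^\theta$; I would check that in the regime $\bar\Sigma^\theta:g_{\theta\theta}>0$ the truncation makes $\bar u^\theta$ coincide with $u^{\min}\vee(u^{\theta*}\wedge u^{\max})$ and argue the sign just as before), and conclude a strictly positive lower bound $\tilde\epsilon$ on the $u^\theta$-part of the integrand, leaving the $u^w$-part as $L^2(T,q_0;u)$.

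The main obstacle I anticipate is not the parabola algebra but the justification that the martingale parts of the Itô expansion have zero expectation: this requires that $\int_0^T (u^w\sqrt{\bar\eta^w}\,\sigma_w(q))^\top\nabla_w g(q)\,dW^1$ and its $\theta$-analogue are true martingales, which follows from the linear growth of $\sigma_w,\sigma_\theta,g_w,g_\theta$ (Assumption \ref{Assump:D}.1), the quadratic growth of $g$ (Assumption \ref{Assump:D}.2), the boundedness of $u$ on $[u^{\min},u^{\max}]$ and $\bar\eta$ on $[\bar\eta^{\min},1]$, and a standard moment estimate $\Esp[\sup_{s\le T}\|q(s)\|^2]<\infty$ for SDEs with Lipschitz coefficients — I would invoke this as standard. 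A secondary subtlety is that $g$ is only assumed $C^{\cdot}$ enough for Itô (the counterexample analysis shows $g$ is smooth in the relevant parametrizations, and $g_{ww},g_{\theta\theta}$ appear in the statement, so $g\in C^2$ is implicitly in force); I would state this regularity hypothesis explicitly at the top of the proof.
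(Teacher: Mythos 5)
Your proposal is correct and follows essentially the same route as the paper: apply It\^{o}'s formula to $g(q(\cdot))$ along \eqref{Eq:sde2}, split the drift into the $u^w$-parabola and the $u^\theta$-part defining $L^1$, and use $\bar\Sigma^w:g_{ww}\le-\gamma$ together with $u^w\ge(\tilde u^w\vee 1)+\epsilon$ to bound the parabola by a strictly negative constant (the paper obtains $-(\gamma/2)(1+\epsilon)\epsilon$ via the factorization $h^1=\tfrac12(\bar\Sigma^w:g_{ww})\,u^w(u^w-\tilde u^w)$, while your $-\tfrac12\gamma\epsilon^2$ is slightly weaker but equally sufficient). You are in fact more explicit than the paper on points it glosses over --- the leftover $g(q_0)$ term from the It\^{o} expansion, the true-martingale justification, the $C^2$ regularity of $g$, and the likely typo $\bar u^\theta$ versus $\tilde u^\theta$ in the second bullet.
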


Note that  inequality \eqref{Eq:expl1} shows that \begin{equation}
J(T,0,q_0;u) \underset{T \rightarrow \infty}{\rightarrow} - \infty,
\label{Eq:Expl_explicit1}
\end{equation}
for any $q_0 \in \mathbb{R}^{M+N}$ satisfying $\lim\sup_{T \rightarrow \infty} L^1(T,q_0;u) < +\infty$. Similarly, inequality \eqref{Eq:expl2} gives
\begin{equation}
J(T,0,q_0;u) \underset{t \rightarrow \infty}{\rightarrow} + \infty,
\label{Eq:Expl_explicit2}
\end{equation}
for every $q_0 \in \mathbb{R}^{M+N}$ such that $\lim\inf_{T \rightarrow \infty} L^2(T,q_0;u) > -\infty$. Thus, Equations \eqref{Eq:Expl_explicit1} and \eqref{Eq:Expl_explicit2} guarantee the explosion of the reward function without proper  choices of the learning rate.

\begin{proof}{(For Proposition \ref{Prop:LRDivergence}).}
Since the proofs of inequalities \eqref{Eq:expl1} and \eqref{Eq:expl2} are similar, we will only prove \eqref{Eq:expl1}. Let $(T,q_0) \in \mathbb{R}_+ \times \mathbb{R}^{M+N}$. By It\^{o}'s formula and the SDE \eqref{Eq:sde2} for the process $(q(t))_{t \geq 0}$, we get
\begin{align*}
\partial_T J(T,0,q_0;u) & = \Esp\big[\big\{\big( u^w |g_{w}|^2 - u^{\theta} |g_{\theta}|^2 \big) \\
                &  \hspace{-2.2cm} + \frac{1}{2} \big[ (u^w)^2 (\bar{\Sigma}^w:g_{ww}) +    (u^w)^2 (\bar{\Sigma}^\theta:g_{\theta \theta})\big]\,\big\} \big(T,q(T)\big)|q_0\big] \\
			      &\hspace{-2.25cm}  = \Esp \big[h^1\big(T,q(T);u^w(T)\big) \big] + \Esp \big[h^2 \big(T,q(T);u^\theta(T)\big) \big],
\end{align*}
with 
$$
h^1(t,q;u) = u |g_{w}|^2(q) + \frac{1}{2} (u)^2 (\bar{\Sigma}^w:g_{ww})(t,q),
$$
and
$$
h^2(t,q;u) = -u |g_{\theta}|^2(q) + \frac{1}{2} (u)^2 (\bar{\Sigma}^\theta:g_{\theta \theta})(t,q),
$$
for any $(t,q,u) \in \mathbb{R}_+ \times \mathbb{R}^{M+N} \times [0,u^{\max}]$. Using 
$$
\gamma \leq -(\bar{\Sigma}^w:g_{ww})(t,q), 
$$
for every $(t,q) \in \mathbb{R}_+ \times \mathbb{R}^{M+N}$, and $u^{w} \geq (\tilde{u}^w \vee 1) + \epsilon$ almost surely, we obtain\footnote{Note that the convexity condition $(\bar{\Sigma}^w:g_{ww}) \leq 0$ is in force.} 
\begin{align*}
\Esp\big[h^1\big(T,q(T);u^{w}(T)\big)\big] & = \Esp\big[ \big(a\, u^{w} ( u^{w} - \tilde{u}^w ) \big) (T) \big] \\
                        & \hspace{-1.7cm}\leq -(\gamma/2) \Esp[u^{w} (T)] \epsilon \leq - (\gamma/2) (1 + \epsilon) \epsilon = -\tilde{\epsilon},
\end{align*}
with $a(T) = \cfrac{(\bar{\Sigma}^w:g_{ww})(T,q(T))}{2}$.
Thus, 
\begin{align*}
\partial_T J(T,0,q_0;u)\leq -\tilde{\epsilon} + \Esp\big[ h^2\big( T, q(T);u^\theta(T) \big) \big],
\end{align*}
which ensures that 
\begin{align*}
J (T, 0, q_0;u) \leq -\tilde{\epsilon} T + L^{\theta},
\end{align*}
for all $T \geq 0$. 
\end{proof}

\subsection{Optimal Batch Size}
\manuallabel{sec:batch_size}{5.2}

The optimal batch size problem can be formulated and analyzed by following the same approach of Section \ref{sec:OptLearningRate}.

\begin{prop} 

Under Assumption \ref{Assump:D},

\begin{itemize}
    \item The value function $v$ solving the batch size problem \eqref{Eq:ValFunctDef23} is the unique viscosity solution of the following equation:

\begin{equation}
\left\{
\begin{array}{ll}
v_t + \min_{m^\theta \in [1,m^{\max}]} \left\{\cfrac{\big( g_{w}^\top v_w - g_{\theta}^\top v_\theta \big)}{ m^\theta} + \cfrac{\left[  (\bar{\Sigma}^w:v_{ww}) + (\bar{\Sigma}^\theta:v_{\theta \theta})\right] }{2 (m^\theta)^2} \right\} = 0, \\
v(T,\cdot) = g(\cdot),& 
\end{array}
\right.
\label{Eq:HJB_BS_prop}
\end{equation} 
where $m^\theta$ controls the generator batch size and $m^{\max}$ is an upper bound representing the maximum batch size. 
    \item When $v \in \mathcal{C}^{1,2}([0,T],\mathbb{R}^M \times \mathbb{R}^N)$, the optimal control $\bar{m}^\theta$ is given by 
    
\begin{align*}
\bar{m}^\theta(t) = \left\{
\begin{array}{ll}
1 \vee \left( m^{*}(t) \wedge m^{\max}\right), &  \text{if }\, \left( (\bar{\Sigma}^w:v_{ww}) + (\bar{\Sigma}^\theta:v_{\theta \theta}) \right) \big(t,q(t)\big) > 0, \\
\\
m^{\max}, & \text{if } \, \big| \big( m^{\max} \big)^{-1} - \big( m^{*}(t) \big)^{-1} \big| \geq \big| 1 - \big( m^{*}(t) \big)^{-1} \big|,\\
\\
1, & \text{otherwise,}\\
\end{array} 
\right.
\end{align*}

with $m^{*}(t) = \left( \cfrac{ (\bar{\Sigma}^w:v_{ww}) + (\bar{\Sigma}^\theta:v_{\theta \theta}) }{ g_{w}^\top v_w - g_{\theta}^\top v_\theta } \right) \big(t,q(t)\big)$.

\end{itemize}
\label{Prop:OptiBS}
\end{prop}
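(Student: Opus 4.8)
The plan is to recognize that the batch-size problem \eqref{Eq:ValFunctDef23}--\eqref{Eq:sde23} is a \emph{degenerate} instance of the controlled Markov-diffusion game of Section \ref{subsec:mkv_diff_ctrl_pbm} in which the maximizing player is absent, and then invoke Proposition \ref{prop0:Visco_sol}. Concretely, I would make the identifications
\[
X^\nu_t = \tilde q^m(t), \qquad \nu = m^\theta, \qquad
b(t,x,m) = \frac{1}{m}\begin{pmatrix} g_w(x) \\ -g_\theta(x) \end{pmatrix}, \qquad
\tilde\sigma(t,x,m) = \frac{1}{m}\begin{pmatrix} \sqrt{\eta}\,\sigma_w(x) & 0 \\ 0 & \sqrt{\eta}\,\sigma_\theta(x) \end{pmatrix},
\]
with the maximizing control set $\mathbb{U}^1$ taken to be a singleton and $\mathbb{U}^2 = [1,m^{\max}]$, and with $f = g$. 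The first step is to check that the standing hypotheses of Section \ref{subsec:mkv_diff_ctrl_pbm} hold: since $m^\theta$ is valued in the compact interval $[1,m^{\max}]$, the scalar factor $1/m^\theta$ lies in $[1/m^{\max},1]$, so Assumption \ref{Assump:D}.1 transfers to $b$ and $\tilde\sigma$ \emph{uniformly in the control} (Lipschitz continuity in $x$ and linear growth), while Assumption \ref{Assump:D}.2 supplies the quadratic growth bound on $f = g$; the admissibility class $\mathcal{M}^\theta$ has exactly the structure of the sets $\mathcal{U}^1,\mathcal{U}^2$ of Section \ref{subsec:form_ctrl_pbm}.

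Granting this, Proposition \ref{prop0:Visco_sol} applies and gives that $v$ defined in \eqref{Eq:ValFunctDef23} is the unique viscosity solution of $-v_t + H(\cdot,v,v_x,v_{xx}) = 0$. It then remains to specialize the operator $H$ to the present coefficients: computing $-b^\top p$ and $-\tfrac12\Tr[\tilde\sigma\tilde\sigma^\top A]$ with the $b,\tilde\sigma$ above, and writing the diffusion contribution as $\tfrac{1}{2(m^\theta)^2}\big[(\bar\Sigma^w:v_{ww}) + (\bar\Sigma^\theta:v_{\theta\theta})\big]$, collapses the equation (after the trivial $\max$ over the singleton $\mathbb{U}^1$) to exactly \eqref{Eq:HJB_BS_prop}. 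This is a direct substitution and poses no difficulty.

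For the second bullet, assume the additional regularity $v \in \mathcal{C}^{1,2}([0,T],\mathbb{R}^M \times \mathbb{R}^N)$. Then the optimal feedback is the pointwise minimizer over $m^\theta \in [1,m^{\max}]$ of the Hamiltonian in \eqref{Eq:HJB_BS_prop}, i.e.\ of $\phi(m) = B/m + C/(2m^2)$ with $B = (g_w^\top v_w - g_\theta^\top v_\theta)(t,q(t))$ and $C = \big((\bar\Sigma^w:v_{ww}) + (\bar\Sigma^\theta:v_{\theta\theta})\big)(t,q(t))$. Substituting $x = 1/m^\theta \in [1/m^{\max},1]$ turns this into minimizing the quadratic $\tfrac{C}{2}x^2 + Bx$ over a compact interval: when $C>0$ the minimizer is the projection of the critical point onto $[1/m^{\max},1]$ (the ``clipped Newton'' regime), and when $C\le 0$ the minimum is attained at the endpoint of $[1/m^{\max},1]$ farthest from the critical point. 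Inverting the substitution $x = 1/m^\theta$ then produces the three-case formula for $\bar m^\theta$ stated in the proposition, with $m^*(t)$ as defined there; this last part is elementary calculus, exactly as in the proof of Proposition \ref{Prop:OptiLR}.

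The only genuine obstacle is the verification in the first paragraph that the batch-size dynamics satisfy the assumptions required to invoke Proposition \ref{prop0:Visco_sol}: it is precisely the clipping $1 \le m^\theta \le m^{\max}$ that keeps $b$ and $\tilde\sigma$ Lipschitz with linear growth --- without the lower bound $m^\theta \ge 1$ the coefficients would be unbounded as $m^\theta \to 0$ --- and one must also confirm that reducing the maximizing player to a singleton leaves the argument of Proposition \ref{prop0:Visco_sol} (and of the weak DPP of Theorem \ref{Theo:DPP1} on which it rests) intact. Everything downstream --- the specialization of $H$ to \eqref{Eq:HJB_BS_prop} and the one-dimensional optimization giving $\bar m^\theta$ --- is routine.
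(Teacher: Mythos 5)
Your proposal is correct and follows essentially the same route as the paper, which omits this proof on the grounds that it mirrors that of Proposition \ref{Prop:OptiLR}: invoke Proposition \ref{prop0:Visco_sol} for the viscosity characterization (with the maximizing player degenerate), then obtain $\bar{m}^\theta$ by pointwise minimization of the quadratic Hamiltonian in $1/m^\theta$ over the clipped interval. Your explicit verification that the bounds $1 \le m^\theta \le m^{\max}$ keep the coefficients Lipschitz with linear growth is exactly the check the paper leaves implicit.
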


The proof of Proposition \ref{Prop:OptiBS} is omitted since it is very similar to the one of Proposition \ref{Prop:OptiLR}. 

\paragraph{Some remarks.} We observe that 
\begin{itemize}
    \item Proposition \ref{Prop:OptiBS}, in the same spirit of Proposition \ref{Prop:OptiLR}, suggests a two-step approach for the implementation of the optimal batch size. First, estimate $\bar{m}^\theta$ using Equation \eqref{Eq:HJB_BS_prop}. Second, apply the gradient algorithm with $\bar{m}^\theta$.
    \item The expression of the optimal batch size is given under the regularity condition $v \in \mathcal{C}^{1,2}([0,T],\mathbb{R}^M \times \mathbb{R}^N)$. It is possible to show that $v \in \mathcal{C}^{1,2}([0,T],\mathbb{R}^M \times \mathbb{R}^N)$ when $g$ and the volatility $\bar{\sigma}$ satisfy simple Lipschitz continuity conditions (see \citep{evans1983classical} for more details). Finally, when $v$ does not satisfy the regularity assumption $v \in \mathcal{C}^{1,2}([0,T],\mathbb{R}^M \times \mathbb{R}^N)$, it is standard to use discrete time approximations (see Section \ref{sec:OptLearningRate} for more details).
    \item The convexity condition $(\bar{\Sigma}^w:v_{ww}) + (\bar{\Sigma}^\theta:v_{\theta \theta})$ involved in the expression of $\bar{m}^\theta$ is the aggregation of the generator component $(\bar{\Sigma}^w:v_{ww})$ and the discriminator one $(\bar{\Sigma}^\theta:v_{\theta \theta})$. Moreover, the clipping parameters $1$ and $m^{\max}$ are used to prevent either gradient explosion or vanishing gradient.
\end{itemize}

\section{Numerical Experiment}
\label{sec:Exp}

In this section, we compare a well-known and established algorithm, the ADAM optimizer, with its adaptive learning rate counterpart which we call LADAM. The implementation of the adaptive learning rate is detailed in Section \ref{lrAlgoDesign}. We use three numerical examples to compare the convergence speed of these algorithms: generation of Gaussian distributions, generation of Student t-distributions, and financial time series generation.

\subsection{Optimizer Design}
\manuallabel{lrAlgoDesign}{6.1}

For computational efficiency, instead of directly applying  the two-step resolution scheme introduced in Section \ref{sec:OptLearningRate} for high dimensional Isaac-Bellman equation,  we exploit the connection between the optimal learning rate and Newton's algorithm by following the methodology illustrated in Equation \eqref{Eq:ConnecNewtAlgo}. The main idea is to approximate the function $v$, introduced in Equation \eqref{Eq:ValFunctDef}, by $g$.\\
 
Furthermore, we divide all the parameters into $M$ sets. For example, for neural networks, one may associate a unique set to each layer of the network. For each set $i$,  denote by $x^i_t$ the value at iteration $t \in \mathbb{N}$ of the set's parameters and write $g_{x^i}$ for the gradient of the loss $g$ with respect to the parameters of the set $i$. Then, we replace for each set $i$ the update rule 
\begin{equation*}
x^i_{t+1} = x^i_{t} - \bar{\eta}_t g_{x^i}(x^i_{t}) , 
\end{equation*}
with $\bar{\eta}_t$ a base reference learning rate that depends on the optimizer, by the following update rule: 
\begin{equation*}
x^i_{t+1} = x^i_t - u^i_t \big(\bar{\eta}_t g_{x^i}(x^i_{t}) \big), 
\end{equation*}
where the adjustment $u^i_t$, derived from Proposition \ref{Prop:OptiLR}, is defined as follows:
$$ u^i_t = u^{\min} \vee \cfrac{\|g_{x^i}(x^i_{t})\|^2}{\Tr[(\bar{g}_{x^i} \bar{g}_{x^i}^\top) g_{x^ix^i}](x^i_{t})} \wedge u^{\max}
$$  
if $\big(\Tr[( \bar{g}_{x^i} \bar{g}_{x^i}^\top ) g_{x^ix^i}]\big)(x^i_{t}) > 0$;
otherwise 
$u^i_t= u^{\max}$.
Here $\bar{g}_{x^i} = \sqrt{\bar{\eta}_t} g_{x^i}$ and $u^{\max}$ (resp. $u^{\min}$) is the maximum (resp. minimum) allowed adjustment. Note that the same adjustment $u^i_t$ is used for all parameters of the set $i$. The most expensive operation here is the computation of the term $\big(\Tr[(g_{x^i} g_{x^i}^\top) g_{x^ix^i}]\big)(x^i_{t})$ since we need to approximate the Hessian matrix $g_{x^ix^i}$.

\subsection{Vanilla GANs Revisited}

We use the vanilla GANs in Section \ref{sec:GAN_wellpos} to show the relevance of our adaptive learning rate and the influence of the batch size on GANs training.

\paragraph{Data.} The numerical samples of $X$ are drawn either from the Gaussian distribution $N(m,\sigma^2)$ with $(m,\sigma) = (3,1)$ or the translated Student t-distribution $a + T(n)$ where $a = 3$, and $T(n)$ is a standard Student t-distribution with $n$ degrees of freedom. The noise $Z$ always follows $N(0,1)$. These samples are then decomposed into two sets: a training set and a test set. In the experiment, one epoch refers to the number of gradient updates needed to pass the entire training dataset. \\

At the end of the training,  the discriminator accuracy is expected to be around $50\%$, meaning that the generator manages to produce samples that fool the discriminator who is unable to differentiate the original data from the fake ones.

\paragraph{Network architecture description.} We work here under the same setting of Section \ref{sec:GAN_wellpos}, with discriminator detailed in Equation \eqref{Eq:simple_model} while the generator is composed of the following two layers: the first layer is linear, and the second one is convolutional. Both layers use the ReLU activation function.

\subsubsection{Gaussian Distribution Generation}
\manuallabel{numres}{6.2.1}

Here the input samples $X$ are drawn from $N(3,1)$. We first compare the accuracy of the discriminator for two choices of the optimizers, namely the standard ADAM optimizer and the ADAM optimizer with adaptive learning rates (i.e., LADAM). Figure \ref{Fig:Plot_accuracy_discriminator_sgdlsgd} plots the accuracy of the discriminator when the base learning rate varies for ADAM and LADAM. Evidently, a bigger learning rate yields faster convergence for both optimizers; LADAM outperforms ADAM due to the introduction of the adaptive learning rate, and more importantly, LADAM performance is more robust with respect to the initial learning rate since it can adjust using the additional adaptive component.

\begin{figure}[h!]
    \center
	(a) Discriminator accuracy ADAM \hspace{1.5cm} (b) Discriminator accuracy LADAM \\
	\includegraphics[width=0.45\textwidth]{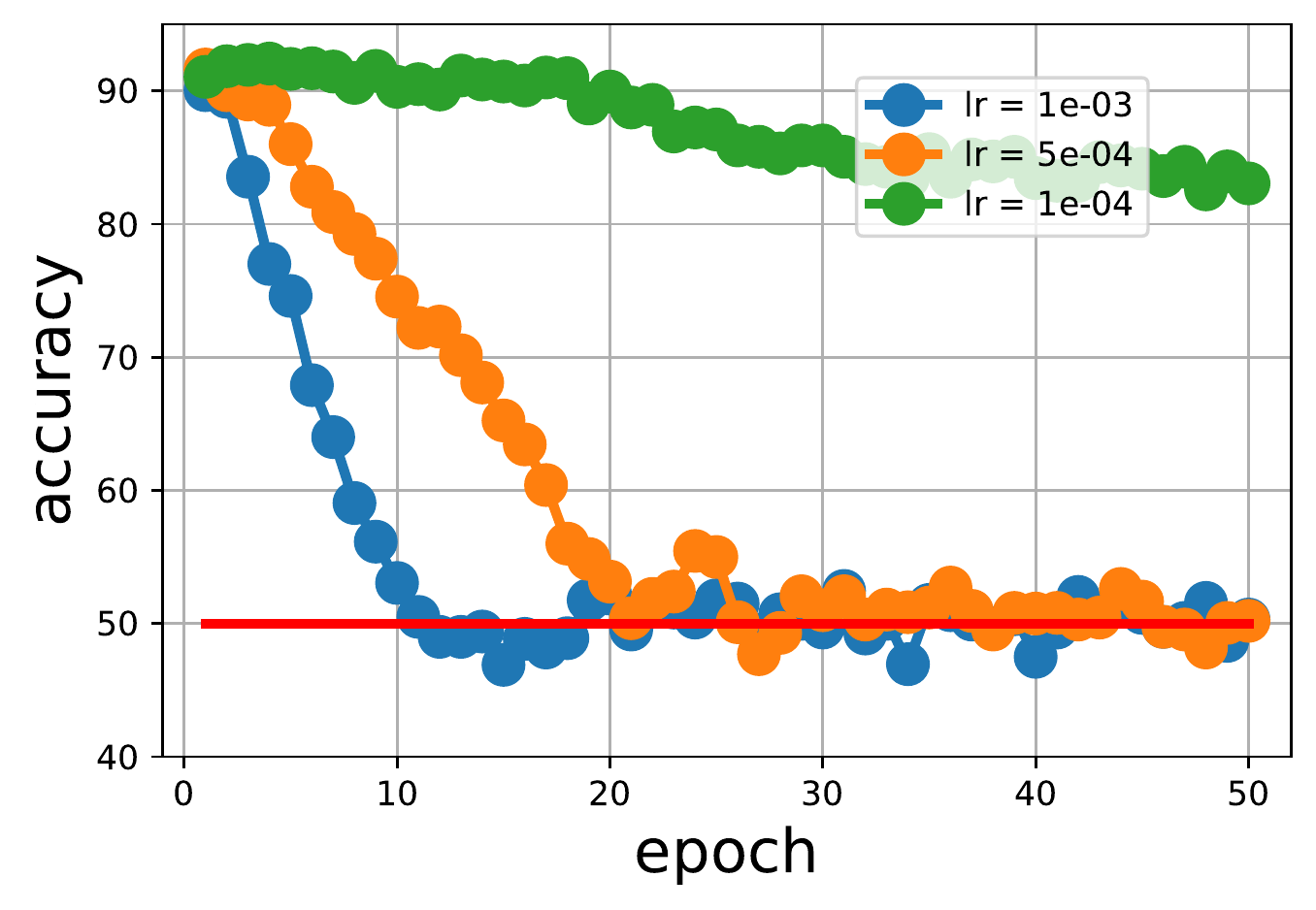}
	\includegraphics[width=0.45\textwidth]{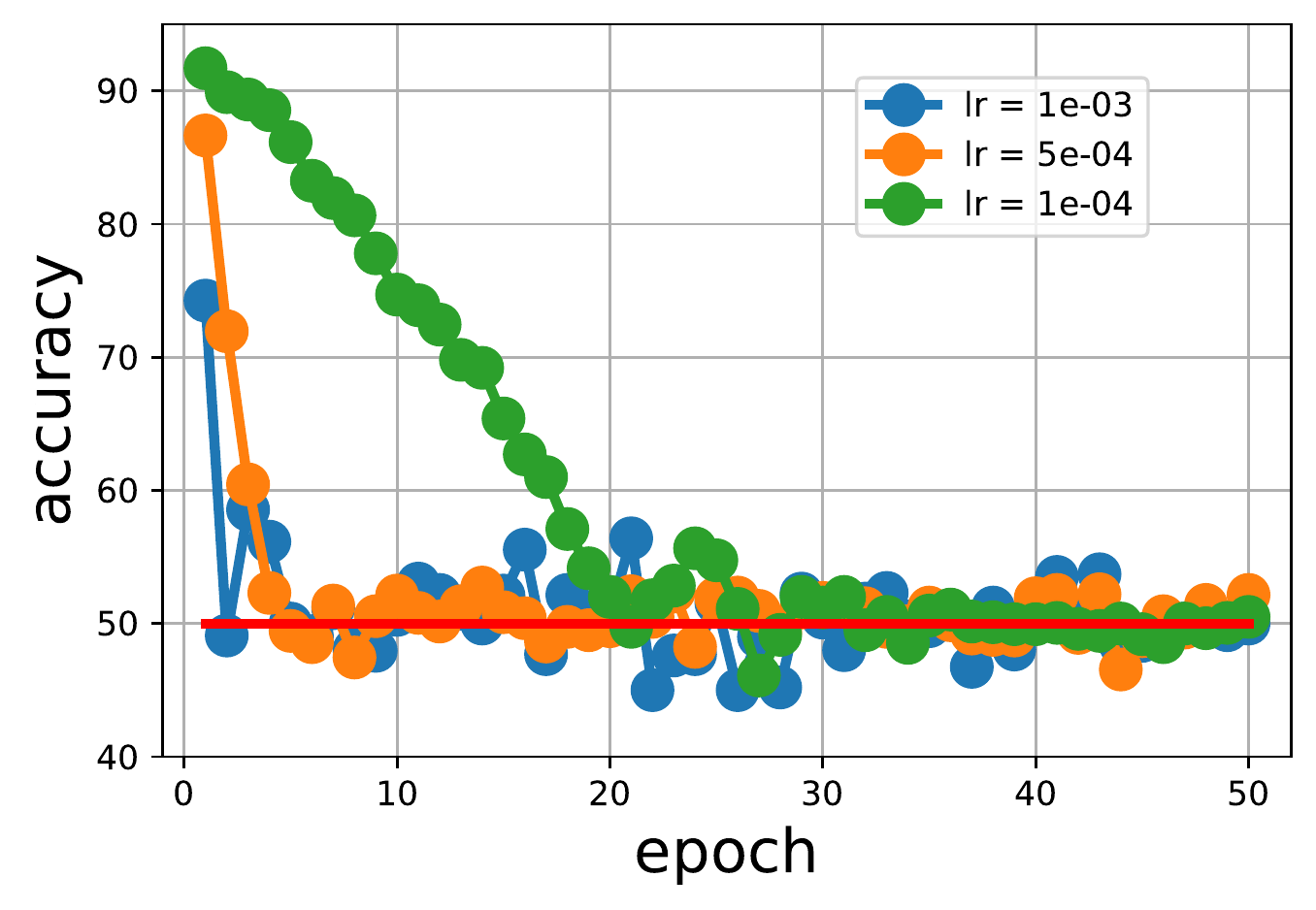}	
    \caption{Discriminator accuracy for ADAM with a base learning rate in (a) and LADAM with an adaptive learning rate in (b)}
    \label{Fig:Plot_accuracy_discriminator_sgdlsgd}
\end{figure}

Next, we analyze the generator loss for ADAM and LADAM optimizers. Figure \ref{Fig:Plot_loss_generator_sgdlsgd} shows the variations of the generator loss when moving the base learning rate for these optimizers. First, one can see that the loss decreases faster when using LADAM. Second, Figure \ref{Fig:Plot_loss_generator_sgdlsgd} confirms the robustness of LADAM with respect to the choice of the initial learning rate, again thanks to the additional adaptive learning rate component.
 
\begin{figure}[h!]
    \center
	 (a) Generator loss ADAM \hspace{1.5cm} (b) Generator loss LADAM \\
	\includegraphics[width=0.45\textwidth]{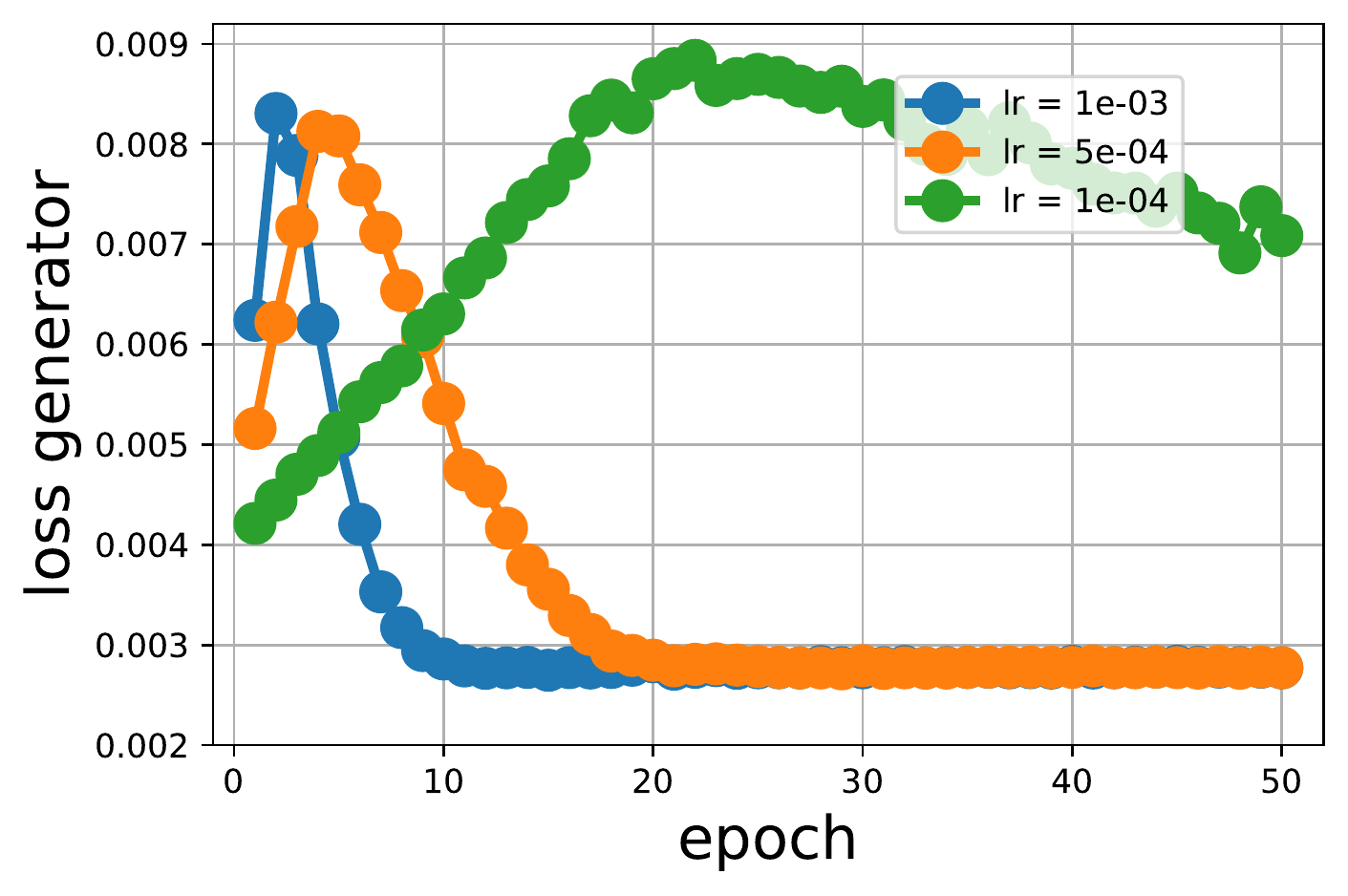}	\includegraphics[width=0.45\textwidth]{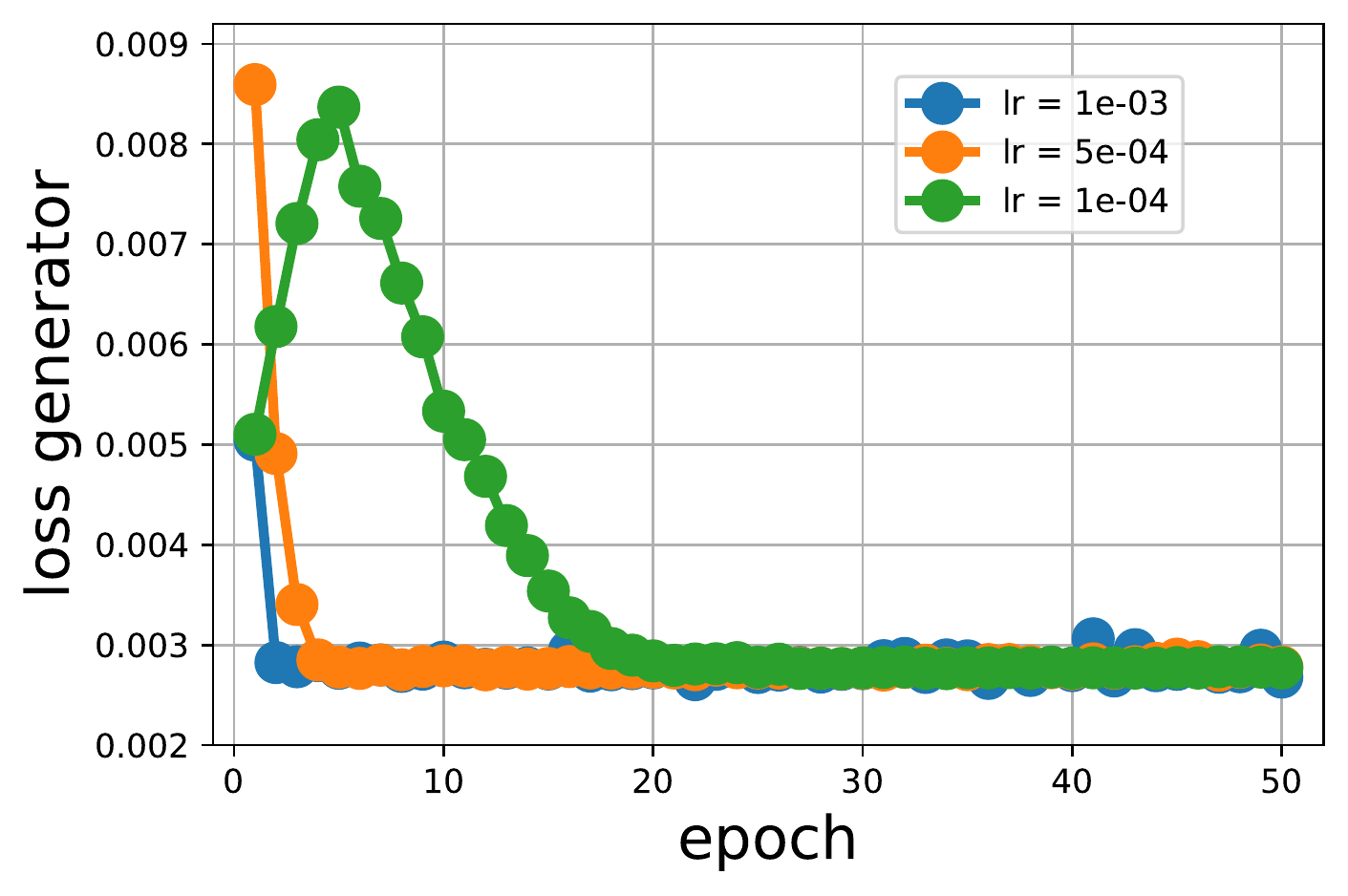}
    \caption{Generator loss for ADAM with a base learning rate in (a) and LADAM with an adaptive learning rate in (b).}
    \label{Fig:Plot_loss_generator_sgdlsgd}
\end{figure}

\subsubsection{Student T- Distribution Generation}

We repeat the same experiments of Section \ref{numres} but with different inputs data. Here, the input samples $X$ are drawn from the translated Student t-distribution $3 + T(3)$ where $T(3)$ is a standard Student t-distribution with $3$ degrees of freedom. Figures \ref{Fig:Plot_accuracy_discriminator_sgdlsgdStudent} and \ref{Fig:Plot_loss_generator_sgdlsgdStudent} display the discriminator accuracy and generator loss for both ADAM and LADAM. One can see that conclusions of Section \ref{numres} still hold, namely LADAM offers better performance, and is more robust with respect to the choice of the initial learning rate.

\begin{figure}[h!]
    \center
	(a) Discriminator accuracy ADAM \hspace{1.5cm} (b) Discriminator accuracy LADAM \\
	\includegraphics[width=0.45\textwidth]{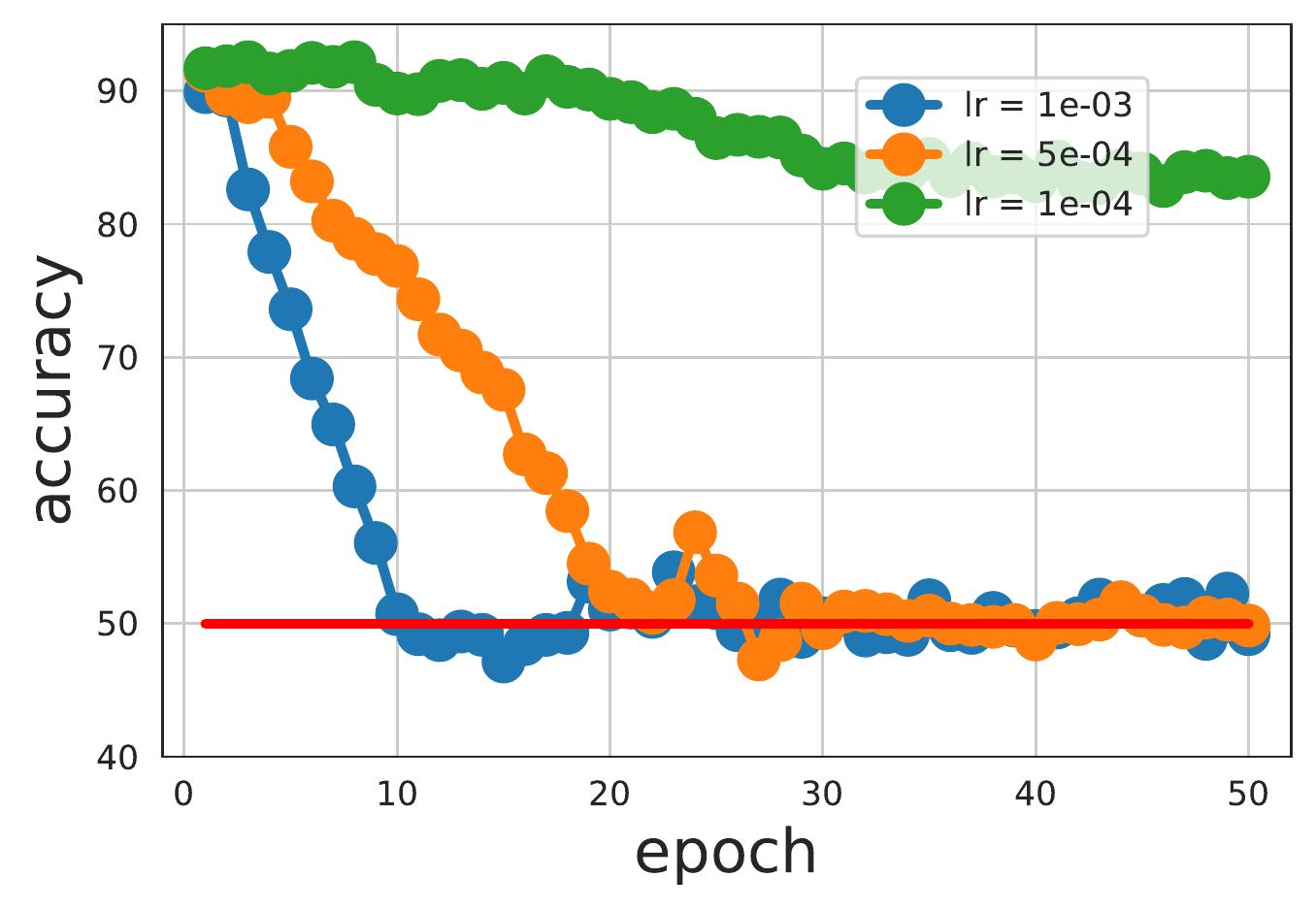}
	\includegraphics[width=0.45\textwidth]{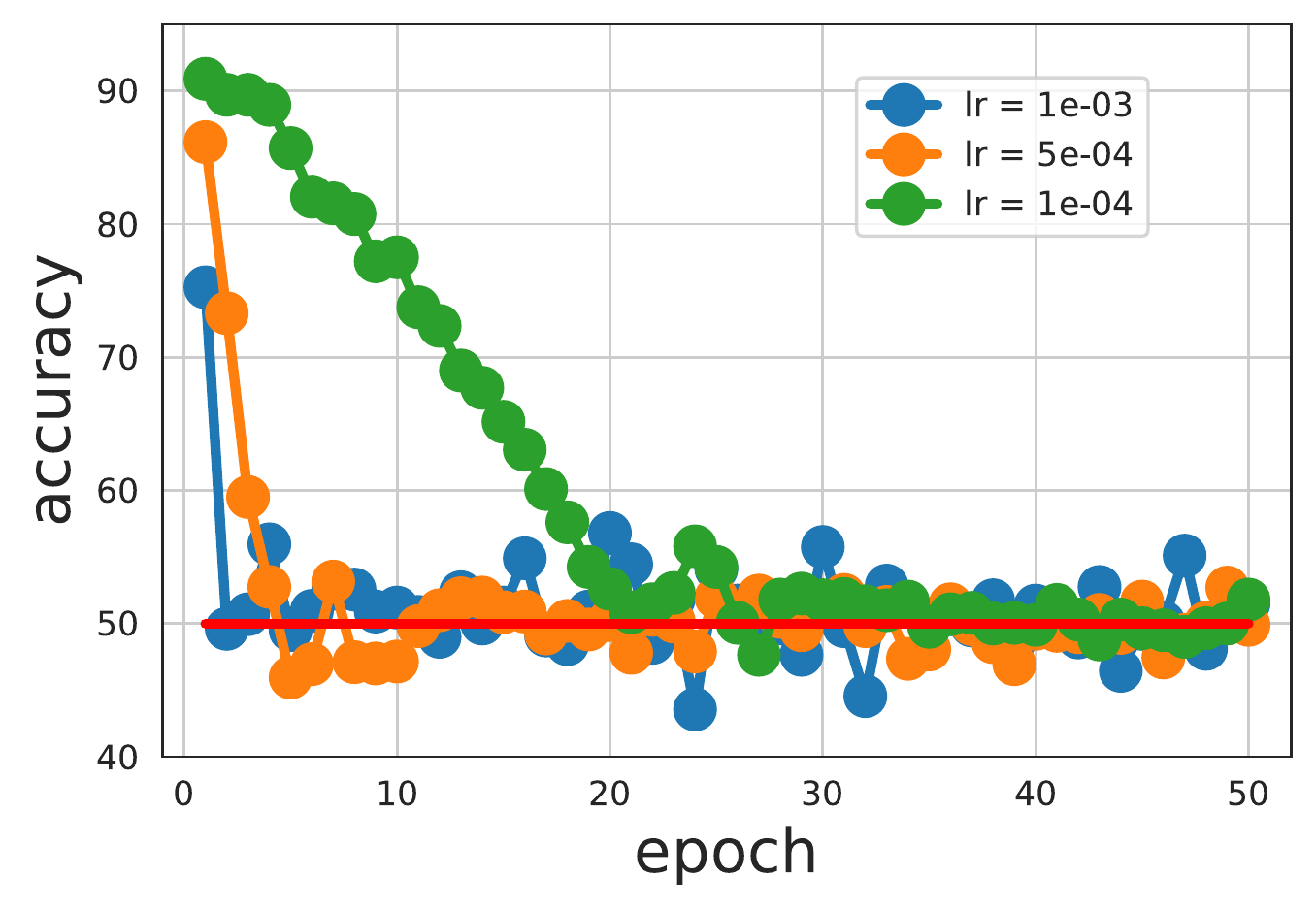}	
    \caption{Discriminator accuracy for ADAM with a base learning rate in (a) and LADAM with an additional adaptive learning rate component in (b).}
    \label{Fig:Plot_accuracy_discriminator_sgdlsgdStudent}
\end{figure}

\begin{figure}[h!]
    \center
	 (a) Generator loss ADAM \hspace{1.5cm} (b) Generator loss LADAM \\
	\includegraphics[width=0.45\textwidth]{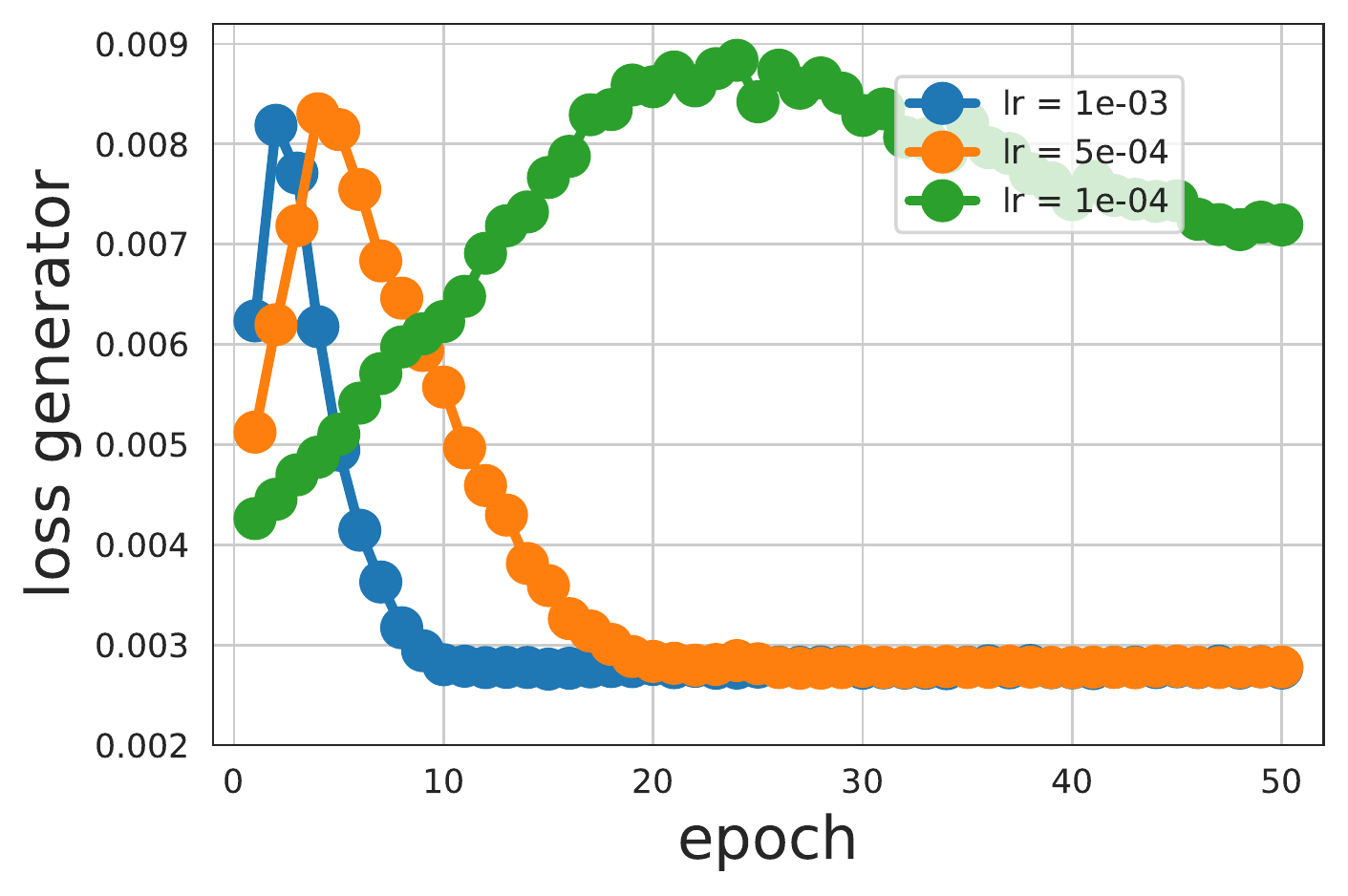}	\includegraphics[width=0.45\textwidth]{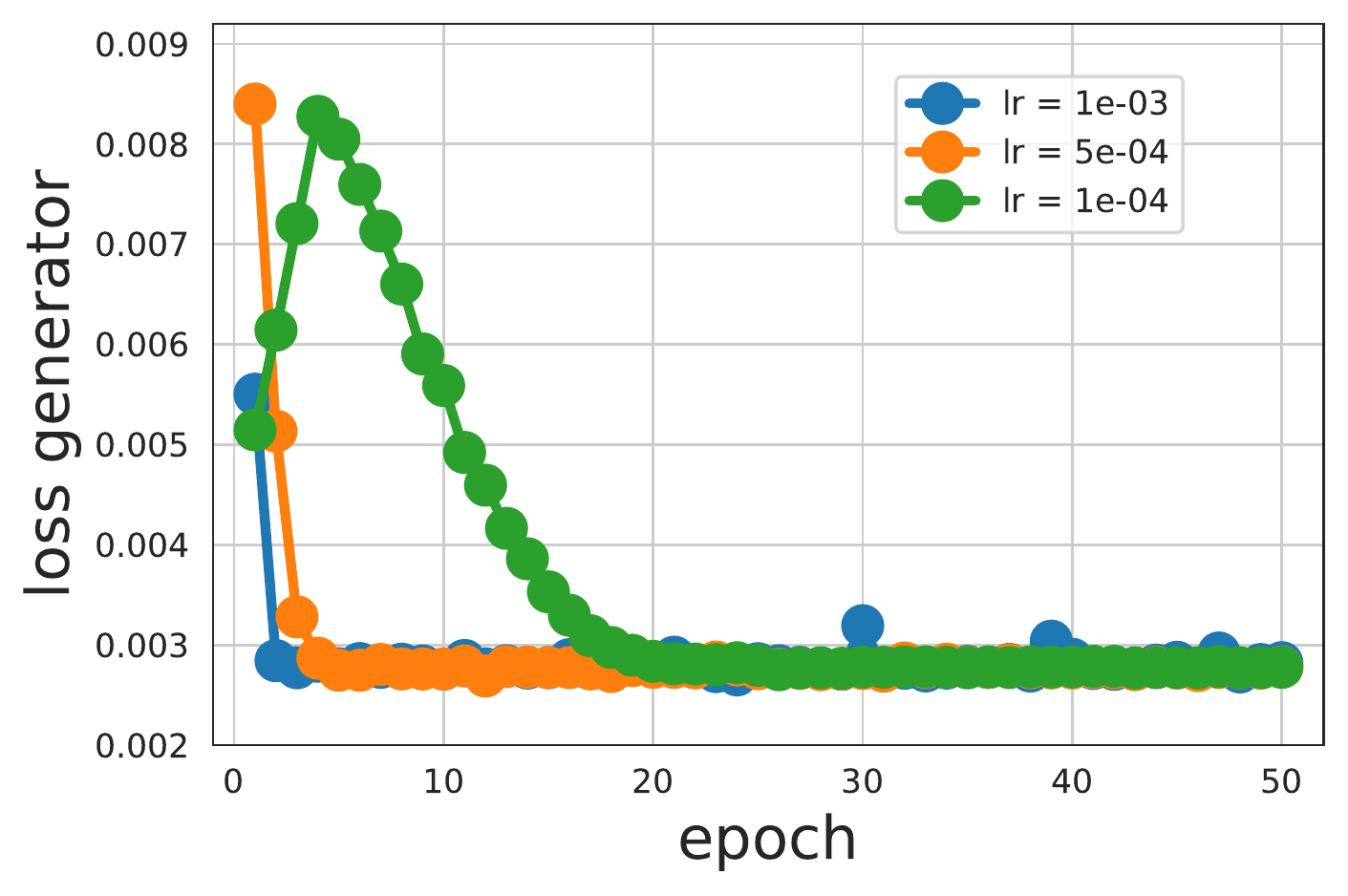}	
    \caption{Generator loss for ADAM with a base learning rate in (a) and LADAM with an additional adaptive learning rate component in (b).}
    \label{Fig:Plot_loss_generator_sgdlsgdStudent}
\end{figure}

\subsection{Financial Time Series Generation}

\paragraph{Data.} Data used here is taken from Quandl Wiki Prices database, which offers stock prices, dividends, and splits for 3000 US publicly-traded companies. In the numerical analysis, we will focus on the following six stocks: Boeing, Caterpillar, Walt Disney, General Electric, IBM, and Coca-Cola. For each stock, key quantities such as the traded volume, the open and close prices are daily recorded. The studied time period starts from January, 2000 and ends in March, 2018.

\paragraph{Network architecture description.} The neural network architecture is borrowed from \citep{yoon2019time}. It consists of four network components: an embedding function, a recovery function, a sequence generator, and a sequence discriminator. The key idea is to jointly train the auto-encoding components (i.e., the embedding and the recovery functions) with the adversarial components (i.e., the discriminator and the generator networks) in order to learn how to encode features and to generate data at the same time.

\paragraph{Numerical results.} Figure \ref{Fig:Plot_loss_generator_acc_discrim_TimeGan} plots the discriminator accuracy for the two following optimizers:  the standard ADAM  optimizer, and the same ADAM  optimizer with an adaptive learning rate, which we call LADAM. We fix the base learning rate here to be $5 \cdot 10^{-4}$ since the best performance for ADAM is obtained with this value.\\

Figure \ref{Fig:Plot_loss_generator_acc_discrim_TimeGan} clearly shows that both accuracy and loss of LADAM converge faster than those of ADAM. This is mainly due to the adaptive learning component which controls how fast the algorithm moves in the gradient direction. Moreover, Figure \ref{Fig:Plot_loss_generator_acc_discrim_TimeGan}.b reveals that the loss from LADAM is more stable (i.e., with less fluctuations) than that from ADAM. Such a behavior is expected since LADAM incorporates the convexity of the loss function in its choice of the learning rate.

\begin{figure}[h!]
    \center
	\hspace{-2.cm} (a) Discriminator accuracy  \hspace{1.5cm} (b) Generator loss \\
	\includegraphics[width=0.45\textwidth]{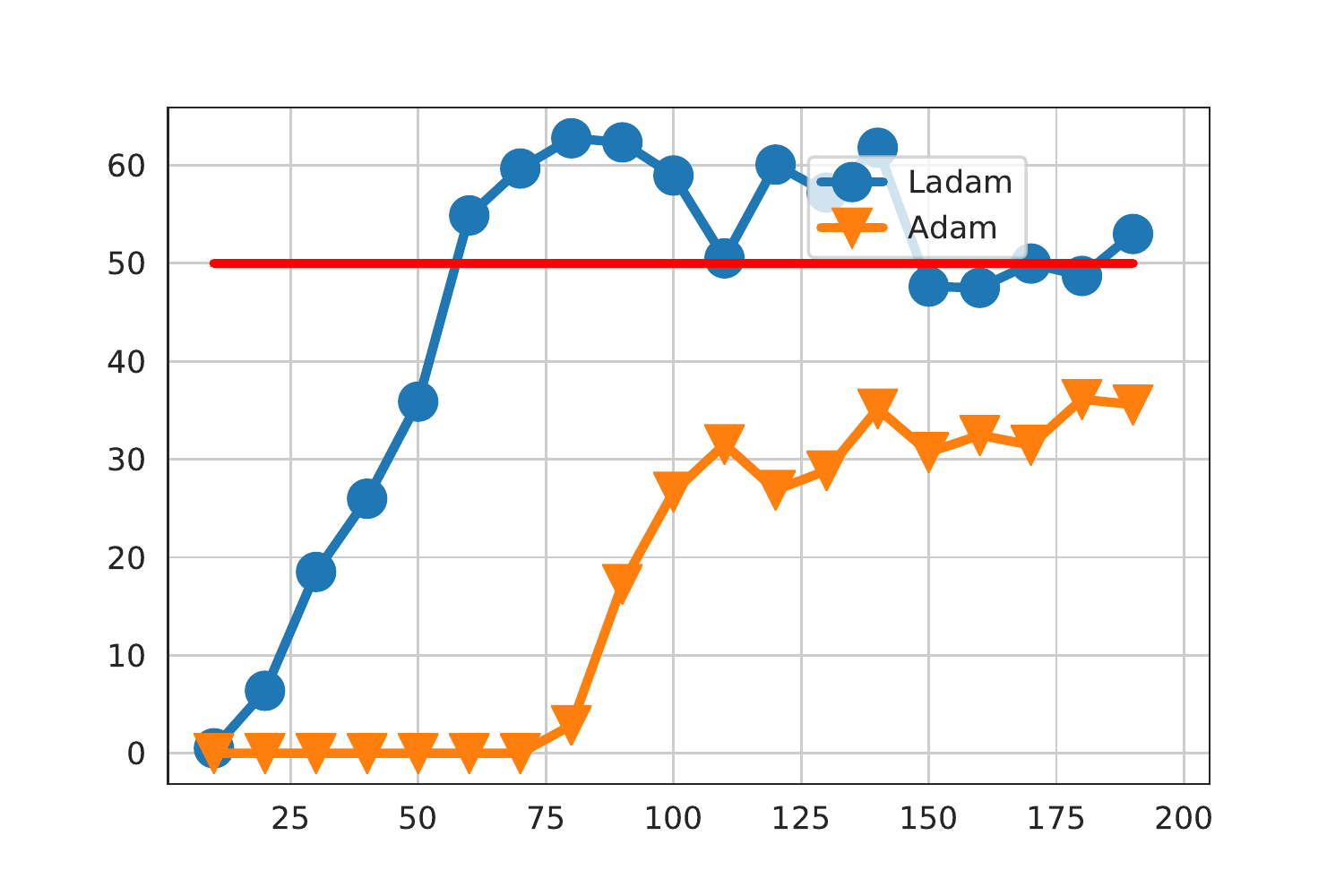}	\includegraphics[width=0.45\textwidth]{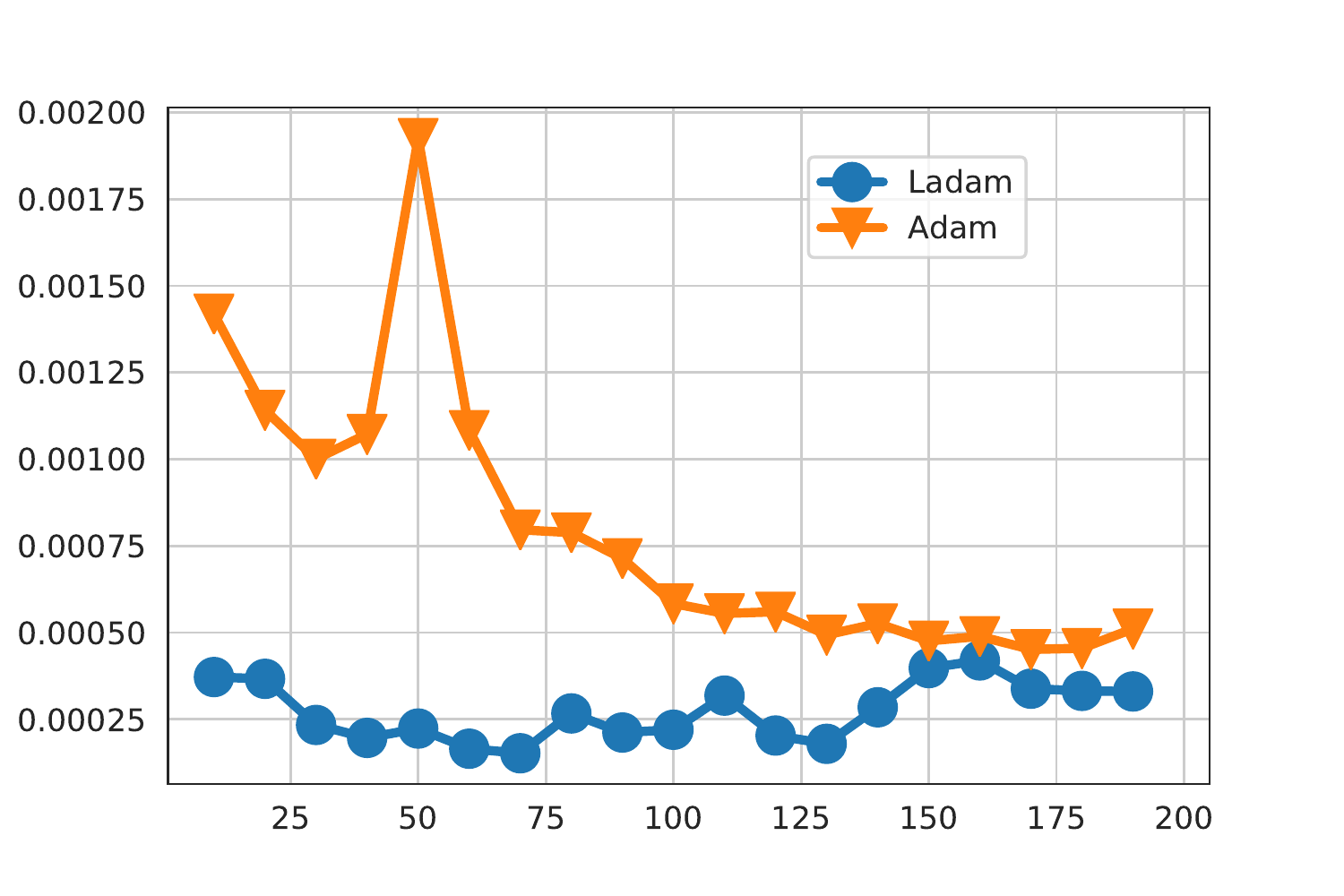}	
    \caption{Discriminator accuracy for ADAM and LADAM in (a), and generator loss for ADAM and LADAM in (b).}
    \label{Fig:Plot_loss_generator_acc_discrim_TimeGan}
\end{figure}

\newpage

\appendix

\section{Proof of Equation \eqref{Eq:sde23}}
\label{Appendix:proofOfSDE23}

To derive \eqref{Eq:sde23}, we follow the same approach of Section \ref{sec:LRAnalysis} and consider the value function below
\begin{equation}
\tilde{v}(t,q) = \min_{m^\theta \in \tilde{\mathcal{M}}^{\theta}} \Esp\big[g\big(\tilde{q}( T / m^\theta \big)\big|\tilde{q}(t) = q],
\label{Eq:ValFunctDef2}
\end{equation}
for any $(t,q) \in \mathbb{R}_+ \times \mathbb{R}^{M+N}$, with $\tilde{\mathcal{M}}^{\theta}$ the set of admissible controls for $m^{\theta}$, and 
\begin{equation}
\left\{
\begin{array}{l}
d\tilde{w}(t) =  g_{w}(\tilde{q}(t)) dt + \sqrt{\eta/(m^\theta)}  \sigma_w(\tilde{q}(t)) dW^1(t), \\
\\
d\tilde{\theta}(t) = -  g_{\theta}(\tilde{q}(t)) dt + \sqrt{\eta/(m^\theta)}  \sigma_\theta(\tilde{q}(t)) dW^2(t),
\end{array}
\right. \label{Eq:sde2F}
\end{equation}
where $\tilde{q}(t) = (\tilde{w}(t), \tilde{\theta}(t))$. The set $\tilde{\mathcal{M}}^{\theta}$ is defined as
\begin{align*}
\tilde{\mathcal{M}}^{\theta} = \big\{ m :\;& m \text{ c\`{a}dl\`{a}g in } [1,m^{\max}] \text{ adapted to } \mathbb{F}^{(W^1,W^2)},\\ 
					      &   \Esp[g\big( \tilde{q}( T / m \big) \,|\,\tilde{q}(0)\,] < \infty
					      \big\}.
\end{align*}
Since the batch size is a single parameter, one can either choose to maximize or minimize the objective function in \eqref{Eq:ValFunctDef2}. We choose here to reduce the loss as the ultimate goal of GANs is to generate realistic data.\\

For any $t \geq 0$ and $m^\theta \in [1,m^{\max}]$,  write $\tilde{q}^m = (\tilde{w}^m,\tilde{\theta}^m ) $ for the process $\tilde{q}^m(t) = \tilde{q}(m^\theta\, t)$. By It\^{o}'s formula and \eqref{Eq:sde2F}, we get
\begin{equation*}
\hspace{-0.3cm}
\left\{
\begin{array}{l}
d\tilde{w}^m(t) =   \cfrac{g_{w}(\tilde{q}^m(t))}{m^\theta} dt + \cfrac{\sqrt{\eta} \sigma_w(\tilde{q}^m(t))}{m^\theta } d\tilde{W}^1(t), \\
\\
d\tilde{\theta}^m(t) = -  \cfrac{ g_{\theta}(\tilde{q}^m(t))}{m^\theta } dt + \cfrac{\sqrt{\eta} \sigma_\theta(\tilde{q}^m(t))}{m^\theta } d\tilde{W}^2(t),
\end{array}
\right.
\end{equation*}
with $(\tilde{W}^1,\tilde{W}^2)(t) =  \sqrt{ m^\theta}(W^1,W^2)\big( t/ (m^{\theta})\big)$. The scaling property of the Brownian motion ensures that $(\tilde{W}^1,\tilde{W}^2)$ is a Brownian motion as well. Thus, we replace the value function in \eqref{Eq:ValFunctDef2} by the quantity below
\begin{equation*}
\tilde{v}^m(t,q) = \min_{m^\theta \in \mathcal{M}^{\theta}}  \Esp\big[g\big(\tilde{q}^m(T)\big)\big|\tilde{q}(t) = q],
\end{equation*}
to complete the proof.

\bibliographystyle{apalike}
\bibliography{gan_opti_lr_sde}

\begin{thebibliography}{}

\bibitem[Arjovsky and Bottou, 2017]{arjovsky2017towards}
Arjovsky, M. and Bottou, L. (2017).
\newblock Towards principled methods for training generative adversarial
  networks.
\newblock {\em arXiv preprint arXiv:1701.04862}.

\bibitem[Arjovsky et~al., 2017]{arjovsky2017wasserstein}
Arjovsky, M., Chintala, S., and Bottou, L. (2017).
\newblock Wasserstein generative adversarial networks.
\newblock In {\em International Conference on Machine Learning}, pages
  214--223. PMLR.

\bibitem[Barles and Souganidis, 1991]{barles1991convergence}
Barles, G. and Souganidis, P.~E. (1991).
\newblock Convergence of approximation schemes for fully nonlinear second order
  equations.
\newblock {\em Asymptotic Analysis}, 4(3):271--283.

\bibitem[Bayraktar and Yao, 2013]{bayraktar2013weak}
Bayraktar, E. and Yao, S. (2013).
\newblock {A weak dynamic programming principle for zero-sum stochastic
  differential games with unbounded controls}.
\newblock {\em SIAM Journal on Control and Optimization}, 51(3):2036--2080.

\bibitem[Berard et~al., 2020]{Berard2020}
Berard, H., Gidel, G., Almahairi, A., Vincent, P., and Lacoste-Julien, S.
  (2020).
\newblock {A closer look at the optimization landscape of generative
  adversarial networks}.
\newblock In {\em International Conference on Learning Representations}.

\bibitem[Bouchard and Touzi, 2011]{bouchard2011weak}
Bouchard, B. and Touzi, N. (2011).
\newblock Weak dynamic programming principle for viscosity solutions.
\newblock {\em SIAM Journal on Control and Optimization}, 49(3):948--962.

\bibitem[Cao and Guo, 2020]{cao2020approximation}
Cao, H. and Guo, X. (2020).
\newblock {Approximation and convergence of {GAN}s training: an {SDE}
  approach}.
\newblock {\em arXiv preprint arXiv:2006.02047}.

\bibitem[Cao and Guo, 2021]{cao2021generative}
Cao, H. and Guo, X. (2021).
\newblock Generative dversarial network: some analytical perspectives.
\newblock {\em Machine Learning And Data Sciences For Financial Markets: A
  Guide To Contemporary Practices}.

\bibitem[Chakraborti et~al., 2011]{chakraborti2011econophysics}
Chakraborti, A., Toke, I.~M., Patriarca, M., and Abergel, F. (2011).
\newblock {Econophysics review: I. empirical facts}.
\newblock {\em Quantitative Finance}, 11(7):991--1012.

\bibitem[Conforti et~al., 2020]{Conforti2020}
Conforti, G., Kazeykina, A., and Ren, Z. (2020).
\newblock {Game on random environment, mean-field Langevin system and neural
  networks}.
\newblock {\em arXiv preprint arXiv:2004.02457}.

\bibitem[Cont, 2001]{cont2001empirical}
Cont, R. (2001).
\newblock Empirical properties of asset returns: stylized facts and statistical
  issues.
\newblock {\em Quantitative finance}, 1(2):223.

\bibitem[Denton et~al., 2015]{denton2015deep}
Denton, E.~L., Chintala, S., Szlam, A., and Fergus, R. (2015).
\newblock Deep generative image models using a {Laplacian} pyramid of
  adversarial networks.
\newblock In {\em Advances in Neural Information Processing Systems}, pages
  1486--1494.

\bibitem[Dionelis et~al., 2020]{dionelis2020tail}
Dionelis, N., Yaghoobi, M., and Tsaftaris, S.~A. (2020).
\newblock Tail of distribution {GAN} ({TailGAN}):
  Generativeadversarial-network-based boundary formation.
\newblock In {\em 2020 Sensor Signal Processing for Defence Conference (SSPD)},
  pages 1--5. IEEE.

\bibitem[Domingo-Enrich et~al., 2020]{Domingo-Enrich2020}
Domingo-Enrich, C., Jelassi, S., Mensch, A., Rotskoff, G.~M., and Bruna, J.
  (2020).
\newblock {A mean-field analysis of two-player zero-sum games}.
\newblock {\em arXiv preprint arXiv:2002.06277}.

\bibitem[Eckerli and Osterrieder, 2021]{eckerli2021generative}
Eckerli, F. and Osterrieder, J. (2021).
\newblock Generative {a}dversarial {n}etworks in finance: an overview.
\newblock {\em Available at SSRN 3864965}.

\bibitem[Efimov et~al., 2020]{efimov2020using}
Efimov, D., Xu, D., Kong, L., Nefedov, A., and Anandakrishnan, A. (2020).
\newblock Using generative adversarial networks to synthesize artificial
  financial datasets.
\newblock {\em arXiv preprint arXiv:2002.02271}.

\bibitem[Evans, 1983]{evans1983classical}
Evans, L.~C. (1983).
\newblock {Classical solutions of the Hamilton-Jacobi-Bellman equation for
  uniformly elliptic operators}.
\newblock {\em Transactions of the American Mathematical Society},
  275(1):245--255.

\bibitem[Evans and Souganidis, 1984]{evans1984differential}
Evans, L.~C. and Souganidis, P.~E. (1984).
\newblock {Differential games and representation formulas for solutions of
  Hamilton-Jacobi-Isaacs equations}.
\newblock {\em Indiana University Mathematics Journal}, 33(5):773--797.

\bibitem[Fatkullin and Vanden-Eijnden, 2004]{fatkullin2004computational}
Fatkullin, I. and Vanden-Eijnden, E. (2004).
\newblock {A computational strategy for multiscale systems with applications to
  Lorenz 96 model}.
\newblock {\em Journal of Computational Physics}, 200(2):605--638.

\bibitem[Fu et~al., 2019]{fu2019time}
Fu, R., Chen, J., Zeng, S., Zhuang, Y., and Sudjianto, A. (2019).
\newblock Time series simulation by conditional generative adversarial net.
\newblock {\em arXiv preprint arXiv:1904.11419}.

\bibitem[Gadat and Panloup, 2017]{gadat2017optimal}
Gadat, S. and Panloup, F. (2017).
\newblock {Optimal non-asymptotic bound of the Ruppert-Polyak averaging without
  strong convexity}.
\newblock {\em arXiv preprint arXiv:1709.03342}.

\bibitem[Goodfellow et~al., 2014]{goodfellow2014generative}
Goodfellow, I., Pouget-Abadie, J., Mirza, M., Xu, B., Warde-Farley, D., Ozair,
  S., Courville, A., and Bengio, Y. (2014).
\newblock Generative adversarial nets.
\newblock In {\em {Advances in Neural Information Processing Systems}}, pages
  2672--2680.

\bibitem[Gulrajani et~al., 2017]{gulrajani2017improved}
Gulrajani, I., Ahmed, F., Arjovsky, M., Dumoulin, V., and Courville, A. (2017).
\newblock {Improved training of {W}asserstein {GAN}s}.
\newblock {\em arXiv preprint arXiv:1704.00028}.

\bibitem[Hsieh et~al., 2019]{hsieh2019finding}
Hsieh, Y.-P., Liu, C., and Cevher, V. (2019).
\newblock Finding mixed {N}ash equilibria of generative adversarial networks.
\newblock In Chaudhuri, K. and Salakhutdinov, R., editors, {\em Proceedings of
  the 36th International Conference on Machine Learning}, volume~97 of {\em
  Proceedings of Machine Learning Research}, pages 2810--2819. PMLR.

\bibitem[Jacod and Shiryaev, 2013]{jacod2013limit}
Jacod, J. and Shiryaev, A. (2013).
\newblock {\em Limit {T}heorems for {S}tochastic {P}rocesses}, volume 288.
\newblock Springer Science \& Business Media.

\bibitem[Kamalaruban et~al., 2020]{kamalaruban2020robust}
Kamalaruban, P., Huang, Y.-T., Hsieh, Y.-P., Rolland, P., Shi, C., and Cevher,
  V. (2020).
\newblock {Robust reinforcement learning via adversarial training with langevin
  dynamics}.
\newblock {\em arXiv preprint arXiv:2002.06063}.

\bibitem[Krylov, 2014]{krylov2014dynamic}
Krylov, N. (2014).
\newblock {On the dynamic programming principle for uniformly nondegenerate
  stochastic differential games in domains and the Isaacs equations}.
\newblock {\em Probability Theory and Related Fields}, 158(3-4):751--783.

\bibitem[Kulharia et~al., 2017]{ghosh2016contextual}
Kulharia, V., Ghosh, A., Mukerjee, A., Namboodiri, V., and Bansal, M. (2017).
\newblock {Contextual RNN-GANs for abstract reasoning diagram generation}.
\newblock {\em Proceedings of the AAAI Conference on Artificial Intelligence},
  31(1).

\bibitem[Kushner et~al., 2001]{kushner2001numerical}
Kushner, H. J.~K., Kushner, H.~J., Dupuis, P.~G., and Dupuis, P. (2001).
\newblock {\em {Numerical Methods for Stochastic Control Problems in Continuous
  Time}}, volume~24.
\newblock Springer Science \& Business Media.

\bibitem[Ledig et~al., 2017]{ledig2016others}
Ledig, C., Theis, L., Husz{\'a}r, F., Caballero, J., Cunningham, A., Acosta,
  A., Aitken, A., Tejani, A., Totz, J., Wang, Z., et~al. (2017).
\newblock Photo-realistic single image super-resolution using a generative
  adversarial network.
\newblock In {\em Proceedings of the IEEE Conference on Computer Vision and
  Pattern Recognition}, pages 4681--4690.

\bibitem[Li et~al., 2020]{li2020generating}
Li, J., Wang, X., Lin, Y., Sinha, A., and Wellman, M. (2020).
\newblock {Generating realistic stock market order streams}.
\newblock {\em Proceedings of the AAAI Conference on Artificial Intelligence},
  34(01):727--734.

\bibitem[Luc et~al., 2016]{luc2016semantic}
Luc, P., Couprie, C., Chintala, S., and Verbeek, J. (2016).
\newblock Semantic segmentation using adversarial networks.
\newblock {\em arXiv preprint arXiv:1611.08408}.

\bibitem[Marti, 2020]{marti2020corrgan}
Marti, G. (2020).
\newblock Corr{G}an: sampling realistic financial correlation matrices using
  generative adversarial networks.
\newblock In {\em ICASSP 2020-2020 IEEE International Conference on Acoustics,
  Speech and Signal Processing (ICASSP)}, pages 8459--8463. IEEE.

\bibitem[Mescheder et~al., 2018]{Mescheder2018}
Mescheder, L., Geiger, A., and Nowozin, S. (2018).
\newblock Which training methods for {GAN}s do actually converge?
\newblock In Dy, J. and Krause, A., editors, {\em Proceedings of the 35th
  International Conference on Machine Learning}, volume~80 of {\em Proceedings
  of Machine Learning Research}, pages 3481--3490. PMLR.

\bibitem[Mirza and Osindero, 2014]{mirza2014conditional}
Mirza, M. and Osindero, S. (2014).
\newblock Conditional generative adversarial nets.
\newblock {\em arXiv preprint arXiv:1411.1784}.

\bibitem[Moulines and Bach, 2011]{moulines2011non}
Moulines, E. and Bach, F.~R. (2011).
\newblock Non-asymptotic analysis of stochastic approximation algorithms for
  machine learning.
\newblock In {\em Advances in Neural Information Processing Systems}, pages
  451--459.

\bibitem[Mounjid and Lehalle, 2019]{mounjid2019improving}
Mounjid, O. and Lehalle, C.-A. (2019).
\newblock Improving reinforcement learning algorithms: towards optimal learning
  rate policies.
\newblock {\em arXiv preprint arXiv:1911.02319}.

\bibitem[Ni et~al., 2020]{ni2020conditional}
Ni, H., Szpruch, L., Wiese, M., Liao, S., and Xiao, B. (2020).
\newblock Conditional {S}ig-{W}asserstein {GAN}s for time series generation.
\newblock {\em arXiv preprint arXiv:2006.05421}.

\bibitem[Pham, 2009]{pham2009continuous}
Pham, H. (2009).
\newblock {\em {Continuous-time Stochastic Control and Optimization with
  Financial Applications}}, volume~61.
\newblock Springer Science \& Business Media.

\bibitem[Pimentel, 2019]{pimentel2019regularity}
Pimentel, E.~A. (2019).
\newblock {Regularity theory for the Isaacs equation through approximation
  methods}.
\newblock In {\em Annales de l'Institut Henri Poincar{\'e} C, Analyse non
  {L}in{\'e}aire}, volume~36, pages 53--74. Elsevier.

\bibitem[Radford et~al., 2015]{radford2015unsupervised}
Radford, A., Metz, L., and Chintala, S. (2015).
\newblock Unsupervised representation learning with deep convolutional
  generative adversarial networks.
\newblock {\em arXiv preprint arXiv:1511.06434}.

\bibitem[Reed et~al., 2016]{reed2016generative}
Reed, S., Akata, Z., Yan, X., Logeswaran, L., Schiele, B., and Lee, H. (2016).
\newblock {Generative Adversarial Text to Image Synthesis}.
\newblock In {\em 33rd International Conference on Machine Learning}, pages
  1060--1069.

\bibitem[Salimans et~al., 2016]{Salimans2016}
Salimans, T., Goodfellow, I., Zaremba, W., Cheung, V., Radford, A., and Chen,
  X. (2016).
\newblock {Improved techniques for training GANs}.
\newblock In {\em Advances in Neural Information Processing Systems}, pages
  2234--2242.

\bibitem[Sion, 1958]{sion1958general}
Sion, M. (1958).
\newblock On general minimax theorems.
\newblock {\em {Pacific Journal of Mathematics}}, 8(1):171--176.

\bibitem[Sirbu, 2014a]{sirbu2014martingale}
Sirbu, M. (2014a).
\newblock {On martingale problems with continuous-time mixing and values of
  zero-sum games without the Isaacs condition}.
\newblock {\em SIAM Journal on Control and Optimization}, 52(5):2877--2890.

\bibitem[Sirbu, 2014b]{sirbu2014stochastic}
Sirbu, M. (2014b).
\newblock {Stochastic Perron's method and elementary strategies for zero-sum
  differential games}.
\newblock {\em SIAM Journal on Control and Optimization}, 52(3):1693--1711.

\bibitem[Storchan et~al., 2020]{storchan2020mas}
Storchan, V., Vyetrenko, S., and Balch, T. (2020).
\newblock {M{AS-GAN}: adversarial calibration of multi-agent market
  simulators.}

\bibitem[Takahashi et~al., 2019]{takahashi2019modeling}
Takahashi, S., Chen, Y., and Tanaka-Ishii, K. (2019).
\newblock Modeling financial time-series with generative adversarial networks.
\newblock {\em Physica A: Statistical Mechanics and its Applications},
  527:121261.

\bibitem[Von~Neumann, 1959]{von1959theory}
Von~Neumann, J. (1959).
\newblock On the theory of games of strategy.
\newblock {\em Contributions to the Theory of Games}, 4:13--42.

\bibitem[Vondrick et~al., 2016]{vondrick2016generating}
Vondrick, C., Pirsiavash, H., and Torralba, A. (2016).
\newblock Generating videos with scene dynamics.
\newblock In {\em Advances in Neural Information Processing Systems}, pages
  613--621.

\bibitem[Wang and Forsyth, 2008]{wang2008maximal}
Wang, J. and Forsyth, P.~A. (2008).
\newblock {Maximal use of central differencing for Hamilton--Jacobi--Bellman
  PDEs in finance}.
\newblock {\em SIAM Journal on Numerical Analysis}, 46(3):1580--1601.

\bibitem[Weinan et~al., 2005]{weinan2005analysis}
Weinan, E., Liu, D., and Vanden-Eijnden, E. (2005).
\newblock Analysis of multiscale methods for stochastic differential equations.
\newblock {\em Communications on Pure and Applied Mathematics},
  58(11):1544--1585.

\bibitem[Wiese et~al., 2019]{Wiese2019}
Wiese, M., Bai, L., Wood, B., Morgan, J.~P., and Buehler, H. (2019).
\newblock {Deep hedging: learning to simulate equity option markets}.
\newblock {\em arXiv preprint arXiv:1911.01700}.

\bibitem[Wiese et~al., 2020]{wiese2020quant}
Wiese, M., Knobloch, R., Korn, R., and Kretschmer, P. (2020).
\newblock Quant {GAN}s: deep generation of financial time series.
\newblock {\em Quantitative Finance}, 20(9):1419--1440.

\bibitem[Yeh et~al., 2016]{yeh2016semantic}
Yeh, R., Chen, C., Lim, T.~Y., Hasegawa-Johnson, M., and Do, M.~N. (2016).
\newblock Semantic image inpainting with perceptual and contextual losses.
\newblock {\em arXiv preprint arXiv:1607.07539}, 2(3).

\bibitem[Yoon et~al., 2019]{yoon2019time}
Yoon, J., Jarrett, D., and Van~der Schaar, M. (2019).
\newblock Time-series generative adversarial networks.
\newblock {\em Neural Information Processing Systems}.

\bibitem[Zhu et~al., 2016]{zhu2016generative}
Zhu, J.-Y., Kr{\"a}henb{\"u}hl, P., Shechtman, E., and Efros, A.~A. (2016).
\newblock Generative visual manipulation on the natural image manifold.
\newblock In {\em European Conference on Computer Vision}, pages 597--613.
  Springer.

\bibitem[Zumbach and Lynch, 2001]{zumbach2001heterogeneous}
Zumbach, G. and Lynch, P. (2001).
\newblock Heterogeneous volatility cascade in financial markets.
\newblock {\em Physica A: Statistical Mechanics and its Applications},
  298(3-4):521--529.

\end{thebibliography}

\end{document}